\documentclass{article}

\usepackage{amsmath}
\usepackage{amsfonts}
\usepackage{dsfont}
\usepackage{amssymb}
\usepackage{amsmath}
\usepackage{amsthm}
\usepackage{dsfont}
\usepackage{xcolor}
\usepackage{url}
\usepackage{comment}
\usepackage{natbib}
\usepackage{float}
\usepackage{caption}
\usepackage{subcaption}
\usepackage{graphicx}
\usepackage{mathabx}
\usepackage{ulem}
\usepackage[margin=1in]{geometry}

\newcommand{\norm}[1]{\left\lvert\left\lvert#1\right\rvert\right\rvert}

\newcommand{\R}{\mathbb{R}}

\newcommand{\indicator}{\mathds{1}}
\newcommand{\iid}{\overset{\text{iid}}{\small \sim}}
\newcommand{\Lip}{\text{Lip}}

\newcommand{\Hh}{\mathcal{H}}
\newcommand{\Mm}{\mathcal{M}}

\newcommand{\Gg}{\mathcal{G}_{\textup{NN}}}
\newcommand{\E}{\mathbb{E}}

\newcommand{\argmin}{\operatornamewithlimits{\text{argmin}}}

\newcommand{\inj}{\text{inj}}

\newtheorem{theorem}{Theorem}
\newtheorem{proposition}{Proposition}
\newtheorem*{theorem*}{Theorem}
\newtheorem*{proposition*}{Proposition}

\newtheorem{lemma}{Lemma}
\newtheorem{lemma*}{Lemma}

\newtheorem*{corollary*}{Corollary}
\newtheorem{definition}{Definition}
\newtheorem*{definition*}{Definition}
\newtheorem{assumption}{Assumption}




\usepackage[utf8]{inputenc} 
\usepackage[T1]{fontenc}    
\usepackage{hyperref}       
\usepackage{url}            
\usepackage{booktabs}       
\usepackage{amsfonts}       
\usepackage{nicefrac}       
\usepackage{microtype}      
\usepackage{xcolor}         

\title{On Deep Generative Models for Approximation and Estimation of Distributions on Manifolds }

%

\author{Biraj Dahal$^*$, Alex Havrilla$^*$, Minshuo Chen, Tuo Zhao, Wenjing Liao}

\begin{document}

\maketitle

\def\thefootnote{*}\footnotetext{These authors contributed equally to this work. Biraj Dahal, Alex Havrilla, Wenjing Liao are affiliated with the School of Mathematics at Georgia Institute of Technology. Minshuo Chen is affiliated with Department of Electrical and Computer Engineering at Princeton University. Tuo Zhao is affiliated with the School of Industrial and Systems Engineering (ISyE) at Georgia Institute of Technology. Email: \{bdahal,ahavrilla3,tourzhao,wliao60\}@gatech.edu,  mc0750@princeton.edu. This research is partially supported by NSF DMS 2012652 and NSF CAREER 2145167. }\def\thefootnote{\arabic{footnote}}

\begin{abstract}
  Generative networks have experienced great empirical successes in distribution learning. Many existing experiments have demonstrated that generative networks can generate high-dimensional complex data from a low-dimensional easy-to-sample distribution. However, this phenomenon can not be justified by existing theories. The widely held manifold hypothesis speculates that real-world data sets, such as natural images and signals, exhibit low-dimensional geometric structures. In this paper, we take such low-dimensional data structures into consideration by assuming that data distributions are supported on a low-dimensional manifold. We prove statistical guarantees of generative networks under the Wasserstein-1 loss. We show that the Wasserstein-1 loss converges to zero at a fast rate depending on the intrinsic dimension instead of the ambient data dimension. Our theory leverages the low-dimensional geometric structures in data sets and justifies the practical power of generative networks. We require no smoothness assumptions on the data distribution which is desirable in practice.
\end{abstract}

\section{Introduction}
Deep generative models, such as generative adversarial networks (GANs) \citep{goodfellow2014generative,arjovsky2017wasserstein} and variational autoencoder \citep{kingma2013auto,mohamed2014stochastic}, utilize neural networks to generate new samples which follow the same distribution as the training data.
They have been successful in many applications including producing photorealistic images, improving astronomical images, and modding video games \citep{reed2016generative, ledig2017photo, schawinski2017generative, brock2018large, volz2018evolving,radford2015unsupervised,salimans2016improved}.

To estimate a data distribution $Q$, generative models solve the following optimization problem
\begin{align}\label{eq:population}
 \textstyle \min_{g_\theta \in \mathcal{G}}~ {\tt discrepancy}((g_{\theta})_{\sharp}\rho, Q),
\end{align}
where $\rho$ is an easy-to-sample distribution, $\mathcal{G}$ is a class of generating functions, ${\tt discrepancy}$ is some distance function between distributions, and $(g_\theta)_\sharp \rho$ denotes the pushforward measure of $\rho$ under $g_\theta$. In particular, when we obtain a sample $z$ from $\rho$, we let $g_\theta(z)$ be the generated sample, whose distribution follows $(g_\theta)_\sharp \rho$.

{ There are many choices of the {\tt discrepancy} function in literature among which Wasserstein distance attracts much attention. The so-called Wasserstein generative models \citep{arjovsky2017wasserstein} consider the Wasserstein-1 distance defined as
\begin{align}\label{eq:wasserstein-1}
 \textstyle   W_1(\mu,\nu) = \sup\limits_{f \in {\rm Lip}_1(\R^D)} \mathbb{E}_{X \sim \mu} [f(X)] - \mathbb{E}_{Y \sim \nu} [f(Y)],
\end{align}
where $\mu, \nu$ are two distributions and ${\rm Lip}_1(\R^D)$ consists of $1$-Lipschitz functions on $\R^D$. The formulation in \eqref{eq:wasserstein-1} is known as the Kantorovich-Rubinstein dual form of Wasserstein-1 distance and can be viewed as an integral probability metric \citep{IPMMuller}.
}

{In deep generative models, the function class $\mathcal{G}$ is often parameterized by a deep neural network class $\mathcal{G}_{\rm NN}$. Functions in $\mathcal{G}_{\rm NN}$ can be written in the following compositional form
\begin{align}\label{nnform}
 \textstyle   g_{\theta}(x) = W_{L}\cdot \sigma(W_{L-1}\ldots \sigma(W_{1}x + b_1) + \ldots + b_{L-1}) +b_L,
\end{align} 
where the $W_i$'s and $b_i$'s are weight matrices and intercepts/biases of corresponding dimensions, respectively, and $\sigma$ is ReLU activation applied entry-wise: $\sigma(a) = \max(a, 0)$. Here $\theta=\{W_i,b_i\}_{i=1}^L$ denotes the set of parameters.
}

{Solving \eqref{eq:population} is prohibitive in practice, as we only have access to a finite collection of samples, $X_1, \dots, X_n \iid Q$. Replacing $Q$ by its empirical counterpart $Q_n = \frac{1}{n}\sum_{i=1}^n\delta_{X_i}$, we end up with
\begin{align}\label{emprisk}
  \textstyle  \hat{g}_n = \argmin\limits_{g_{\theta} \in \Gg}W_1((g_{\theta})_{\sharp}\rho, Q_n).
\end{align}}

Note that \eqref{emprisk} is also known as  training deep generative models under the Wasserstein loss in existing deep learning literature \citep{frogner2015learning,genevay2018learning}. It has exhibited remarkable ability in learning complex distributions in high dimensions, even though existing theories cannot fully explain such empirical successes. In literature, statistical theories of deep generative models have been studied in \citet{arora2017generalization, zhang2017discrimination, jiang2018computation,bai2018approximability, liang2017well, liang2018well,uppal2019nonparametric,CLZZ,lu2020universal,Block2021ANE,luise2020generalization,schreuder2021statistical}. Due to the well-known curse of dimensionality, the sample complexity in \citet{liang2017well,uppal2019nonparametric,CLZZ,lu2020universal} grows exponentially with respect to underlying the data dimension. 
For example, the CIFAR-10 dataset consists of $32 \times 32$ RGB images. Roughly speaking, to learn this data distribution with accuracy $\epsilon$, the sample size is required to be $\epsilon^{-D}$ where $D = 32\times 32 \times 3 = 3072$ is the data dimension. Setting $\epsilon = 0.1$ requires $10^{3072}$ samples. However, GANs have been successful with $60,000$ training samples \citep{goodfellow2014generative}. 

{A common belief to explain the aforementioned gap between theory and practice is that practical data sets exhibit low-dimensional intrinsic structures.} 
For example, many
image patches are generated from the same pattern by some transformations, such as
rotation, translation, and skeleton. Such a generating mechanism induces a small number of intrinsic
parameters. It is plausible to model these data as samples near a low dimensional
manifold \citep{tenenbaum2000global,roweis2000nonlinear,peyre2009manifold,coifman2005geometric}.  

To justify that deep generative models can adapt to low-dimensional structures in data sets, this paper focuses (from a theoretical perspective) on the following fundamental questions of both distribution approximation and estimation:

\begin{description}
\item[Q1:]  {\it Can deep generative models approximate a distribution on a low-dimensional manifold by representing it as the pushforward measure of a low-dimensional easy-to-sample distribution?}

\item[Q2:] {\it If the representation in \textbf{Q1} can be learned by deep generative models, what is the statistical rate of convergence in terms of the sample size $n$?}
\end{description}

This paper provides positive answers to these questions.
We consider data distributions supported on a $d$-dimensional compact Riemannian manifold $\mathcal{M}$ isometrically embedded in ${\mathbb{R}}^{D}$. 
The easy-to-sample distribution $\rho$ is uniform on $(0,1)^{d+1}$. To answer {\bf Q1}, our Theorem \ref{thm:approx} proves that deep generative models are capable of approximating a transportation map which maps the low-dimensional uniform distribution $\rho$ to a large class of data distributions on $\mathcal{M}$. To answer {\bf Q2}, our Theorem \ref{thm:stat} shows that the Wasserstein-1 loss in distribution learning 
converges to zero \textit{at a fast rate depending on the intrinsic dimension $d$} instead of the data dimension $D$.
In particular we prove that
    \begin{align*}
        \E W_1((\hat{g}_n)_\sharp \rho, Q) \leq C  n^{-\frac{1}{d+\delta}}
    \end{align*} for all $\delta > 0$ where $C$ is a constant independent of $n$ and $D$.

Our proof proceeds by constructing an  oracle transportation map $g^{*}$ such that $g^{*}_\sharp \rho = Q$.  This construction crucially relies on a cover of the manifold by geodesic balls, such that the data distribution $Q$ is decomposed as the sum of local distributions supported on these geodesic balls. Each local distribution is then transported onto lower dimensional sets in $\R^d$ from which we can apply optimal transport theory. We then argue that the oracle $g^*$ can be efficiently approximated by deep neural networks.

We make minimal assumptions on the network, only requiring that $g_\theta$ belongs to a neural network class (labelled $\Gg$) with size depending on some accuracy $\epsilon$. Further, we make minimal assumptions on the data distribution $Q$, only requiring that it admits a density that is upper and lower bounded. Standard technical assumptions are made on the manifold $\Mm$.

\section{Preliminaries}

We establish some notation and preliminaries on Riemannian geometry and optimal transport theory before presenting our proof.

\textbf{Notation.} For $x \in \R^d$, $\|x\|$ is the Euclidean norm, unless otherwise specified. $B_X(0,r)$ is the open ball of radius $r$ in the metric space $X$. If unspecified, we denote $B(0,r) = B_{\R^d}(0,r)$. For a function $f: \R^d \to \R^d$ and $A \subseteq \R^d$, $f^{-1}[A]$ denotes the pre-image of $A$ under $f$. $\partial$  denotes the differential operator. For $0 < \alpha \leq 1$, we denote by $C^{\alpha}$ the class of  H\"older continuous functions with H\"older index $\alpha$. $\| \cdot \|_\infty$ denotes the $\infty$ norm of a function, vector, or matrix (considered as a vector). For any positive integer $N \in \mathbb{N}$, we denote by $[N]$ the set $\{1, 2, \dots, N \}$.

\subsection{Riemannian Geometry}
Let $(\Mm, g)$ be a $d$-dimensional compact Riemannian manifold isometrically embedded in $\R^D$. Roughly speaking a manifold is a set which is locally Euclidean i.e. there exists a function $\phi$ continuously mapping a small patch on $\Mm$ into Euclidean space. This can be formalized with \textit{open sets} and \textit{charts}. At each point $x \in \Mm$ we have a \textit{tangent space} $T_x\Mm$ which, for a manifold embedded in $\R^D$, is the $d$-dimensional plane tangent to the manifold at $x$. We say $\Mm$ is Riemannian because it is equipped with a smooth metric $g_x: T_x\Mm  \times T_x\Mm \to \R$ (where $x$ is a basepoint) which can be thought of as a local inner product. We can define the Riemannian distance $d_\Mm : \Mm \times \Mm \rightarrow \R$ on $\Mm$ as
\begin{align*}
    d_{\Mm}(x,y) = \inf\{L(\gamma) | \gamma \textup{ is a } C^1(\Mm) \textup{ curve such that } \gamma(0) = x, \gamma(1) = y\},
\end{align*} i.e. the length of the shortest path or \textit{geodesic} connecting $x$ and $y$. 
An \textit{isometric embedding} of the $d$-dimensional $\Mm$ in $\R^D$ is an embedding that preserves the Riemannian metric of $\Mm$, including the Riemannian distance. For more rigorous statements, see the classic reference \citet{flaherty2013riemannian}.

 We next define the exponential map at a point $x \in \Mm$ going from the tangent space to the manifold.

\begin{definition}[Exponential map]
Let $x \in \Mm$. For all tangent vectors $v \in T_x \Mm$, there is a unique geodesic $\gamma$ that starts at $x$ with initial tangent vector $v$, i.e. $\gamma(0) = x$ and $\gamma'(0) = v$. The exponential map centered at $x$ is given by $\exp_x(v) = \gamma(1)$, for all $v \in T_x \Mm$.
\end{definition} 

The exponential map takes a vector $v$ on the tangent space $T_x \Mm$ as input. The output, $\exp_x(v)$, is the point on the manifold obtained by travelling along a (unit speed) geodesic curve that starts at $x$ and has initial direction $v$ (see Figure \ref{fig:exponential_map} for an example).

It is well known that for all $x \in \Mm$, there exists a radius $\delta$ such that the exponential map restricted to $B_{T_x\Mm}(0, \delta)$ is a diffeomorphism onto its image, i.e. it is a smooth map with smooth inverse. As the sufficiently small $\delta$-ball in the tangent space may vary for each $x \in \Mm$, we define the injectivity radius of $\Mm$ as the minimum $\delta$ over all $x \in \Mm$. 

\begin{definition}[Injectivity radius]
For all $x \in \Mm$, we define the injectivity radius at a point $\inj_{\Mm}(x) = \sup\{\delta > 0| \exp_{x}: B_{T_x \Mm}(0, \delta) \subseteq T_x \Mm \to \Mm \textup{ is a diffeomorphism}\}$. Then the injectivity radius of $\Mm$ is defined as
\begin{align*}
    \inj(\Mm) = \inf\{\inj_{\Mm}(x) | x \in \Mm\}.
\end{align*}
\end{definition} For any $x \in \Mm$, the exponential map restricted to a ball of radius $\inj_{\Mm}$ in $T_x\Mm$ is a well-defined diffeomorphism. Within the injectivity radius, the exponential map is a diffeomorphism between the tangent space and a patch of $\Mm$, with $\exp^{-1}$ denoting the inverse. Controlling a quantity called reach allows us to lower bound the manifold's injectivity radius.

\begin{definition}[Reach\citep{Federer}]
    The reach $\tau$ of a manifold $\Mm$ is defined as the quantity
    \begin{align*}
        \tau = \inf\{r > 0 : \exists x\neq y \in \Mm, v \in \R^D \textup{ such that } r =\|x- v\| = \|y- v\| = \inf_{z \in \Mm} \| z - v \| \}.
    \end{align*}
\end{definition}

Intuitively, if the distance of a point $x$ to $\Mm$ is smaller than the reach, then there is a unique point in $\Mm$ that is closest to $x$. However, if the distance between $x$ and $\Mm$ is larger than the reach, then there will no longer be a unique closest point to $x$ in $\Mm$. For example, the reach of a sphere is its radius. A manifold with large and small reach is illustrated in Figure \ref{fig:reach}. The reach gives us control over the injectivity radius $\inj({\Mm})$; in particular, we know $\inj({\Mm}) \geq \pi\tau$ (see \cite{AL} for proof). 

\begin{figure}
    \begin{minipage}[c]{0.5\textwidth}
    \centering
        \includegraphics[width=0.63\textwidth]{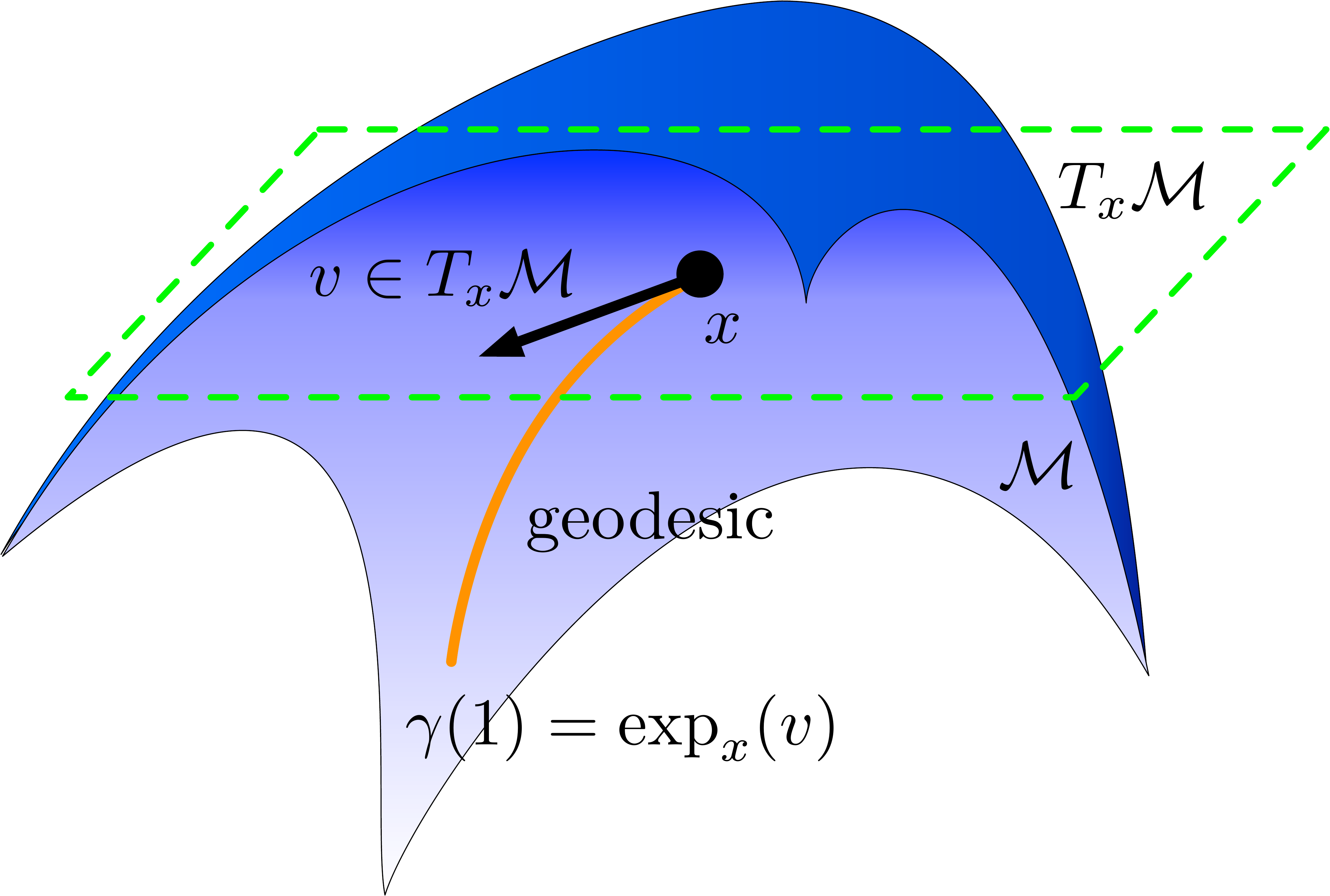}
        \caption{Exponential map on $\Mm$.}
        \label{fig:exponential_map}
    \end{minipage}
    \hfill
    \begin{minipage}[c]{0.52\textwidth}
        \centering
        \includegraphics[width=0.6\textwidth]{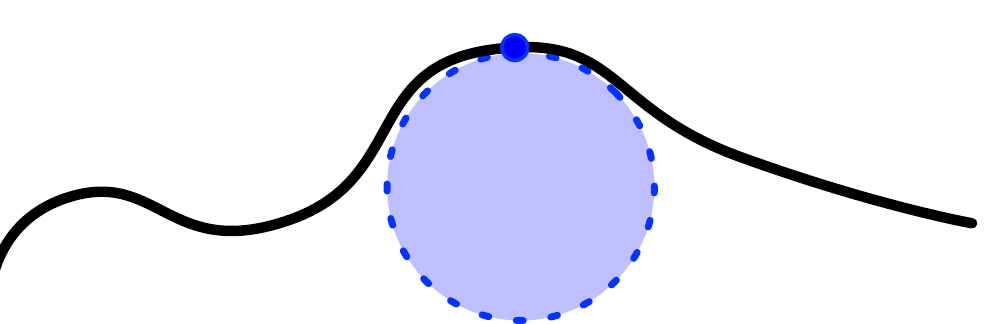}
        \\
        \vspace{0.2cm}
        \includegraphics[width=0.6\textwidth]{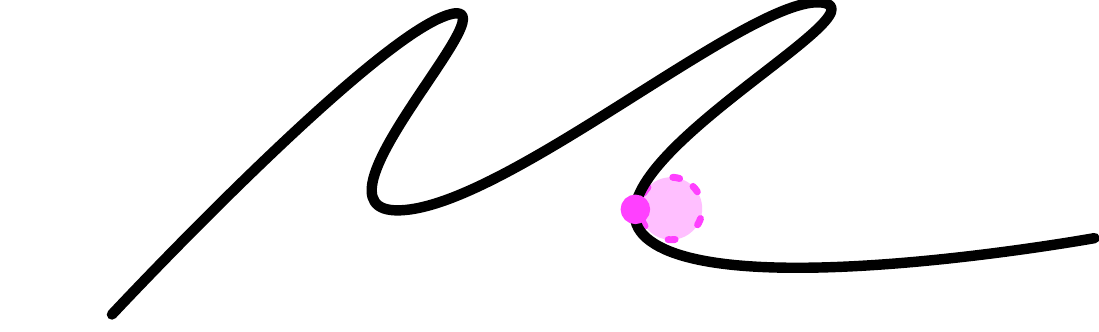}
        \caption{Manifolds with large and small reach.}
        \label{fig:reach}
    \end{minipage}
\end{figure}

\subsection{Optimal Transport Theory}
Let $\mu, \nu$ be absolutely continuous measures on sets $X, Y \subseteq \R^d$. We say a function $f: X \to Y$ \textit{transports} $\mu$ onto $\nu$ if $f_\sharp \mu = \nu$. In words, for all measurable sets $A$ we have 
\begin{align*}
    \nu(A) = f_\sharp \mu(A) = \mu \left(f^{-1}(A) \right),
\end{align*}
where $f^{-1}(A)$ is the pre-image of $A$ under $f$.
    Optimal transport studies the maps taking source measures $\mu$ on $X$ to target measures $\nu$ on $Y$ which also minimize a cost $c:X \times Y \to \R_{\geq 0}$ among all such transport maps. However the results are largely restricted to transport between measures on the same dimensional Euclidean space. In this paper, we will make use of the main theorem in \cite{caffarelli1992regularity}, in the form presented in \cite{villani2008optimal}.  

\begin{proposition}
\label{proposition:OT}
Let $c(x,y) = \|x-y\|^2$ in $\R^d \times \R^d$ and let $\Omega_1, \Omega_2$ be nonempty, connected, bounded, open subsets of $\R^d$. Let $f_1, f_2$ be probability densities on $\Omega_1$ and $\Omega_2$ respectively, with $f_1, f_2$ bounded from above and below. Assume further that $\Omega_2$ is convex. Then there exists a unique optimal transport map $T : \Omega_1 \rightarrow \Omega_2$ for the associated probability measures $\mu(dx) = f_1(x) \, dx$ and $\nu(dy) = f_2(y) \, dy$, and the cost $c$. Furthermore, we have that $T \in C^{\alpha}(\Omega_1)$ for some $\alpha \in (0, 1)$. 

\end{proposition}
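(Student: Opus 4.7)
The statement is a classical result from \cite{caffarelli1992regularity} in the form reproduced by \cite{villani2008optimal}, so the plan is to recall the standard two-step argument rather than to prove it from scratch; I will sketch what each step requires and where the delicate issue lies.

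First I would invoke Brenier's theorem (for quadratic cost on $\R^d$) to obtain existence and uniqueness of the optimal transport map $T$. Concretely, one shows that among all maps pushing $\mu$ forward to $\nu$, any minimizer of $\int \|x - T(x)\|^2 \, d\mu(x)$ must equal $\nabla \psi$ almost everywhere for some convex potential $\psi : \Omega_1 \to \R$. The hypotheses that $\Omega_1, \Omega_2$ are bounded and that $f_1, f_2$ are bounded densities supply the integrability needed for Brenier's argument, and the strict convexity of the quadratic cost yields uniqueness $\mu$-almost everywhere. At this stage $T = \nabla \psi$ is only defined $\mu$-a.e.\ and is just the gradient of a convex function, so a priori it is no better than $BV_{\text{loc}}$.

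Next I would upgrade this to H\"older regularity by appealing to Caffarelli's regularity theory for the Monge--Amp\`ere equation. The transport condition $(\nabla \psi)_\sharp \mu = \nu$ is equivalent, in the Alexandrov/Brenier sense, to the Monge--Amp\`ere equation
\begin{equation*}
    \det(D^2 \psi(x)) \, f_2(\nabla \psi(x)) = f_1(x), \qquad x \in \Omega_1.
\end{equation*}
Given the two-sided bounds on $f_1, f_2$, Caffarelli's results provide interior $C^{1,\alpha}$ estimates on $\psi$ for some $\alpha \in (0,1)$, which translates directly to $T = \nabla \psi \in C^{\alpha}(\Omega_1)$ as claimed. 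Connectedness and openness of $\Omega_1$ are used to ensure the regularity propagates throughout the domain.

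The hard part, conceptually, is the convexity hypothesis on the target $\Omega_2$. Without it, one can construct explicit examples (two disjoint balls as target, for instance) in which the Brenier potential fails to be even $C^1$, so no H\"older continuity of $T$ can be expected. Convexity of $\Omega_2$ is precisely what rules out such pathologies: it prevents the subgradient of the dual potential from ``jumping'' across a gap in the target, which is the mechanism behind Caffarelli's interior regularity proof (via a careful analysis of sections of solutions to Monge--Amp\`ere and an iteration-of-scales argument). I would therefore simply cite the Caffarelli--Villani statement for the regularity step rather than reprove it, since the machinery (sections, normalization, Alexandrov estimates) is substantial and well-documented in the references the authors already point to.
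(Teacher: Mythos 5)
Your proposal is correct and matches the paper's treatment: the authors do not prove Proposition \ref{proposition:OT} at all, but cite it as the main theorem of \cite{caffarelli1992regularity} in the form given in \cite{villani2008optimal}, exactly the Brenier-plus-Caffarelli route you sketch (existence and uniqueness via the Brenier potential, H\"older regularity of $T=\nabla\psi$ via Caffarelli's Monge--Amp\`ere theory under two-sided density bounds and convexity of the target). Your remarks on where convexity of $\Omega_2$ enters are accurate and consistent with the cited sources, so deferring to the citation for the regularity machinery is precisely what the paper does.
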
 

This proposition allows to produce H\"older transport maps which can be further approximated with neural networks with size depending on a given accuracy.

To connect optimal transport and Riemannian manifolds, we first define the \textit{volume measure} on a manifold $\Mm$ and establish integration on $\Mm$. 

\begin{definition}[Volume measure]
	Let $\Mm$ be a compact $d$-dimensional Riemannian manifold. We define
	the volume measure $\mu_{\Mm}$ on $\Mm$ as the restriction of the $d$-dimensional Hausdorff measure $\mathcal{H}^d$.
\end{definition} 

A definition for the restriction of the Hausdorff measure can be found in \cite{Federer}.

We say that the distribution $Q$ has density $q$ if the Radon-Nikodym derivative of $Q$ with respect to $\mu_{\Mm}$ is $q$.
According to \cite{EG}), for any continuous function $f : \Mm \to \R$ supported within the image of the ball $B_{T_x \Mm}(0,\epsilon)$ under the exponential map for $\epsilon <\inj(\Mm)$, we have
\begin{align}\label{localdensity}
\int f \, d	Q = \int (fq) \, d	\mu_{\Mm} = \int_{B_{T_x\Mm}(0,\epsilon)} (fq) \circ \exp_x(v)\sqrt{\det g_{ij}^x(v)} \, dv.
\end{align}  Here $g_{ij}^x(v) = \langle \partial \exp_x(v)[e_i], \partial \exp_x(v)[e_j]\rangle$ with $(e_1,...,e_d)$ an orthonormal basis of $T_x\Mm$.

\section{Main Results}
We will present our main results in this section, including an approximation theory for a large class of distributions on a Riemannian manifold (Theorem \ref{thm:approx}), and a statistical estimation theory of deep generative networks for distribution learning (Theorem \ref{thm:stat}).

We make some regularity assumptions on a manifold $\Mm$ and assume the target data distribution $Q$ is supported on $\Mm$. The easy-to-sample distribution $\rho$ is taken to be uniform on $ (0,1)^{d+1}$.

\begin{assumption}\label{assumptionmanifold} $\Mm$ is a $d$-dimensional compact Riemannian manifold isometrically embedded in ambient dimension $\R^D$. Via compactness, $\Mm$ is bounded: there exists $M > 0$ such that $\|x\|_\infty \leq M$ ,  $\forall x \in \Mm$. Further suppose $\Mm$ has a positive reach $\tau > 0$.
\end{assumption}

\begin{assumption}\label{assumptiondistribution}  $Q$ is supported on $\Mm$ and has a density $q$ with respect to the volume measure on $\Mm$. Further we assume boundedness of $q$ i.e. there exists some constants $c, C > 0$ such that $c \leq q \leq C$.
\end{assumption}

To justify the representation power of feedforward ReLU networks for learning the target distribution $Q$, we explicitly construct a neural network generator class, such that a neural network function in this generator class can pushfoward $\rho$ to a good approximation of $Q$.

Consider the following generator class $\Gg$
\begin{align*}
\Gg(L,p,\kappa) = \{&g=[g_1,...,g_{D}]: \R^{d + 1} \to \R^{D} | g_j \textup{ in form } (\ref{nnform}) \text{ with at most } L \textup{ layers} \\
& \textup{ and max width } p, \text{ while } \norm{W_i}_{\infty} \leq \kappa, \norm{b_i}_{\infty} \leq \kappa \textup{ for all } i \in [L], j \in [D] \},
\end{align*} 
 
where $\|\cdot\|_{\infty}$ is the maximum magnitude in a matrix or vector. The width of a neural network is the largest dimension (i.e. number of rows/columns) among the $W_i$'s and the $b_i$'s.

\begin{theorem}[Approximation Power of Deep Generative Models] \label{thm:approx}
    Suppose $\Mm$ and $Q$ satisfy Assumptions \ref{assumptionmanifold} and \ref{assumptiondistribution} respectively.
    The easy-to-sample distribution $\rho$ is taken to be uniform on $ (0,1)^{d+1}$. Then there exists a constant $0 < \alpha < 1$
    (independent of $D$) such that for any $0 < \epsilon < 1$, there exists a $g_\theta \in \Gg(L, p, \kappa)$ with parameters
\begin{align*}
    L = &O\left(\log\left(\frac{1}{\epsilon}\right)\right),\hspace{0.1in} p = O\left(D\epsilon^{-\frac{d}{\alpha}}\right), \hspace{0.1in}  \kappa = M
\end{align*}
that satisfies
    \begin{align*}
        W_1((g_{\theta})_\sharp \rho, Q) < \epsilon.
    \end{align*}
\end{theorem}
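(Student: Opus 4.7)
The plan is to proceed in two stages: first build an oracle transport map $g^{*}:(0,1)^{d+1}\to\R^{D}$ that satisfies $g^{*}_{\sharp}\rho=Q$ exactly, and then approximate $g^{*}$ in sup norm by a ReLU network of the claimed size. For the oracle, I would use compactness of $\Mm$ to choose a finite cover by geodesic balls $\{B_{\Mm}(x_k,r)\}_{k=1}^{N}$ with $r<\tfrac{1}{2}\inj(\Mm)$, then pass to a measurable partition $\{A_k\}_{k=1}^{N}$ of $\Mm$ subordinate to this cover. Set $\alpha_k=Q(A_k)$ and $Q_k=Q|_{A_k}/\alpha_k$ so that $Q=\sum_{k}\alpha_k Q_k$. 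Since each $\exp_{x_k}$ is a diffeomorphism on $B_{T_{x_k}\Mm}(0,r)$, pulling $Q_k$ back gives a measure $\widetilde{Q}_k$ on a bounded domain $\Omega_k\subseteq\R^{d}$ whose Lebesgue density equals $(q\circ\exp_{x_k})\sqrt{\det g_{ij}^{x_k}}$ by \eqref{localdensity}; this density is bounded above and below thanks to Assumption \ref{assumptiondistribution} together with smoothness of the metric inside the injectivity radius. After slightly enlarging $\Omega_k$ to a convex open set, Proposition \ref{proposition:OT} produces a H\"older transport $T_k\in C^{\alpha}$ mapping the uniform measure on $(0,1)^{d}$ onto $\widetilde{Q}_k$. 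I would then exploit the extra coordinate $t\in(0,1)$ as a categorical selector: split $(0,1)$ into intervals $I_k$ of length $\alpha_k$ and define
\begin{equation*}
g^{*}(t,y)=\exp_{x_k}\bigl(T_k(y)\bigr)\quad\text{when } t\in I_k,\; y\in(0,1)^{d}.
\end{equation*}
A direct change of variables shows $g^{*}_{\sharp}\rho=Q$.

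For the second stage, each branch $\exp_{x_k}\circ T_k:(0,1)^{d}\to\R^{D}$ is H\"older with exponent $\alpha$ coordinate by coordinate, because $T_k$ is $C^{\alpha}$ and $\exp_{x_k}$ is smooth on a neighborhood of its domain. Standard Yarotsky-type ReLU approximation then yields, per scalar output, a subnetwork of depth $O(\log(1/\epsilon))$ and width $O(\epsilon^{-d/\alpha})$ with sup-norm error at most $\epsilon/(4MN)$ on $(0,1)^{d}$; parallelizing over the $D$ ambient coordinates gives width $O(D\epsilon^{-d/\alpha})$ per branch. To combine branches I would build piecewise-linear indicators $\chi_k(t)$ of $I_k$ from a constant number of ReLUs in $t$ and gate each branch by its indicator via the standard ReLU ``approximate product'' gadget, which uses depth $O(\log(1/\epsilon))$ and width independent of $D$. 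Summing gated branches and then clipping to $[-M,M]^{D}$ via $M-\sigma\bigl(M-\sigma(\cdot+M)\bigr)$ produces a network $g_{\theta}\in\Gg(L,p,\kappa)$ with $L=O(\log(1/\epsilon))$, $p=O(D\epsilon^{-d/\alpha})$, and $\kappa=M$.

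Finally, I would convert sup-norm accuracy into Wasserstein-1 loss via the coupling that shares the input $z\sim\rho$: by the Kantorovich--Rubinstein form \eqref{eq:wasserstein-1},
\begin{equation*}
W_1\bigl((g_{\theta})_{\sharp}\rho,Q\bigr)=W_1\bigl((g_{\theta})_{\sharp}\rho,g^{*}_{\sharp}\rho\bigr)\le\mathbb{E}_{z\sim\rho}\|g_{\theta}(z)-g^{*}(z)\|,
\end{equation*}
and decompose the right-hand side into a within-cell term, controlled by the per-branch Yarotsky bound, and a boundary-strip term coming from the discontinuity of $g^{*}$ across the gate boundaries, which costs at most $(\text{strip width})\cdot 2M\sqrt{D}$ and is absorbed by choosing strip widths of order $\epsilon/(MN)$. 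The main obstacle I anticipate is harmonizing the gating with the H\"older approximation: the multiplicative ReLU gates, the indicator selectors in $t$, and the per-branch Yarotsky networks must all be assembled with matched depth $O(\log(1/\epsilon))$ and combined width $O(D\epsilon^{-d/\alpha})$ without incurring an $N$-dependent blowup, and the discontinuity of $g^{*}$ across gate boundaries must be handled so that its contribution to the Wasserstein loss is no larger than the within-cell approximation error.
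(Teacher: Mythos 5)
Your overall architecture (oracle transport built chart-by-chart via the exponential map, Caffarelli regularity for H\"older local transports, a selector coordinate gating the branches through approximate indicators and products, and a coupling bound $W_1\le\E_{z\sim\rho}\|g_\theta(z)-g^*(z)\|$) mirrors the paper's proof. However, there is a genuine gap at the heart of your oracle construction: you pass to a \emph{disjoint} measurable partition $\{A_k\}$ subordinate to the geodesic-ball cover and then invoke Proposition \ref{proposition:OT} ``after slightly enlarging $\Omega_k$ to a convex open set.'' This step fails as stated. Caffarelli's theorem, in the form of Proposition \ref{proposition:OT}, requires the \emph{target} domain $\Omega_2$ to be convex (and connected, open) \emph{and} the target density to be bounded above and below on it. If you enlarge $\Omega_k=\exp_{x_k}^{-1}(A_k)$ to its convex hull (or any convex superset), the pulled-back density of $\widetilde{Q}_k$ vanishes identically on the added region, so the lower bound is violated and no H\"older regularity of the optimal map is guaranteed; indeed, transporting onto a non-convex support is exactly the situation where optimal maps become discontinuous. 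Moreover, a cell of an arbitrary measurable partition need not even be open or connected in tangent coordinates. The paper avoids this precisely by \emph{not} partitioning: it keeps the overlapping cover $\{U_j\}$, where each $U_j=\exp_{c_j}(B_{T_{c_j}\Mm}(0,\pi\tau/2))$ pulls back to an exact Euclidean ball (convex, with density bounded above and below), and it reweights the overlaps via the factor $1/K(x)$ (with $K(x)$ the number of balls containing $x$) so that the reweighted local pieces $\overline{Q}_j$ still sum to $Q$. The paper's appendix sketches the partition-based (Voronoi) route you are implicitly following and explicitly acknowledges that straightening a non-convex cell into a ball by a bi-Lipschitz map is a nontrivial, still-incomplete technical step; so your one-line convexification cannot be waved through.

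A secondary issue: you invoke sup-norm Yarotsky-type approximation of the H\"older branches with depth $O(\log(1/\epsilon))$, width $O(\epsilon^{-d/\alpha})$, \emph{and} weights bounded by the fixed constant $M$. The class $\Gg(L,p,\kappa)$ fixes $\kappa=M$ independent of $\epsilon$, and standard $L^\infty$ constructions (and the sharp indicator/gating gadgets) require weights growing like $1/\epsilon$ unless extra depth is spent multiplying bounded factors. The paper deliberately works with $L^1$ approximation error (Lemma \ref{lemma:main}, with the $M\times\cdots\times M\times R$ trick for the indicator slopes) exactly to keep all weights bounded by $M$; your per-branch sup-norm plus boundary-strip argument can likely be repaired along the same lines, but as written it does not respect the stated weight constraint.
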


Theorem \ref{thm:approx} demonstrates the representation power of deep neural networks for distributions $Q$ on $\Mm$, which answers Question {\bf Q1}. For a given accuracy $\epsilon$, there exists a neural network $g_{\theta}$ which pushes the uniform distribution on $(0,1)^
{d+1}$ forward to a good approximation of $Q$ with accuracy $\epsilon$. The network size is exponential in the intrinsic dimension $d$. A proof sketch of Theorem \ref{thm:approx} is given in Section \ref{subsection:appxproof}.

We next present a statistical estimation theory to answer Question {\bf Q2}. 

\begin{theorem}[Statistical Guarantees of Deep Wasserstein Learning] \label{thm:stat} 
Suppose $\Mm$ and $Q$ satisfy Assumption \ref{assumptionmanifold} and \ref{assumptiondistribution} respectively.
The easy-to-sample distribution $\rho$ is taken to be uniform on $ (0,1)^{d+1}$.
Let $n$ be the number of samples of $X_i \sim Q$. Choose any $\delta > 0$. Set $\epsilon = n^{-\frac{1}{d+\delta}}$ in Theorem \ref{thm:approx} so that the network class $\Gg(L,p,\kappa)$ has parameters
\begin{align*}
    L = &O\left(\log\left(n^{\frac{1}{d+\delta}}\right)\right),\hspace{0.1in} p = O\left(Dn^{\frac{d}{\alpha(d+\delta)}}\right), \hspace{0.1in} \kappa = M.
\end{align*}
 Then the empirical risk minimizer $\hat{g}_n$ given by \eqref{emprisk} has rate
    \begin{align*}
        \E W_1((\hat{g}_n)_\sharp \rho, Q) \leq C n^{-\frac{1}{d+\delta}},
    \end{align*} where $C$ is a constant independent of $n$ and $D$.
\end{theorem}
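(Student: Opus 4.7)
The plan is to decompose the error of the empirical risk minimizer into an approximation component, controlled by Theorem \ref{thm:approx}, and a statistical component, controlled by the convergence of the empirical measure $Q_n$ to $Q$ in Wasserstein-1 distance. First, by the triangle inequality for $W_1$ and by the defining optimality of $\hat{g}_n$ over $\Gg(L,p,\kappa)$, for any fixed $g^{*} \in \Gg(L,p,\kappa)$ we obtain
\begin{align*}
W_1((\hat{g}_n)_\sharp \rho, Q)
&\le W_1((\hat{g}_n)_\sharp \rho, Q_n) + W_1(Q_n, Q) \\
&\le W_1((g^{*})_\sharp \rho, Q_n) + W_1(Q_n, Q) \\
&\le W_1((g^{*})_\sharp \rho, Q) + 2\, W_1(Q_n, Q).
\end{align*}
Choosing $g^{*}$ to be the generator supplied by Theorem \ref{thm:approx} at accuracy $\epsilon = n^{-1/(d+\delta)}$ yields $W_1((g^{*})_\sharp \rho, Q) \le \epsilon$, which, after taking expectations, gives the clean decomposition
\begin{align*}
\E W_1((\hat{g}_n)_\sharp \rho, Q) \le \epsilon + 2\, \E W_1(Q_n, Q).
\end{align*}

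The main task is then to bound the statistical term $\E W_1(Q_n, Q)$ by a quantity that depends on the intrinsic dimension $d$ only. Because $Q$ is supported on the $d$-dimensional compact manifold $\Mm$ and has a density bounded above and below by Assumption \ref{assumptiondistribution}, classical results on the convergence of empirical measures in Wasserstein distance (in the spirit of Dudley, and in the sharper framework of Weed--Bach on upper Wasserstein dimension) give a rate governed by the intrinsic dimension $d$: for any $\delta > 0$ there exists a constant $C$ depending on $\Mm$, $q$, and $\delta$ but not on $D$ such that
\begin{align*}
\E W_1(Q_n, Q) \le C\, n^{-1/(d+\delta)}.
\end{align*}
A natural route is to reuse the geodesic-ball cover of $\Mm$ employed in the proof of Theorem \ref{thm:approx}, whose covering number scales as $\epsilon^{-d}$; push each local piece into $\R^d$ via the exponential map (where the density is comparable to the Lebesgue density thanks to the bounded metric factor $\sqrt{\det g_{ij}^x}$ in \eqref{localdensity}); and then apply a dyadic-partition or chaining Wasserstein concentration argument locally in each chart. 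The ambient dimension $D$ drops out because the argument is driven entirely by the geometry of $\Mm$.

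Combining the two estimates with $\epsilon = n^{-1/(d+\delta)}$ balances approximation and estimation error and yields the stated bound $\E W_1((\hat{g}_n)_\sharp \rho, Q) \le C\, n^{-1/(d+\delta)}$. The principal obstacle is the statistical step: obtaining a sharp bound on $\E W_1(Q_n, Q)$ whose rate depends only on the intrinsic dimension $d$. If the ambient dimension $D$ were to leak into this bound (as it would if one naively applied, say, Kantorovich--Rubinstein duality against $1$-Lipschitz functions on $\R^D$), the whole point of the theorem, namely escaping the curse of dimensionality, would collapse. This is precisely where the manifold hypothesis and the bounded density assumption must be invoked in the statistical half of the argument, mirroring the role they play in the construction of the approximating transport map in Theorem \ref{thm:approx}.
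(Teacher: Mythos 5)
Your proposal is correct and follows essentially the same route as the paper: the same triangle-inequality/optimality decomposition into an approximation term plus $2\,\E W_1(Q_n,Q)$, the same invocation of the Weed--Bach empirical-measure convergence rate governed by the intrinsic dimension, and the same balancing choice $\epsilon = n^{-1/(d+\delta)}$. The one point you flag as the principal obstacle---that the ambient $W_1$ might pick up $D$---is resolved in the paper without any chart-by-chart chaining: Weed--Bach is applied to $(\Mm, d_{\Mm})$ with the geodesic metric, and then $W_1^{\R^D}(Q,Q_n)\le W_1^{\Mm}(Q,Q_n)$ follows because $\Lip_1(\R^D)\subseteq\Lip_1(\Mm)$ under the isometric embedding.
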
 

A proof sketch of Theorem \ref{thm:stat} is presented in Section \ref{subsection:statproof}. Additionally, this result can be easily extended to the noisy case. Suppose we are given $n$ noisy i.i.d. samples $\hat{X}_1,...,\hat{X}_n$ of the form $\hat{X}_i = X_i + \xi_i$, for $X_i \sim Q$ and $\xi_i$ distributed according to some noise distribution. The optimization in \eqref{emprisk} is performed with the noisy empirical distribution $\hat{Q}_n = \frac{1}{n}\sum_{i=1}^n \delta_{\hat{X}_i}$. Then the minimizer  $\hat{g}_n$ satisfies
\begin{align*}
    \E W_1((\hat{g}_n)_\sharp \rho, Q) \leq C n^{-\frac{1}{d+\delta}} + 2\sqrt{V_\xi},
\end{align*} where $V_\xi = \E \|\xi\|_2^2$ is the variance of the noise distribution. The proof in the noisy case is given in Section \ref{SubsecNoisy}.

\textbf{Comparison to Related Works}. To justify the practical power of generative networks, low-dimensional data structures are considered in  \cite{luise2020generalization,schreuder2021statistical,Block2021ANE,chae2021likelihood}. These works consider the generative models in \eqref{eq:population}.
They assume that the high-dimensional data are parametrized by low-dimensional latent parameters. Such assumptions correspond to the manifold model where the manifold is globally homeomorphic to  Euclidean space, i.e. the manifold has a single chart. 

In \cite{luise2020generalization}, the generative models are assumed to be continuously differentiable up to order $s$. By jointly training
of the generator and the latent distributions, they proved that the Sinkhorn divergence between the generated distribution and data distribution converges, depending on data intrinsic dimension. \cite{chae2021likelihood} and \cite{schreuder2021statistical} assume the special case where the manifold has a single chart. More recently, \cite{Block2021ANE} proposed to estimate the intrinsic dimension of data using the H\"older IPM between some empirical distributions of data. 
This theory is based on the statistical convergence of the empirical distribution to the data distribution. As an application to GANs, \cite[Theorem 23]{Block2021ANE} gives the statistical error while the approximation error is not studied. In these works, the single chart assumption is very strong while a general manifold can have multiple charts.

  Recently, \cite{yang2022capacity,huang2022error} showed that GANs can approximate any data distribution (in any dimension) by transforming an absolutely
continuous 1D distribution. The analysis in \cite{yang2022capacity,huang2022error} can be applied to the general manifold model. Their approach requires the GAN to memorize the empirical data distribution using ReLU networks. Thus it is not clear how the designed generator is capable of generating new samples that are different from the training data. In contrast, we explicitly construct an oracle transport map which transforms the low-dimensional easy-to-sample distribution to the data distribution. Our work provides insights about how distributions on a manifold can be approximated by a neural network pushforward of a low-dimensional easy-to-sample distribution without exactly memorizing the data. In comparison, the single-chart assumption in earlier works assumes that an oracle transport naturally exists. Our work is novel in the construction of the oracle transport for a general manifold with multiple charts, and the approximation theory by deep neural networks.

\section{Proof of Main Results}

\subsection{Proof of Approximation Theory in Theorem \ref{thm:approx}}
\label{subsection:appxproof}

To prove Theorem \ref{thm:approx}, we explicitly construct an oracle transport $g^*$ pushing $\rho$ onto $Q$, i.e. $g^*_\sharp \rho = Q$. Further this oracle will be piecewise $\alpha$-H\"older continuous for some $\alpha \in (0,1)$.

\begin{lemma}\label{lemma:oracleappx}
Suppose $\Mm$ and $Q$ satisfy Assumption \ref{assumptionmanifold} and \ref{assumptiondistribution} respectively.
The easy-to-sample distribution $\rho$ is taken to be uniform on $(0,1)^{d+1}$.
Then there exists a function $g^*: (0,1)^{d+1} \to \Mm$ such that $Q = g^*_\sharp \rho$ where \begin{equation}g^*(x) =\textstyle \sum_{j=1}^J \indicator_{(\pi_{j-1}, \pi_j)}(x_1) g^*_j(x_{2:d+1})
\label{eqgstar}
\end{equation}
for some $\alpha$-H\"older ($0<\alpha<1$) continuous functions $g_1^*, \dots, g_J^*$ and some constants $0 = \pi_0 < \pi_1 < \dots < \pi_J = 1$.
\end{lemma}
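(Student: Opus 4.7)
My plan is to build $g^*$ by partitioning $\Mm$ into finitely many pieces, constructing on each piece an $\alpha$-Hölder transport from the uniform distribution on $(0,1)^d$ onto the restriction of $Q$ via the Caffarelli optimal transport regularity result (Proposition \ref{proposition:OT}), and assembling the pieces using the first coordinate of $\rho$ as a selector. Positive reach $\tau > 0$ forces $\inj(\Mm) \geq \pi \tau$, and compactness then yields a finite cover of $\Mm$ by geodesic balls $B_j = \exp_{x_j}(B_{T_{x_j}\Mm}(0, r))$ with $r < \inj(\Mm)$. The change-of-variables formula (\ref{localdensity}) further shows that whenever $A_j \subseteq B_j$, pulling $Q|_{A_j}$ back by $\exp_{x_j}^{-1}$ preserves upper and lower density bounds, because $\sqrt{\det g_{ij}^{x_j}(v)}$ is smooth and nonvanishing on the compact tangent ball.

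Given such a cover, I would form a disjoint measurable partition $\{A_j\}_{j=1}^J$ of $\Mm$ with $A_j \subseteq B_j$, set $p_j = Q(A_j)$ and $\pi_j = \sum_{i \leq j} p_i$ so that $\pi_0 = 0$ and $\pi_J = 1$. On each piece I want $g_j^* : (0,1)^d \to A_j$ with $(g_j^*)_\sharp \operatorname{Unif}((0,1)^d) = Q|_{A_j}/p_j$ and $g_j^*$ being $\alpha$-Hölder. The two-step recipe is: (i) choose a smooth chart $\phi_j : V_j \to A_j$ with $V_j$ a convex open subset of $\R^d$, so that the pullback $\mu_j := (\phi_j^{-1})_\sharp(Q|_{A_j}/p_j)$ has a density on $V_j$ bounded above and below; (ii) apply Proposition \ref{proposition:OT} between $(0,1)^d$ and $V_j$ (both convex) with source density $1$ and target the pulled-back density, obtaining a $C^\alpha$ transport $T_j$. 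Setting $g_j^* := \phi_j \circ T_j$ then gives the required $\alpha$-Hölder map by composition, since $\phi_j$ is smooth on a compact set.

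The main obstacle is step (i): producing charts $\phi_j$ with convex domains. The naive choice $\phi_j = \exp_{x_j}$ restricted to $U_j := \exp_{x_j}^{-1}(A_j)$ typically fails, because once earlier pieces have been carved out of $B_j$, the shadow $U_j$ in the tangent space is not convex (for instance, a tangent ball with off-center sub-balls removed), so Proposition \ref{proposition:OT} cannot be directly applied with target $U_j$. I would sidestep this by replacing the greedy partition with a smooth triangulation of $\Mm$, refined so that each top-dimensional simplex lies inside some geodesic ball; such triangulations exist with finitely many simplices for any smooth compact manifold. Then $A_j$ is the interior of the $j$-th simplex, $V_j = \Delta^d \subset \R^d$ is the standard open simplex (convex), and $\phi_j : \Delta^d \to A_j$ is the smooth simplex embedding, whose Jacobian is bounded above and below by compactness. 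This delivers the convex-to-convex configuration required by Proposition \ref{proposition:OT}, with a uniform Hölder exponent $\alpha$ obtained by taking the minimum over the finitely many $j$.

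Finally a direct pushforward computation verifies (\ref{eqgstar}): $x_1 \sim \operatorname{Unif}(0,1)$ lies in $(\pi_{j-1}, \pi_j)$ with probability $p_j$, conditional on which $x_{2:d+1}$ is $\operatorname{Unif}((0,1)^d)$ and is mapped by $g_j^*$ onto $Q|_{A_j}/p_j$, so $g^*_\sharp \rho = \sum_j p_j \,(Q|_{A_j}/p_j) = Q$, as required.
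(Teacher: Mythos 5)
Your proposal is correct, but it takes a genuinely different route from the paper. The paper never forms a disjoint partition: it keeps the \emph{overlapping} cover by geodesic balls $U_j=\exp_{c_j}(B_{T_{c_j}\Mm}(0,\pi\tau/2))$, so that each pullback $\exp_{c_j}^{-1}(U_j)$ is a tangent ball and hence convex, and it handles the overlaps by reweighting the local densities with the multiplicity factor $1/K(x)$ (where $K(x)=\sum_j\indicator_{U_j}(x)$), choosing the mixture weights $\eta_j=K_jQ(U_j)$ so that $\sum_j\eta_j\overline{q}_j=q$. You instead insist on disjoint pieces and obtain convexity not in the tangent space but in the parameter domain, by taking a smooth (Whitehead) triangulation subordinate to the geodesic balls and applying Proposition \ref{proposition:OT} with the standard open simplex $\Delta^d$ as the convex target; you correctly identify why the naive ``carve the balls into disjoint pieces'' idea breaks (non-convex shadows in the tangent space), which is exactly the difficulty that also afflicts the paper's alternative Voronoi construction in Appendix \ref{appsecvoronoi}. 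The trade-off: the paper's weighting trick needs only the exponential map, compactness, and the covering argument, at the cost of a slightly less transparent mixture decomposition; your route gives a cleaner genuinely disjoint mixture but imports the existence of smooth triangulations with simplexwise nondegenerate embeddings (needed for your claim that the Jacobian of $\phi_j$ is bounded above and below on the closed simplex), and it needs the small additional remark that the $(d-1)$-skeleton is $Q$-null (since $Q$ has bounded density with respect to $\mathcal{H}^d$), so that the open simplices carry full mass and $\pi_J=\sum_j p_j=1$; with that remark, your pushforward computation and the uniform exponent $\alpha=\min_j\alpha_j$ go through exactly as in the paper.
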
 

\label{proof:oracleappx}

\begin{proof}

We construct a transport map $g^*: (0,1)^{d+1} \to \Mm$ that can be approximated by neural networks. First, we decompose the manifold into overlapping geodesic balls. Next, we pull these local distributions on these balls back to tangent space, which produces $d$-dimensional tangent distributions. Then, we apply optimal theory on these tangent distributions to produce maps between the source distributions on $(0,1)^{d}$ to the appropriate local (geodesic ball) distributions on the manifold. Finally, we glue together these local maps with indicators functions and a uniform random sample from $(0,1)$. We proceed with the first step of decomposing the manifold.

\textbf{Step 1: Overlapping ball decomposition.} Recall that $\Mm$ is a compact manifold with reach $\tau >0$. Then the injectivity radius of $\Mm$ is greater or equal to $\pi \tau$ (\citet{aamari2019estimating}). Set $r = \frac{\pi \tau}{2}$. For each $c \in \Mm$, define an open set $U_c = \exp_c(B_{T_c \Mm}(0, r)) \subseteq \Mm$. Since the collection $\{ U_c : x \in \Mm \}$ forms an open cover of $\Mm$ (in $\R^D$), by the compactness of $\Mm$ we can extract a finite subcover which we denote as $\{ U_{c_j} \}_{j=1}^J$. For convenience, we will write $U_j = U_{c_j}$. 

\textbf{Step 2: Defining local lower-dimensional distributions.} On each $U_j$, we define a local distribution $Q_j$ with density $q_j$ via $$q_j(x) = \frac{q(x)}{Q(U_j)}\indicator_{U_j}(x).$$ Set $K(x) = \sum_{j=1}^J \indicator_{U_j}(x)$ as the number of balls $U_j$ containing $x$. Note $1 \leq K(x) \leq J$ for all $x \in \mathcal{M}$. Now define the distribution $\overline{Q}_j$ with density $\overline{q}_j$ given by  

\[ \overline{q}_j(x) = \frac{\frac{1}{K(x)} q_j(x) \indicator_{U_j}(x)}{\int_{U_j}\frac{1}{K(x)} q_j(x)d\mathcal{H}} . \]

Write $K_j = \int_{U_j}\frac{1}{K(x)}q_j(x)d\mathcal{H}$ as the normalizing constant. Define $\tilde{q}_j(v) = (\overline{q}_j \circ \exp_{c_j})(v)\sqrt{\det g_{kl}^{c_j}(v)}$ where $g_{kl}^{c_j}$ is the Riemannian at $c_j$. This quantity can be thought of as the Jacobian of the exponential map, denoted by $|J_{\exp_{c_j}}(v)|$ in the following step. Then $\tilde{q}_j$ is a density on $\tilde{U}_j = \exp^{-1}_{c_j}(U_j)$, which is a ball of radius $\frac{\pi \tau}{2}$ since
\begin{align*}
   1 = \int_{U_j}\overline{q}_j(x)d\Hh = \int_{\tilde{U}_j}\sqrt{\det g_{kl}^{c_j}(v)} \overline{q}_j(\exp_{c_j}(v))dv = \int_{\tilde{U}_j} \tilde{q}_j(v)dv
\end{align*}  

Let $\tilde{Q}_j$ be the distribution in $\R^d$ with density $\tilde{q}_j$. By construction, we can write \begin{equation}\overline{Q}_j = (\exp_{c_j})_\sharp \tilde{Q}_j.
\label{ProofLemma1EqComp1}
\end{equation} 

\textbf{Step 3: Constructing the local transport.}  We have that $\exp_{c_j}^{-1}$ is bi-Lipschitz on $U_j$ and hence its Jacobian is upper bounded. Since $|J_{\exp_{c_j}}(v)| = \frac{1}{|J_{\exp_{c_j}^{-1}}(x)|}$, we know that $|J_{\exp_{c_j}}|$  lower bounded. Since $q_j$ is lower bounded (away from $0$), this means $\tilde{q}_j$ is also lower bounded. Now the distribution $\tilde{\rho}_j$ supported on $\tilde{U}_j = B(0,\frac{\tau \pi}{2})$ fulfills the requirements for our optimal transport result: (1) Its density $\tilde{p}_j$ is lower and upper bounded; (2) The support $B(0,\frac{\tau \pi}{2})$ is convex. 
Taking our cost to be $c(x,y) = \frac{1}{2}\|x-y\|^2$ (i.e. squared Euclidean distance), via Proposition \ref{proposition:OT} we can find an optimal transport map $T_j$ such that
\begin{equation}
(T_j)_{\sharp}\rho_d = \tilde{Q}_j
    \label{ProofLemma1EqComp3}
\end{equation} where $\rho_d$ is uniformly distributed on $(0,1)^d$. Furthermore, $T_j \in C^{\alpha_j}$ for some $\alpha_j \in (0,1)$. Then we can construct a local transport onto $U_j$ via 
\begin{equation}g_j^* = \exp_{c_j} \circ T_j
\label{eq:gjstar}
\end{equation}
which pushes $\rho_d$ forward to $\overline{Q}_j$. Since $g_j^*$ is a composition of a Lipschitz map with an $\alpha_j$ H\"older continuous maps, it is hence $\alpha_j$ H\"older continuous. 

\begin{figure}[h!]
    \centering
    \includegraphics[height=1.25in]{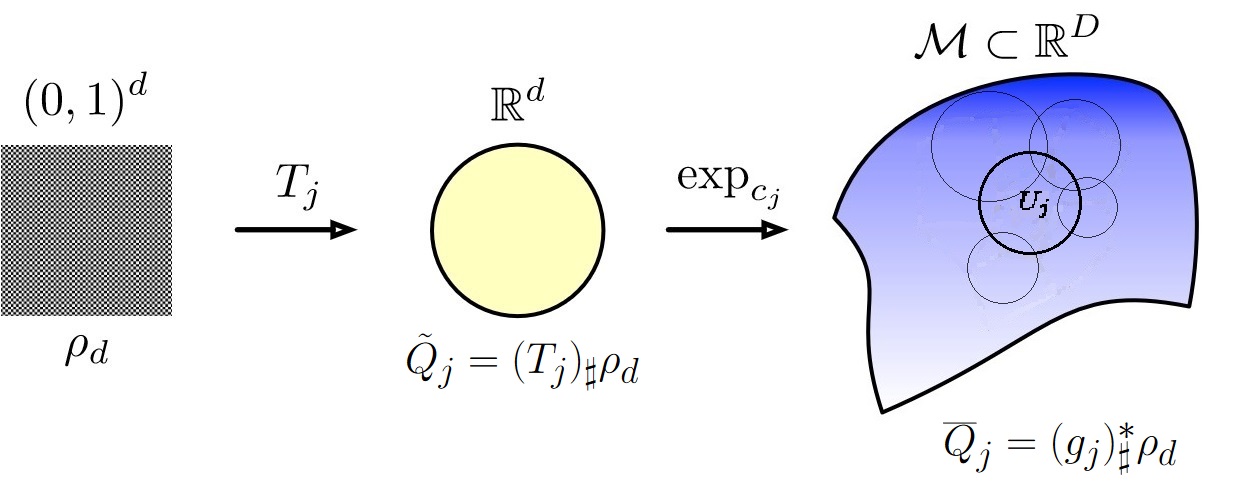}
    \caption{Local transport $g_j^*$ in \eqref{eq:gjstar} mapping $\rho_d$ on $(0,1)^d$ to a local distribution $\overline{Q}_j$ supported on $U_j$.}
    \label{fig:simple_flowchart}
\end{figure}

\textbf{Step 4: Assembling the global transport.} It remains to patch together the local distributions $\overline{Q}_j$ to form $Q$. Define $\eta_j = K_jQ(U_j)$. Notice 

\begin{align*}
    \sum_{j=1}^J \eta_j \overline{q}_j(x) &= \sum_{j=1}^J  K_jQ(U_j) \frac{\frac{1}{K(x)} q_j(x) \indicator_{U_j}(x)}{K_j} = \sum_{j=1}^J  Q(U_j) \frac{\frac{1}{K(x)} q(x) \indicator_{U_j}(x)}{Q(U_j)} \\
    &= \sum_{j=1}^J \frac{1}{K(x)} q(x) \indicator_{U_j}(x) = q(x)\frac{1}{K(x)}\sum_{j=1}^J \indicator_{U_j}(x) = q(x). 
\end{align*} Hence it must be that $\sum_{j=1}^J \eta_j = 1$. Set $\alpha = \min_{j \in [J]} \alpha_j$. We can now define the oracle $g^*$. Let $x \in (0,1)^{d+1}$. Write
\begin{align}
    g^*(x) =\textstyle \sum_{j=1}^J \indicator_{(\pi_{j-1}, \pi_j) }(x_1)g_j^*(x_{2:d+1}),
\label{eq:gstar}
\end{align} where $x_1$ is the first coordinate and $x_{2:d+1}$ are the remaining coordinates with $\pi_j = \sum_{i=1}^{j-1}\eta_i$.  Let $Z \sim \rho$. Then $g(Z) \sim Q$. We see this as follows. For $A \subseteq \mathcal{M}$ we can compute 
\begin{align*}
    \mathbb{P}(g^*(Z) \in A) &= \sum_{j=1}^J\mathbb{P}(\pi_{j-1} < Z_1 < \pi_j) \mathbb{P}(g^*_j(Z_{2:d+1}) \in A \cap U_j) = \sum_{j=1}^J \eta_j \overline{Q}_j(A \cap U_j) \\
    &= \sum_{j=1}^J \eta_j \int_{A}\overline{q}_j(x) d\mathcal{H} = \int_A \sum_{j=1}^J \eta_j \overline{q}_j(x)d\mathcal{H} = \int_A q(x)d\mathcal{H} = Q(A)
\end{align*} which completes the proof.
\end{proof}

We have found an oracle $g^*$ which is piecewise H\"older continuous such that $g^*_\sharp \rho = Q$. We can design a neural network $g_\theta$ to approximate this oracle $g^*$. Now in order to minimize $W_1((g_\theta)_\sharp \rho, Q) = W_1((g_\theta)_\sharp \rho, g^*_\sharp \rho)$, we show it suffices to have $g_\theta$ approximate $g^*$ in $L^1(\rho)$.

\begin{lemma}
\label{lemma:w1}
Let $\mu$ be an absolutely continuous probability distribution on a set $Z \subseteq \R^d$, and let $f, g: Z \rightarrow \R^m$ be transport maps. Then $$W_1(f_\sharp \mu, g_\sharp \mu) \leq C\|f-g\|_{L^1(\mu)}$$ for some $C > 0$.
\end{lemma}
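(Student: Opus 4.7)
The plan is to invoke the Kantorovich--Rubinstein dual form of $W_1$ as given in \eqref{eq:wasserstein-1} and then use the change-of-variables formula for pushforward measures. Concretely, I would write
\begin{align*}
    W_1(f_\sharp \mu, g_\sharp \mu) = \sup_{\phi \in \Lip_1(\R^m)} \int_{\R^m} \phi \, d(f_\sharp \mu) - \int_{\R^m} \phi \, d(g_\sharp \mu),
\end{align*}
and immediately rewrite each integral using the defining property of the pushforward: $\int \phi \, d(f_\sharp \mu) = \int_Z \phi(f(z)) \, d\mu(z)$, and likewise for $g$. This reduces the supremum to
\begin{align*}
    W_1(f_\sharp \mu, g_\sharp \mu) = \sup_{\phi \in \Lip_1(\R^m)} \int_Z \bigl( \phi(f(z)) - \phi(g(z)) \bigr) \, d\mu(z).
\end{align*}

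Next, I would bound the integrand pointwise. Since $\phi$ is $1$-Lipschitz with respect to the Euclidean norm on $\R^m$,
\begin{align*}
    |\phi(f(z)) - \phi(g(z))| \leq \|f(z) - g(z)\|_2.
\end{align*}
Taking absolute values inside the integral and using this bound gives
\begin{align*}
    W_1(f_\sharp \mu, g_\sharp \mu) \leq \int_Z \|f(z) - g(z)\|_2 \, d\mu(z).
\end{align*}
The constant $C$ then only arises from reconciling $\|\cdot\|_2$ with whatever convention is used for $\|f - g\|_{L^1(\mu)}$: if the $L^1(\mu)$ norm is defined coordinatewise or via $\|\cdot\|_\infty$, one passes between these norms on $\R^m$ at the cost of a dimensional constant; if the $L^1(\mu)$ norm is already defined via the Euclidean norm, then $C = 1$ suffices.

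I do not foresee any real obstacle here; the proof is essentially a two-line consequence of duality plus the Lipschitz bound. The only care needed is to make sure that the pushforward integrals are well-defined (which follows from absolute continuity of $\mu$ and measurability of $f, g$), and to be explicit about the norm convention underlying $\|f - g\|_{L^1(\mu)}$ so the stated constant $C$ is justified.
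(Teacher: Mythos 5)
Your proposal is correct and follows essentially the same route as the paper's proof: Kantorovich--Rubinstein duality, the change of variables $\int \phi \, d(f_\sharp\mu) = \int \phi \circ f \, d\mu$, the pointwise $1$-Lipschitz bound $|\phi(f(z))-\phi(g(z))| \leq \|f(z)-g(z)\|_2$, and finally norm equivalence on $\R^m$ to reconcile the Euclidean bound with the coordinatewise $L^1(\mu)$ convention (the paper uses $\|f-g\|_{L^1(\mu)} = \sum_i \|f_i - g_i\|$, so $\|\cdot\|_2 \leq \|\cdot\|_1$ gives $C=1$). Your attention to the norm convention is exactly the point where the constant is settled, so nothing is missing.
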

\begin{proof}[Proof of Lemma \ref{lemma:w1}]
The vector-valued functions $f$ and $g$ output  $m$-dimensional vectors. Note that  $||f-g||_{L^1(\mu)} = \sum_{i=1}^m \| f_i - g_i \|$ where $f_i$ and $g_i$ denote the $i$th component function of $f$ and $g$, respectively. Then we can compute
\begin{align*}
W_1(f_\sharp \mu, g_\sharp \mu)
    &= \sup_{\phi \in \Lip_1(\R^m)} \left| \int \phi(y) \, d(f_\sharp \mu) - \int \phi(y) \, d(g_\sharp \mu)  \right| \\
    &= \sup_{\phi \in \Lip_1(\R^m)} \left| \int \phi(f(x)) - \phi(g(x)) \, d\mu \right| \\
    &\leq \sup_{\phi \in \Lip_1(\R^m)} \int \left|\phi(f(x)) - \phi(g(x)) \right| \, d\mu  \\
    &\leq \int_Z \left\|f(x) - g(x) \right\|_2 \, d\mu  \\
    &\leq \int_Z C \|f(x)-g(x)\|_1 \, d\mu \\
    &= C \|f - g\|_{L^1(\mu)},
\end{align*} 
since $\phi$ is Lipschitz with constant $1$ and all norms are equivalent in finite dimensions. In particular, $C=1$ here.

\end{proof}

We now prove Theorem \ref{thm:approx}.

\begin{proof}[Proof of Theorem \ref{thm:approx}]

By Lemma \ref{lemma:oracleappx}, there exists a transformation $g^*(x) = \sum_{j=1}^J \indicator_{(\pi_{j-1}, \pi_j)}(x_1) g^*_j(x_{2:d+1})$ such that $g^* _\sharp \rho = Q$. 
By Lemma \ref{lemma:w1}, it suffices to approximate $g^*$ with a neural network $g_\theta \in \Gg(L, p, \kappa)$ in $L^1$ norm, with a given accuracy $\epsilon > 0$.  Let $(g^*)^{(i)}$ denote the $i$th component of the vector valued function $g^*$. Then it suffices to approximate
\[ (g^*)^{(i)}(x) = \sum_{j=1}^J \indicator_{(\pi_{j-1}, \pi_j)}(x_1) (g^*_j)^{(i)}(x_{2:d+1}) \]
for each $1 \leq i \leq D$, where $(g^*_j)^{(i)}$ denotes the $i$th component of the function $g^*_j$.
We construct the approximation of $(g^*)^{(i)}$ by the function 
\begin{equation}
\label{eq:gtheta}
(g_\theta)^{(i)}(x) = \sum_{j=1}^J \tilde{\times}^{\delta_2} \left(\tilde{\indicator}^{\delta_1}_{(\pi_{j-1}, \pi_j)}(x_1), (g_{j, \theta}^{\delta_3})^{(i)}(x_{2:d+1}) \right),
\end{equation} 
where $\tilde{\times}^{\delta_2}$ is a ReLU network approximation to the multiplication operation with $\delta_2$ accuracy, $\tilde{\indicator}^{\delta_1}_{(\pi_{j-1}, \pi_j)}$ is a ReLU network approximation to the indicator function with $\delta_1$ accuracy, and $(g_{j, \theta }^{\delta_3})^{(i)}$ is a ReLU network approximation to $(g_j^*)^{(i)}$ with $\delta_3$ accuracy. We construct these using the approximation theory outlined in Appendix \ref{sec:appendixapprox}.

First, we obtain $\tilde{\indicator}^{\delta_1}_{(\pi_{j-1}, \pi_j)}$ via an application of Lemma \ref{lemma:indicator}. Next, we obtain $\tilde{\times}^{\delta_2}$ from an application of Lemma \ref{lemma:times}. Finally, we discuss $g_{j, \theta}^{\delta_3}$. Let $j \in [J]$. To approximate the H\"older function $g_j^*$, we use the following Lemma \ref{lemma:main} that is proved in Appendix \ref{sec:appendixapprox}. Similar approximation results can be found in \cite{shen2022optimal} and \cite{ohn2019smooth} as well. In Lemma \ref{lemma:main}, our approximation error is in $L^1$ norm and all weight parameters are upper bounded by a constant. In  comparison, the error in \cite{ohn2019smooth} is in $L^\infty$ norm and the weight parameter increases as $\epsilon$ decreases.

\begin{lemma}
\label{lemma:main}
Fix $M \geq 2$. Suppose $f \in C^\alpha([0,1]^d)$, $\alpha \in (0, 1]$, with $\|f\|_{L^\infty} < M$. Let $0 < \epsilon < 1$. Then there exists a function $\Phi$ implementable by a ReLU network such that $$\|f - \Phi\|_{L^1} < \epsilon.$$ The ReLU network has depth at most $c_1 \log\left(\frac{1}{\epsilon} \right)$, width at most $c_2 \epsilon^{-\frac{d}{\alpha}}$, and weights bounded by $M$ (where $c_1$ and $c_2$ are constants independent of $\epsilon$).
\end{lemma}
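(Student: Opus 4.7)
The plan is to approximate $f$ by a piecewise constant function on a regular grid of $[0,1]^d$ and then realize that piecewise constant function as a ReLU network assembled from the indicator and multiplication primitives in Lemmas~\ref{lemma:indicator} and~\ref{lemma:times}. Fix $N = \lceil c_0\,\epsilon^{-1/\alpha}\rceil$ for a constant $c_0$ depending only on $d$ and the H\"older norm of $f$, and partition $[0,1]^d$ into the $N^d$ cubes $C_k = \prod_{i=1}^d [(k_i-1)/N, k_i/N]$, $k \in [N]^d$. Pick a representative $x_k \in C_k$, set $v_k = f(x_k)$ (so $|v_k| < M$), and define the piecewise constant approximation $\bar f = \sum_k v_k \indicator_{C_k}$. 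H\"older continuity gives $\|f - \bar f\|_{L^\infty([0,1]^d)} \leq \|f\|_{C^\alpha}(\sqrt d / N)^\alpha \leq \epsilon/2$, hence also $\|f - \bar f\|_{L^1} \leq \epsilon/2$.

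Next I would realize $\bar f$ approximately by a ReLU network. For each cube $C_k$, build an approximate indicator $\tilde{\indicator}_{C_k}$ as a product of $d$ univariate ReLU bumps, each of which equals $1$ on the corresponding factor of $C_k$, vanishes outside a thin slab of width $\eta$, and takes values in $[0,1]$ in between. Form
\[
\Phi(x) \;=\; \sum_{k \in [N]^d} \tilde{\times}^{\delta}\bigl(v_k,\, \tilde{\indicator}_{C_k}(x)\bigr),
\]
where $\tilde{\times}^{\delta}$ is the approximate multiplier of Lemma~\ref{lemma:times} with accuracy $\delta$. Outside the union of transition slabs each $\tilde{\indicator}_{C_k}$ agrees with the true indicator, so there $\Phi$ matches $\bar f$ up to the multiplier residual. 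The transition slabs have total volume at most $c_1 d N \eta$ and contribute pointwise error bounded by $2M$; choosing $\eta = \Theta(\epsilon/(MN))$ and $\delta$ polynomially small in $\epsilon$ keeps the combined error at most $\epsilon/2$, so that $\|f - \Phi\|_{L^1} < \epsilon$.

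Counting parameters, there are $N^d = O(\epsilon^{-d/\alpha})$ parallel branches, one per cube, matching the claimed width. The depth is controlled by the multiplier and by the indicator bumps. The main technical obstacle is the weight bound $\|W_i\|_\infty \leq M$: a direct ReLU gadget that ramps from $0$ to $1$ across a slab of width $\eta$ needs a weight of order $1/\eta = \epsilon^{-(1+1/\alpha)}$, which is unbounded as $\epsilon \to 0$. To keep all weights $\leq M$, I would cascade $O(\log(1/\eta)/\log M) = O(\log(1/\epsilon))$ layers, each with weights of magnitude at most $M$; the compositions amplify multiplicatively so that the effective slope reaches $M^L \geq 1/\eta$ even though no individual weight exceeds $M$. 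The same cascading argument underlies the $O(\log(1/\delta))$ depth of the approximate multiplier, and putting the two together gives the advertised total depth $O(\log(1/\epsilon))$. Working in $L^1$ rather than $L^\infty$ is essential throughout: it absorbs the unavoidable error on the thin transition slabs that would, under an $L^\infty$ criterion, force much sharper gates requiring unbounded weights.
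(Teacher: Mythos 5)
Your proposal is correct and follows essentially the same route as the paper: a piecewise-constant approximation on a regular grid of $O(\epsilon^{-d/\alpha})$ cubes, approximate cube indicators built from univariate ReLU bumps combined through the approximate multiplication gadget, $L^1$ accounting that absorbs the thin transition slabs, and the key cascading-by-$M$ trick across $O(\log(1/\epsilon))$ layers to realize the steep slopes while keeping every weight bounded by $M$ (the paper's Lemmas \ref{lemma:pwconstant}, \ref{lemma:indicator}, \ref{lemma:dtimes}, \ref{lemma:indicatord}, \ref{lemma:pwapprox}). The only cosmetic differences are that the paper uses cube averages rather than point values for the constants and folds the coefficients $v_k$ into the output-layer weights instead of feeding them through the multiplier.
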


We can apply Lemma \ref{lemma:main} to $(g_j^*)^{(i)}$ for all $1 \leq j \leq J$ and $1 \leq i \leq D$, since they are all elements of $C^{\alpha}(0,1)^d$ and elements of $C^{\alpha}(0,1)^d$ can be extended to $C^{\alpha}[0,1]^d$. 
Thus there exists a neural network $(g_{j, \theta}^{\delta_3})^{(i)} \in \Gg(L, p, \kappa)$ with parameters given as above such that $$\|(g_j^*)^{(i)} - (g_{j, \theta}^{\delta_3})^{(i)}\|_{L^1} < \delta_3.$$

The goal is now to show the $L^1$ distance between $g_\theta$ (as defined in \eqref{eq:gtheta}) and $g^*$ is small. We compute
\begin{align*}
& \quad \|g^*-g_\theta\|_{L^1} \\
&= \sum_{i=1}^D \|(g^*)^{(i)} - (g_\theta)^{(i)}\|_{L^1} \\ 
    &= \sum_{i=1}^D \int_{(0,1)^{d+1}} \left|(g^*)^{(i)}(x) - (g_{\theta})^{(i)}(x)\right| \, dx \\
    &\leq \sum_{i=1}^D \sum_{j=1}^J \int_{(0,1)^{d+1}}\left| \tilde{\times}^{\delta_2} \left(\tilde{\indicator}^{\delta_1}_{(\pi_{j-1}, \pi_j)}(x_1), (g_{j, \theta}^{\delta_3})^{(i)}(x_{2:d+1}) \right) - \indicator_{(\pi_{j-1}, \pi_j)}(x_1) (g^*_j)^{(i)}(x_{2:d+1})\right| \, dx\\
    &\leq \sum_{i=1}^D \sum_{j=1}^J \int_{(0,1)^{d+1}}\left| \tilde{\times}^{\delta_2} \left(\tilde{\indicator}^{\delta_1}_{(\pi_{j-1}, \pi_j)}(x_1), (g_{j, \theta}^{\delta_3})^{(i)}(x_{2:d+1}) \right) - \tilde{\indicator}^{\delta_1}_{(\pi_{j-1}, \pi_j)}(x_1)(g_{j, \theta}^{\delta_3})^{(i)}(x_{2:d+1}) \right| \, dx\\
    &\quad+ \sum_{i=1}^D \sum_{j=1}^J \int_{(0,1)^{d+1}}\left|\tilde{\indicator}^{\delta_1}_{(\pi_{j-1}, \pi_j)}(x_1)(g_{j, \theta}^{\delta_3})^{(i)}(x_{2:d+1}) - \indicator_{(\pi_{j-1}, \pi_j)}(x_1)(g_{j, \theta}^{\delta_3})^{(i)}(x_{2:d+1}) \right| \, dx \\
    &\quad+ \sum_{i=1}^D \sum_{j=1}^J \int_{(0,1)^{d+1}}\left|\indicator_{(\pi_{j-1}, \pi_j)}(x_1)(g_{j, \theta}^{\delta_3})^{(i)}(x_{2:d+1}) - \indicator_{(\pi_{j-1}, \pi_j)}(x_1)(g_j^*)^{(i)}(x_{2:d+1}) \right| \, dx \\
    &= \sum_{i=1}^D \sum_{j=1}^J \left(\text{(I) + (II) + (III)}\right)
\end{align*} 

Each of the three terms are easily handled as follows.

\begin{enumerate}
    \item[(I)] By construction of $\tilde{\times}^{\delta_2}$ in Lemma \ref{lemma:times}, we have that 
    
    \begin{align*}
    \text{(I)}
        &= \int_{(0,1)^{d+1}} \left| \tilde{\times}^{\delta_2} \left(\tilde{\indicator}^{\delta_1}_{(\pi_{j-1}, \pi_j)}(x_1), (g_{j, \theta}^{\delta_3})^{(i)}(x_{2:d+1}) \right) - \tilde{\indicator}^{\delta_1}_{(\pi_{j-1}, \pi_j)}(x_1)(g_{j, \theta}^{\delta_3})^{(i)}(x_{2:d+1}) \right| \, dx \\
        &\leq \int_{(0,1)^{d+1}} \delta_2 \, dx = \delta_2.
    \end{align*}

    \item[(II)] By construction of $\tilde{\indicator}^{\delta_1}_{(\pi_{j-1}, \pi_j)}$ in Lemma \ref{lemma:indicator}, we have that 
    
    \begin{align*}
    \text{(II)}
        &= \int_{(0,1)^{d+1}} \left|\tilde{\indicator}^{\delta_1}_{(\pi_{j-1}, \pi_j)}(x_1)(g_{j, \theta}^{\delta_3})^{(i)}(x_{2:d+1}) - \indicator_{(\pi_{j-1}, \pi_j)}(x_1)(g_{j, \theta}^{\delta_3})^{(i)}(x_{2:d+1}) \right| \, dx \\
        &\leq \left\|(g_{j, \theta}^{\delta_3})^{(i)} \right\|_{\infty} \int_0^1 \left|\tilde{\indicator}^{\delta_1}_{(\pi_{j-1}, \pi_j)}(x_1) - \indicator_{(\pi_{j-1}, \pi_j)}(x_1)\right| \, dx \\
        &\leq  M \left\| \tilde{\indicator}^{\delta_1}_{(a, b)} -  \indicator_{(a, b)}\right\|_{L^1} \\
        &= M \delta_1.
    \end{align*}
    
    \item[(III)] By construction of $(g_{j, \theta}^{\delta_3})^{(i)}$ from Lemma \ref{lemma:main}, we have that 
    \begin{align*}
    \text{(III)}
        &= \int_{(0,1)^{d+1}} \left|\indicator_{(\pi_{j-1}, \pi_j)}(x_1)(g_{j, \theta}^{\delta_3})^{(i)}(x_{2:d+1}) - \indicator_{(\pi_{j-1}, \pi_j)}(x_1)(g_{j}^*)^{(i)}(x_{2:d+1}) \right| \, dx \\
        &= \|\indicator_{(\pi_{j-1}, \pi_j)}\|_\infty \int_{(0,1)^{d}} \left|(g_{j, \theta}^{\delta_3})^{(i)}(x) - (g_{j}^*)^{(i)}(x) \right| \, dx \\
        &= \left\| (g_{j, \theta}^{\delta_3})^{(i)} - (g_{j}^*)^{(i)} \right\|_{L^1} \\
        &\leq \delta_3.
    \end{align*}
\end{enumerate}

As a result, we have that
\[ \|g^* - g_{\theta}\|_{L^1} \leq \sum_{i=1}^D \sum_{j=1}^J \text{(I)} + \text{(II)} + \text{(III)} \leq \sum_{i=1}^D \sum_{j=1}^J \delta_2 + M\delta_1 + \delta_3 = DJ(M\delta_1 + \delta_2 + \delta_3). \]
By selecting $\delta_1 < \frac{\epsilon}{3DJM}$, $\delta_2 < \frac{\epsilon}{3DJ}$, and $\delta_3 < \frac{\epsilon}{3DJ}$, we obtain that $||g^* - g_{\theta}||_1 < \epsilon$. 

To complete the proof, we note that $g_\theta$ can be exactly represented by a neural network in $\Gg(L, p, \kappa)$ with parameters
\begin{align*}
    L = &O\left(\log\left(\frac{1}{\epsilon}\right)\right),\hspace{0.1in} p = O\left(D\epsilon^{-\frac{d}{\alpha}}\right), \hspace{0.1in}  \kappa = M.
\end{align*}
\end{proof}

\subsection{Proof of Statistical Estimation Theory in Theorem \ref{thm:stat}}
\label{subsection:statproof}

The proof of Theorem \ref{thm:stat} is facilitated by the common bias-variance inequality, presented here as a lemma.

\begin{lemma}
\label{lemma:biasvariance}
Under the same assumptions of Theorem \ref{thm:stat}, we have
\begin{align}
\label{eq:biasvariance}
    \E W_1((\hat{g}_n)_{\sharp}\rho, Q) \leq \inf_{g_{\theta} \in \Gg }W_1((g_{\theta})_{\sharp}\rho, Q) + 2 \E W_1(Q_n,Q)
\end{align} where $Q_n$ is the clean empirical distribution.
\end{lemma}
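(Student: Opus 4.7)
The plan is to execute the standard oracle-inequality / bias-variance decomposition for empirical risk minimization under an IPM-type loss. The key observation is that $W_1$ is a genuine metric on probability measures, so the triangle inequality applies directly.

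First, I would apply the triangle inequality to insert the empirical measure $Q_n$ between $(\hat{g}_n)_\sharp \rho$ and $Q$:
\begin{equation*}
W_1((\hat{g}_n)_\sharp \rho, Q) \;\leq\; W_1((\hat{g}_n)_\sharp \rho, Q_n) + W_1(Q_n, Q).
\end{equation*}
Next, I would exploit the defining property of $\hat{g}_n$ from \eqref{emprisk}: it is the minimizer of $W_1((g_\theta)_\sharp \rho, Q_n)$ over $g_\theta \in \Gg$, so for every $g_\theta \in \Gg$,
\begin{equation*}
W_1((\hat{g}_n)_\sharp \rho, Q_n) \;\leq\; W_1((g_\theta)_\sharp \rho, Q_n).
\end{equation*}
Applying the triangle inequality once more on the right-hand side,
\begin{equation*}
W_1((g_\theta)_\sharp \rho, Q_n) \;\leq\; W_1((g_\theta)_\sharp \rho, Q) + W_1(Q, Q_n).
\end{equation*}

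Chaining these three displays, I obtain, for every fixed $g_\theta \in \Gg$,
\begin{equation*}
W_1((\hat{g}_n)_\sharp \rho, Q) \;\leq\; W_1((g_\theta)_\sharp \rho, Q) + 2\, W_1(Q_n, Q).
\end{equation*}
Since this bound holds pointwise (in the sample) for every choice of $g_\theta \in \Gg$, and the second term on the right is independent of $g_\theta$, I can take the infimum over $g_\theta \in \Gg$ on the right. Finally, taking expectations over the sample $X_1, \dots, X_n$ (noting that $\inf_{g_\theta \in \Gg} W_1((g_\theta)_\sharp \rho, Q)$ is a deterministic quantity and can be pulled out of the expectation) yields the claimed bound \eqref{eq:biasvariance}.

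There is no real obstacle here; the whole argument is a double application of the triangle inequality combined with the ERM property of $\hat{g}_n$. The only point meriting a line of care is the order of infimum and expectation, which is handled by first fixing $g_\theta$, then taking expectation, and only then taking the infimum over the (data-independent) approximation class $\Gg$.
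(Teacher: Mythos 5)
Your proof is correct and follows essentially the same route as the paper: triangle inequality through $Q_n$, the ERM property of $\hat{g}_n$, a second triangle inequality, and then passing to the infimum and expectation (the order-of-operations point you flag is handled the same way, since the approximation error term is deterministic). No gaps.
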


\begin{proof} Recalling the definition of $\hat{g}_n$ as the empirical risk minimizer, we compute
    \begin{align*}
\E W_1((\hat{g}_n)_{\sharp}\rho, Q)
    &\leq \E W_1((\hat{g}_n)_{\sharp}\rho, Q_n) + \E W_1(Q_n, Q) \\
    &= \E \inf_{g_\theta \in \Gg} W_1((g_\theta)_{\sharp}\rho, Q_n) + \E W_1(Q_n, Q)\\
    &\leq \E \inf_{g_\theta \in \Gg} W_1((g_\theta)_{\sharp}\rho, Q) + 2\E W_1(Q_n, Q) \\
\end{align*}
since $W_1((\hat{g}_n)_\sharp \rho, Q_n) = \inf_{g_\theta \in \mathcal{G}_{\rm NN}} W_1((g_\theta)_\sharp \rho, Q_n)$ from \eqref{emprisk}.

\end{proof}

In the right hand side of \eqref{eq:biasvariance}, the first term is the approximation error and the second term is the statistical error. This naturally decomposes the problem into two parts: one controlling the approximation error and the other controlling the statistical error. The bias term can be controlled via Theorem \ref{thm:approx}. To control convergence of the empirical distribution $Q_n$ to $Q$ we leverage the existing theory \citep{Weed2019SharpAA} to obtain the following lemma. 

\begin{lemma}
\label{lemma:statconverge}
Under the same assumptions of Theorem \ref{thm:stat}, for all $\delta > 0$, $\exists C_\delta > 0$ such that 
\begin{equation}
\label{eq:statrate}
\E \left[W_1(Q, Q_n) \right] \leq C_\delta n^{-\frac{1}{d + \delta}}.
\end{equation}
\end{lemma}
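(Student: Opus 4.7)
The plan is to invoke the general result on empirical Wasserstein convergence from \cite{Weed2019SharpAA}, which bounds $\E W_1(Q, Q_n)$ in terms of an intrinsic notion of dimension called the upper Wasserstein dimension $d_1^*(Q)$. The key statement we rely on is that whenever $s > d_1^*(Q)$, one has $\E W_1(Q, Q_n) \lesssim n^{-1/s}$, with a constant depending on $s$, on moments of $Q$, and on the diameter of its support, but independent of the ambient dimension $D$. Once the dimension bound $d_1^*(Q) \leq d$ is established, setting $s = d + \delta$ for any $\delta > 0$ yields the claimed rate with constant $C_\delta$.

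Verifying $d_1^*(Q) \leq d$ reduces to a covering estimate on the support of $Q$. Since $\Mm$ is a compact $d$-dimensional Riemannian manifold with positive reach $\tau$, standard volume-comparison arguments (analogous to the geodesic-ball cover used in the proof of Lemma~\ref{lemma:oracleappx}) give an $\epsilon$-cover of $\Mm$ by $O(\epsilon^{-d})$ geodesic balls whenever $\epsilon < \pi\tau$, and each such geodesic ball is contained in a Euclidean ball of comparable radius under the isometric embedding. Because $q$ is bounded above, the mass of $Q$ on any Borel set is controlled by the volume measure, and so this same cover witnesses the scaling $\log N_\epsilon \lesssim d\log(1/\epsilon)$ required in the definition of $d_1^*(Q)$ in \cite{Weed2019SharpAA}. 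This gives $d_1^*(Q) \leq d$, after which applying their theorem with $s = d+\delta$ yields the desired bound with $C_\delta$ independent of $n$ and $D$.

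The only nontrivial step is carefully matching the manifold covering estimate against the (somewhat involved) definition of upper Wasserstein dimension in \cite{Weed2019SharpAA}; everything else is either a standard Riemannian volume-comparison computation or a direct citation. If one preferred a fully self-contained argument, one could instead cover $\Mm$ by $O(\eta^{-d})$ geodesic patches, use a classical $n^{-1/d}$ empirical convergence rate for each local $d$-dimensional piece via a dyadic partitioning of the corresponding tangent chart, and then aggregate through the triangle inequality and linearity of $W_1$ in the target; optimizing $\eta$ would again yield $n^{-1/(d+\delta)}$. Either route delivers the same rate and, crucially, depends only on the intrinsic dimension $d$ rather than on $D$, which is what makes the statistical bound in Theorem~\ref{thm:stat} escape the curse of ambient dimensionality.
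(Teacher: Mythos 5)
Your proposal is correct and rests on the same key citation as the paper, namely Theorem 1 of \cite{Weed2019SharpAA}, but it bridges the intrinsic/ambient gap in a slightly different way. The paper applies Weed--Bach intrinsically, to $Q$ viewed as a measure on the metric space $(\Mm, d_\Mm)$, which gives $\E W_1^{\Mm}(Q,Q_n) \leq C_\delta n^{-1/(d+\delta)}$ essentially for free since $\Mm$ is $d$-dimensional; it then transfers this to the Euclidean Wasserstein distance via the inclusion $\Lip_1(\R^D) \subseteq \Lip_1(\Mm)$, which holds because the isometric embedding gives $\|x-y\|_{\R^D} \leq d_\Mm(x,y)$, so $W_1(Q,Q_n) \leq W_1^{\Mm}(Q,Q_n)$. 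You instead stay in the ambient Euclidean metric throughout and verify the hypothesis of Weed--Bach directly, bounding the upper Wasserstein dimension $d_1^*(Q) \leq d$ via an $O(\epsilon^{-d})$ covering of the compact, positive-reach manifold (the density upper bound is not even needed for this, since the Minkowski-type covering bound concerns only the support). Both routes are sound and give the same constant structure; the paper's version avoids unpacking the definition of $d_1^*$ at the cost of the (easy) Lipschitz-class comparison, while yours makes the covering estimate explicit, which is the content the paper implicitly delegates to \cite{Weed2019SharpAA}. Your proposed ``fully self-contained'' alternative via local dyadic partitioning is plausible but would need care when aggregating the local errors (the patch-selection frequencies are themselves random), so the citation-based route is the cleaner one, and it is what both you and the paper ultimately use.
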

\begin{proof}[Proof of Lemma \ref{lemma:statconverge}]
Let $\delta > 0$. Consider the manifold $\Mm$ with the geodesic distance as a metric space. When \cite[Theorem 1]{Weed2019SharpAA} is applied to $\Mm$ with the geodesic distance, we have that $$\E \left[W^{\Mm}_1(Q, Q_n) \right] \leq  C_\delta n^{-\frac{1}{d + \delta}}$$ for some constant $C_\delta$ independent of $n$. Here, $W^\Mm_1$ is the $1$-Wasserstein distance on $\Mm$ with the geodesic distance. It suffices to show that $$W_1^{\R^D}(Q, Q_n) = W_1(Q, Q_n) \leq W_1^{\Mm}(Q, Q_n).$$ Let $\Lip_1(\R^D)$ and $\Lip_1(\Mm)$ denote the set of $1$-Lipschitz functions defined on $\Mm$ with respect to the Euclidean distance on $\R^D$ and geodesic distance on $\Mm$ respectively. But note that $\Lip_1(\R^D) \subseteq \Lip_1(\Mm)$ because for any $f \in \Lip_1(\R^D)$ we have
\begin{align*}
     \frac{|f(x)-f(y)|}{\|x-y\|_{\Mm}} \leq \frac{|f(x)-f(y)|}{\|x-y\|_{\R^D}} \leq 1
\end{align*} as $\|x-y\|_{\R^D} \leq \|x-y\|_{\Mm}$ under an isometric embedding and hence $f \in \Lip_1(\Mm)$. Thus
 $$
    \E \left[W_1(Q, Q_n)\right] \leq \E \left[W_1^{\Mm}(Q, Q_n)\right] \leq C_\delta n^{-\frac{1}{d + \delta}}.
$$
\end{proof}

Finally, we prove our statistical estimation result in Theorem \ref{thm:stat}.

\begin{proof}[Proof of Theorem \ref{thm:stat}]
    Choose $\delta > 0$. Recall from Lemma \ref{lemma:biasvariance} we have
\begin{align*}
    W_1((\hat{g}_n)_{\sharp}\rho, Q) \leq \E \inf_{g_\theta \in \Gg} W_1((g_\theta)_{\sharp}\rho, Q) + 2\E W_1(Q_n, Q)
\end{align*} The first term is the approximation error which can be controlled within an arbitrarily small accuracy $\epsilon$. Theorem \ref{thm:approx} shows the existence of a neural network function $g_\theta$ with $O\left(\log\left(\frac{1}{\epsilon}\right)\right)$ layers and $O(D\epsilon^{-d/\alpha}\log(\frac{1}{\epsilon}))$ neurons such that $W_1((g_\theta)_\sharp \rho, Q) \leq \epsilon $ for any $\epsilon > 0$. We choose $\epsilon = n^{-\frac{1}{d+\delta}}$ to optimally balance the approximation error and the statistical error. The second term is the statistical error for which we recall from Lemma \ref{lemma:statconverge} that $\E \left[ W_1(\hat{Q}_n,Q) \right] \leq C_\delta n^{-\frac{1}{d+\delta}}$ for some constant $C_\delta$. 

Thus  we have
\begin{align*}
    \E W_1((\hat{g}_n)_\sharp \rho, Q) \leq n^{-\frac{1}{d+\delta}} + 2C_\delta  n^{-\frac{1}{d+\delta}} =  C n^{-\frac{1}{d+\delta}}
\end{align*} 
by setting $C = 1 + 2C_\delta$. This concludes the proof.
\end{proof}

\subsection{Controlling the noisy samples}
\label{SubsecNoisy}
In the noisy setting, we are given $n$ noisy i.i.d. samples $\hat{X}_1,...,\hat{X}_n$ of the form $\hat{X}_i = X_i + \xi_i$, for $X_i \sim Q$ and $\xi_i$ distributed according to some noise distribution. The optimization in \eqref{emprisk} is performed with the noisy empirical distribution $\hat{Q}_n = \frac{1}{n}\sum_{i=1}^n \delta_{\hat{X}_i}$. 
\begin{lemma}
\label{lemma:noisybiasvariance}
Under the same assumptions of Theorem \ref{thm:stat} and in the noisy setting,  we have
\begin{align}
\label{eq:biasvariancenoise}
    \E W_1((\hat{g}_n)_{\sharp}\rho, Q) \leq \inf_{g_{\theta} \in \Gg }W_1((g_{\theta})_{\sharp}\rho, Q) + 2 \E W_1(Q_n,Q) + 2\E W_1(\hat{Q}_n, Q_n)
\end{align} where $\hat{Q}_n$ is the noisy empirical distribution and $Q_n$ is the clean empirical distribution.
\end{lemma}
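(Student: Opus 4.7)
The plan is to mirror the clean-case argument in Lemma \ref{lemma:biasvariance}, inserting one extra triangle-inequality step to absorb the discrepancy between the noisy empirical measure $\hat Q_n$ (which is what $\hat g_n$ actually minimizes against in the noisy setting) and the clean empirical measure $Q_n$. The key structural fact I will exploit is that, by hypothesis, $\hat g_n$ is now defined as $\hat g_n = \argmin_{g_\theta \in \Gg} W_1((g_\theta)_\sharp \rho, \hat Q_n)$, so the natural starting point is to route through $\hat Q_n$ rather than $Q_n$.

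First I would apply the Wasserstein-1 triangle inequality to split
\[
W_1((\hat g_n)_\sharp \rho, Q) \le W_1((\hat g_n)_\sharp \rho, \hat Q_n) + W_1(\hat Q_n, Q),
\]
and then use the defining optimality of $\hat g_n$ to rewrite the first summand as $\inf_{g_\theta \in \Gg} W_1((g_\theta)_\sharp \rho, \hat Q_n)$. A second triangle inequality, this time inside the infimum, gives for every fixed $g_\theta \in \Gg$
\[
W_1((g_\theta)_\sharp \rho, \hat Q_n) \le W_1((g_\theta)_\sharp \rho, Q) + W_1(Q, \hat Q_n),
\]
and since the second term here does not depend on $g_\theta$ I can pull it out of the infimum. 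Taking expectations and combining yields
\[
\E W_1((\hat g_n)_\sharp \rho, Q) \le \inf_{g_\theta \in \Gg} W_1((g_\theta)_\sharp \rho, Q) + 2\,\E W_1(\hat Q_n, Q).
\]

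To finish, I would perform one final triangle inequality at the level of measures,
\[
W_1(\hat Q_n, Q) \le W_1(\hat Q_n, Q_n) + W_1(Q_n, Q),
\]
take expectations, and substitute into the bound above. This immediately produces \eqref{eq:biasvariancenoise}. Since the argument consists entirely of repeated applications of the triangle inequality for $W_1$ combined with the definition of the empirical risk minimizer in the noisy setting, I do not anticipate any real obstacle; the only subtle point is the bookkeeping of which minimization is against $\hat Q_n$ (namely, the one defining $\hat g_n$) versus $Q_n$ (which only appears at the very end when we decouple noise from finite-sample fluctuation).
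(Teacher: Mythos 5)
Your proposal is correct and follows essentially the same route as the paper's proof: triangle inequality through $\hat Q_n$, the optimality of $\hat g_n$ as the minimizer against $\hat Q_n$, and a further triangle inequality splitting $W_1(\hat Q_n, Q)$ into $W_1(\hat Q_n, Q_n) + W_1(Q_n, Q)$. The only difference is a harmless reordering of when the final split is performed (the paper splits one copy earlier), so there is nothing substantive to add.
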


\begin{proof} Recalling the definition of $\hat{g}_n$ as the empirical risk minimizer, we compute
    \begin{align*}
\E W_1((\hat{g}_n)_{\sharp}\rho, Q)
    &\leq \E W_1((\hat{g}_n)_{\sharp}\rho, \hat{Q}_n) + \E W_1(\hat{Q}_n, Q) \\
    &\leq \E \inf_{g_\theta \in \Gg} W_1((g_\theta)_{\sharp}\rho, \hat{Q}_n) + \E W_1(Q_n, Q) + \E W_1(\hat{Q}_n, Q_n)\\
    &\leq \E \inf_{g_\theta \in \Gg} W_1((g_\theta)_{\sharp}\rho, Q) + 2\E W_1(Q_n, Q) + 2\E W_1(\hat{Q}_n, Q) \\
\end{align*}
since $W_1((\hat{g}_n)_\sharp \rho, \hat{Q}_n) = \inf_{g_\theta \in \mathcal{G}_{\rm NN}} W_1((g_\theta)_\sharp \rho, \hat{Q}_n)$ from \eqref{emprisk}.

\end{proof}

\begin{lemma}\label{lemma:noiseconverge}
    Write $ W_1(\hat{Q}_n, Q_n) = W_1^{\R^D}(\hat{Q}_n, Q_n) $. In the noisy setting, we express $\hat{X}_i = X_i + \xi_i$ where $X_i$ is drawn from $Q$ and then noised with $\xi_i$ drawn from some noise distribution. Then 
    \begin{align*}
        \mathbb{E}[W_1(Q_n, \hat{Q}_n)] \leq \sqrt{V_\xi}
    \end{align*} where $V_\xi = \E\|\xi\|_2^2$ which is the variance of the noise.
\end{lemma}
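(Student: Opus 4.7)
The plan is to bound $W_1(Q_n, \hat{Q}_n)$ by exhibiting an explicit coupling between the two empirical distributions, namely the natural one that pairs each clean sample $X_i$ with its noisy counterpart $\hat{X}_i = X_i + \xi_i$. Since both $Q_n$ and $\hat{Q}_n$ assign mass $1/n$ to each of their $n$ atoms, the measure $\pi = \frac{1}{n}\sum_{i=1}^n \delta_{(X_i, \hat{X}_i)}$ on $\R^D \times \R^D$ is a valid transport plan with marginals $Q_n$ and $\hat{Q}_n$.

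First I would recall the primal formulation of the Wasserstein-1 distance: for any coupling $\pi$ with the correct marginals,
\begin{align*}
W_1(Q_n, \hat{Q}_n) \leq \int \|x - y\|_2 \, d\pi(x,y).
\end{align*}
Plugging in the coupling above gives $W_1(Q_n, \hat{Q}_n) \leq \frac{1}{n}\sum_{i=1}^n \|X_i - \hat{X}_i\|_2 = \frac{1}{n}\sum_{i=1}^n \|\xi_i\|_2$. Taking expectation with respect to the randomness in the $X_i$'s and the $\xi_i$'s and using that the $\xi_i$'s are identically distributed yields
\begin{align*}
\E[W_1(Q_n, \hat{Q}_n)] \leq \E\|\xi\|_2.
\end{align*}

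Finally I would apply Jensen's inequality to the concave function $t \mapsto \sqrt{t}$ to conclude $\E\|\xi\|_2 \leq \sqrt{\E\|\xi\|_2^2} = \sqrt{V_\xi}$, which gives the desired bound. There is no real obstacle here: the coupling argument is immediate once one observes the pairing structure between samples, and the only non-trivial step is the concavity trick to convert a first moment bound into the claimed square-root-of-variance bound. The argument does not require any assumption on the noise distribution beyond having a finite second moment.
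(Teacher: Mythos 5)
Your proof is correct, and it reaches the same intermediate bound as the paper, namely $\E W_1(Q_n,\hat{Q}_n) \le \E\|\xi\|_2 \le \sqrt{V_\xi}$, but via the primal (coupling) formulation rather than the dual one. The paper works directly with the Kantorovich--Rubinstein dual form used throughout (equation \eqref{eq:wasserstein-1}): for any $1$-Lipschitz $f$, $\frac{1}{n}\sum_i \bigl(f(X_i+\xi_i)-f(X_i)\bigr) \le \frac{1}{n}\sum_i \|\xi_i\|_2$, then takes the supremum, the expectation, and Jensen. You instead exhibit the natural coupling $\pi=\frac{1}{n}\sum_i \delta_{(X_i,\hat{X}_i)}$ and bound $W_1$ by its transport cost, which requires invoking (the easy direction of) Kantorovich duality to reconcile with the paper's dual definition of $W_1$ --- that direction is immediate, since for $1$-Lipschitz $f$ one has $\int f\,dQ_n - \int f\,d\hat{Q}_n = \int (f(x)-f(y))\,d\pi \le \int \|x-y\|_2\,d\pi$, so there is no gap. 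The coupling viewpoint is arguably the more transparent one (the pairing structure of clean and noisy samples is made explicit), while the paper's argument has the minor advantage of never leaving the IPM formulation it adopts as the definition of $W_1$; both use the identical Jensen step at the end and require only a finite second moment on the noise.
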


\begin{proof}
    Let $\hat{X}_i, X_i$ be samples defining $\hat{Q}_n, Q_n$  respectively. We have $\hat{X}_i = X_i + \xi_i$ where $\xi$ is the noise term. Compute
    \begin{align*}
       \E W_1(\hat{Q}_n, Q_n) &= \E\sup_{f \in \Lip_1(\R^D)}\hat{Q}_n(f)-Q_n(f) = \E\sup_{f \in \Lip_1(\R^D)}\frac{1}{n}\sum_{i=1}^n f(\hat{X}_i) - f(X_i) \\
        &\leq \E\sup_{f \in \Lip_1(\R^D)}\frac{1}{n}\sum_{i=1}^n |f(\hat{X}_i) - f(X_i)| = \E\sup_{f \in \Lip_1(\R^D)}\frac{1}{n}\sum_{i=1}^n |f(X_i + \xi_i) - f(X_i)| \\
        &\leq  \E\sup_{f \in \Lip_1(\R^D)}\frac{1}{n}\sum_{i=1}^n \|\xi_i\|_2 = \E \|\xi\|_2 \leq \sqrt{V_\xi}
    \end{align*} the last line follows from Jensen's inequality.
\end{proof} 

We conclude in the noisy setting that
\begin{align*}
    \E W_1((\hat{g}_n)_\sharp \rho, Q) \leq \epsilon_{\rm appx} +2 C_\delta n^{-\frac{1}{d+\delta}} + 2\sqrt{V} \leq Cn^{-\frac{1}{d+\delta}} + 2 \sqrt{V_\xi}
\end{align*} after balancing the approximation error $\epsilon_{\rm appx}$ appropriately.

\section{Conclusion}

We have established approximation and statistical estimation theories of deep generative models for estimating distributions on a low-dimensional manifold.
The statistical convergence rate in this paper depends on the intrinsic dimension of data. 
In light of the manifold hypothesis, which suggests many natural datasets lie on low dimensional manifolds, our theory rigorously explains why deep generative models defy existing theoretical sample complexity estimates and the curse of dimensionality. In fact, deep generative models are able to learn low-dimensional geometric structures of data, and allow for highly efficient sample complexity independent of the ambient dimension. Meanwhile the size of the required network scales 
exponentially with the intrinsic dimension.

Our theory imposes very little assumption on the target density $Q$, requiring only that it admit a density $q$ with respect to the volume measure and that $q$ is upper and lower bounded. In particular we make no smoothness assumptions on $q$. This is practical, as we do not expect existing natural datasets to exhibit high degrees of smoothness.

In this work we assume access to computation of the $W_1$ distance. However during GAN training a discriminator is trained for this purpose. It would be of interest for future work to investigate the low-dimensional role of such discriminator networks which approximate the $W_1$ distance in practice. 

Additionally, we provide an alternative approach to construct the oracle transport by decomposing the manifold into Voronoi cells and transporting the easy-to-sample distribution onto each disjoint cell directly in Appendix \ref{appsecvoronoi}.

\bibliographystyle{plainnat}

\bibliography{refs}

\begin{appendix}

\section{Deep ReLU Approximation of H\"older functions}
\label{sec:appendixapprox}
In this section, $\log$ denotes the base 2 logarithm by default. $\bigtimes_{i=1}^d$ denotes the Cartesian product of $d$ sets. The goal is to determine the approximation rate of deep ReLU networks for H\"older continuous functions. Let $f \in C^\alpha([0,1]^d)$ with H\"older norm $\| f \|_{C^\alpha}$ where $\alpha \in (0, 1)$. We first approximate $f$ by a piecewise constant function $f^n$ in Section \ref{sec:appendixpiece}, and then approximate $f^n$ by a deep ReLU Network $\Phi$ in Section \ref{sec:appendixtogether}.

\subsection{Piecewise constant approximation}
\label{sec:appendixpiece}
Let $[n]^d = \{ (a_1, a_2, \dots, a_d) :  a_i \in \mathbb{N}, 1 \leq a_i \leq n \}$. Given any $n \in \mathbb{N}$, we cover $[0,1]^d$ by $n^d$ non-overlapping open cubes. For any $(k_1, \dots, k_d) = \vec{k} \in [n]^d$, we define 
\begin{equation}
\label{eq:Qk}
Q_{\vec{k}} = \bigtimes_{i=1}^d \left(\frac{k_i - 1}{n}, \frac{k_i}{n}\right).
\end{equation}
\begin{lemma}
\label{lemma:pwconstant}
Let $f \in C^\alpha([0,1]^d)$ with H\"older norm $\| f \|_{C^\alpha}$. For any $n \in \mathbb{N}$, define 
\[f^n(x) = \sum_{k \in [n]^d} \left( n^d \int_{Q_{\vec{k}}} f(y) \, dy \right) \indicator_{Q_{\vec{k}}}(x).  \]
Then 
\[\|f - f^n \|_{L^1} < \| f \|_{C^\alpha} \frac{d^{\alpha / 2}}{n^\alpha}.\]
\end{lemma}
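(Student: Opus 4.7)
The function $f^n$ is simply the projection of $f$ onto the space of functions constant on each cube $Q_{\vec k}$, where the constant is the average of $f$ on $Q_{\vec k}$. So the natural approach is a pointwise bound followed by integration, using H\"older continuity inside each cube and the fact that each cube has diameter $\sqrt{d}/n$.

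Concretely, I would fix an arbitrary $\vec k \in [n]^d$ and a point $x \in Q_{\vec k}$. Since $f^n(x) = n^d \int_{Q_{\vec k}} f(y)\, dy$ and $\mathrm{vol}(Q_{\vec k}) = n^{-d}$, I can write $f(x) = n^d \int_{Q_{\vec k}} f(x)\, dy$ and subtract to get
\[
|f(x) - f^n(x)| = \Bigl| n^d \int_{Q_{\vec k}} (f(x) - f(y))\, dy \Bigr| \leq n^d \int_{Q_{\vec k}} |f(x) - f(y)|\, dy.
\]
Now I apply H\"older continuity: $|f(x) - f(y)| \leq \|f\|_{C^\alpha} \|x-y\|^\alpha$, and since both $x, y$ lie in the cube $Q_{\vec k}$ of side length $1/n$, we have $\|x-y\| \leq \sqrt{d}/n$, hence $\|x-y\|^\alpha \leq d^{\alpha/2}/n^\alpha$. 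Plugging in and integrating (the volume factor $n^{-d}$ cancels $n^d$) yields the uniform pointwise bound $|f(x) - f^n(x)| \leq \|f\|_{C^\alpha}\, d^{\alpha/2}/n^\alpha$ on each open cube.

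Finally, I note that the collection $\{Q_{\vec k}\}_{\vec k \in [n]^d}$ covers $[0,1]^d$ up to a measure-zero set (the grid boundary), so integrating the pointwise bound over $[0,1]^d$ gives
\[
\|f - f^n\|_{L^1} = \sum_{\vec k \in [n]^d} \int_{Q_{\vec k}} |f(x) - f^n(x)|\, dx \leq \|f\|_{C^\alpha}\, \frac{d^{\alpha/2}}{n^\alpha},
\]
which is the claim. There is essentially no obstacle here; the only care needed is to note that the cubes are open and thus disjoint but still fill $[0,1]^d$ up to a null set, so no mass is lost and $f^n$ is well-defined almost everywhere. (If one wanted a strict inequality as written, one can observe that $\|x-y\| < \sqrt{d}/n$ whenever $x$ and $y$ are not at opposite corners, and the corner set has measure zero.)
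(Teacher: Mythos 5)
Your proposal is correct and follows essentially the same argument as the paper: both bound $|f(x)-f(y)|$ by $\|f\|_{C^\alpha} d^{\alpha/2}/n^\alpha$ for $x,y$ in the same cube $Q_{\vec k}$ (diameter $\sqrt{d}/n$) and then integrate over the grid, the only cosmetic difference being that you establish a uniform pointwise bound first while the paper bounds the double integral over $Q_{\vec k}\times Q_{\vec k}$ directly. Your remark about the measure-zero grid boundary and the strictness of the inequality is a fine addition but changes nothing essential.
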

\begin{proof}
We estimate
\begin{align*}
\| f - f^n \|_{L^1}
    &= \int \left| f(x) - f^n(x) \right| \, dx \\
    &= \int \left|\sum_{k \in [n]^d} \left( n^d \int_{Q_{\vec{k}}} f(x) \, dy \right)\indicator_{Q_{\vec{k}}}(x) - \sum_{k \in [n]^d} \left( n^d \int_{Q_{\vec{k}}} f(y) \, dy \right) \indicator_{Q_{\vec{k}}}(x) \right| \, dx \\
    &= n^d \int \left|\sum_{k \in [n]^d} \left(\int_{Q_{\vec{k}}} (f(x) - f(y)) \, dy \right)\indicator_{Q_{\vec{k}}}(x) \right| \, dx \\
    &\leq n^d \sum_{k \in [n]^d} \int \indicator_{Q_{\vec{k}}}(x) \int_{Q_{\vec{k}}} |f(x) - f(y)| \, dy \, dx \\
    &= n^d \sum_{k \in [n]^d} \int_{Q_{\vec{k}}} \int_{Q_{\vec{k}}} |f(x) - f(y)| \, dy \, dx \\
    &\leq n^d \sum_{k \in [n]^d} \int_{Q_{\vec{k}}} \int_{Q_{\vec{k}}} \|f\|_{C^\alpha} \frac{d^{\alpha/2}}{n^\alpha} \, dy \, dx \\
    &= \|f\|_{C^\alpha} \frac{d^{\alpha/2}}{n^\alpha} n^d \sum_{k \in [n]^d} \frac{1}{n^{2d}} = \|f\|_{C^\alpha} \frac{d^{\alpha/2}}{n^\alpha}.
\end{align*}

where we use crucially use the fact that $\sup\limits_{x, y \in Q_{\vec{{k}}}} |f(x) - f(y)| \leq \|f\|_{C^\alpha}  \sup\limits_{x, y \in Q_{\vec{{k}}}} |x-y|^\alpha = \|f\|_{C^\alpha} \frac{d^{\alpha/2}}{n^\alpha}$.
\end{proof}

\subsection{Neural network approximation}
\label{sec:appendixnetwork}
We start with the well-known result originally stated in \cite{Yarotsky2017ErrorBF}.

\begin{lemma}
\label{lemma:times}
Let $A > 0$. For any $\epsilon \in (0, A^2)$, there is a ReLU network which implements a function $\tilde{\times}: \R^2 \rightarrow \R$ such that $$\sup\limits_{|x| \leq A, |y| \leq A} \left| \tilde{\times}(x, y) - xy \right| = \epsilon.$$ This network has depth at most $c \log\left(\frac{A^2}{\epsilon} \right)$, width at most $8$, and weights bounded by $A$ (where $c$ is an absolute constant).
\end{lemma}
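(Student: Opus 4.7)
The plan is to reduce multiplication to squaring via the polarization identity
\[
xy \;=\; \tfrac{1}{4}\bigl((x+y)^2 - (x-y)^2\bigr),
\]
and then to approximate $t\mapsto t^2$ using Yarotsky's classical sawtooth construction. Once a ReLU subnetwork $\tilde{s}$ for squaring is in hand, two parallel copies of $\tilde{s}$ (evaluated at $x+y$ and $x-y$) combined by an affine output layer implementing the polarization identity will give $\tilde{\times}$.

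To build the squaring block, I would first rescale the input to $[0,1]$ by $u = (t+2A)/(4A)$, which sends $t\in[-2A,2A]$ (the range of $x\pm y$ when $|x|,|y|\le A$) into $[0,1]$. On $[0,1]$, define the tent function
\[
g(u) = 2\sigma(u) - 4\sigma(u-\tfrac{1}{2}) + 2\sigma(u-1),
\]
its $k$-fold self-composition $g_k = g\circ\cdots\circ g$, and the approximant
\[
\phi_m(u) \;=\; u \;-\; \sum_{k=1}^{m} \frac{g_k(u)}{4^{k}},
\]
which is the piecewise-linear interpolant of $u^2$ at the dyadic grid $\{j/2^m\}_{j=0}^{2^m}$ and satisfies $\sup_{u\in[0,1]}|\phi_m(u)-u^2|\le 2^{-2m-2}$ by the standard telescoping argument. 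Composing with the affine rescaling and multiplying the output by $(4A)^2 = 16A^2$ then recovers an approximation of $t^2$ on $[-2A,2A]$ with error at most $16A^2\cdot 2^{-2m-2}$. After combining through the polarization formula, triangle inequality gives total error at most $\tfrac{1}{4}\cdot 2 \cdot 16A^2\cdot 2^{-2m-2}=2A^2\cdot 2^{-2m-2}$, so choosing $m = \lceil \tfrac{1}{2}\log_2(A^2/\epsilon)\rceil + O(1)$ yields the desired accuracy~$\epsilon$.

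For the network size, each composition of $g$ contributes one ReLU layer of width $3$, plus a single accumulator channel carrying the running sum $\sum_k g_k(u)/4^k$ and a pass-through for $u$, so the squaring block has depth $O(m)=O(\log(A^2/\epsilon))$ and width at most $4$. Running the two squaring branches (for $x+y$ and $x-y$) in parallel keeps the width at most $8$, and the polarization combiner only adds $O(1)$ to the depth, giving the advertised bound $c\log(A^2/\epsilon)$.

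The main obstacle is the weight constraint $\|W_i\|_\infty\le A$. All the internal sawtooth weights are the fixed constants $\{\pm 1,\pm 2,\pm 4,\pm \tfrac{1}{4}\}$, independent of $\epsilon$; whenever $A\ge 4$ they already respect the bound. For small $A$, any constant $c$ with $|c|>A$ can be written as a product $c = c_1 c_2$ with $|c_i|\le A$ and implemented by splitting one linear map into two consecutive ones (using an identity ReLU gadget $t\mapsto \sigma(t)-\sigma(-t)$ in between), at the cost of $O(1)$ extra depth. The only genuinely $A$-dependent weights are the input rescaling (slope $1/(4A)\le 1$) and the output scaling by $4A^2$ from the combined normalization and polarization, which I would distribute as $A\cdot 4A$ across the last two affine layers when $A\ge 1$ and absorb into a single sub-unit weight when $A<1$. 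This keeps every weight below $A$ while preserving the asymptotic depth and width bounds.
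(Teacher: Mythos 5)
Your construction is, at its core, the same one the paper relies on: the paper's ``proof'' of this lemma is a one-line citation to the multiplication network of Chen et al.\ (Appendix A.2), which is itself Yarotsky's sawtooth/polarization construction that you have written out in full. The squaring block, the error bound $\sup_{u\in[0,1]}|\phi_m(u)-u^2|\le 2^{-2m-2}$, the choice $m \asymp \log(A^2/\epsilon)$, and the depth/width accounting (width $4$ per branch, two branches in parallel, depth $O(\log(A^2/\epsilon))$) are all correct, up to a harmless arithmetic slip in the polarization step ($\tfrac14\cdot 2\cdot 16A^2 = 8A^2$, not $2A^2$; the $O(1)$ in your choice of $m$ absorbs this).

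The genuine weak point is your treatment of the weight constraint $\kappa\le A$. The claim that ``any constant $c$ with $|c|>A$ can be written as a product $c=c_1c_2$ with $|c_i|\le A$'' is false whenever $A<\sqrt{|c|}$, and no chain of factors of magnitude at most $A$ can ever produce a constant exceeding $1$ when $A\le 1$; so the proposed repair fails exactly in the regime where it is needed, and the sawtooth coefficients $\pm 2,\pm 4$ (and, for very small $A$, the biases $\tfrac12,1$) genuinely violate the bound. This matters because the lemma is invoked later with $A$ as small as $1$ (e.g.\ inside Lemma \ref{lemma:indicatord} via Lemma \ref{lemma:dtimes} with $M=1$). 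A workable fix is not factorization but duplication: a coefficient $2$ or $4$ leaving a ReLU unit can be realized with weights $\le 1$ by replicating that unit $2$ or $4$ times and summing with unit weights, at the cost of a constant-factor increase in width (so the advertised width $8$ would have to be relaxed to another absolute constant), or alternatively one restricts the statement to $A\ge$ some absolute constant, which is how it is used with $A=M\ge 2$ in the main approximation argument. As written, your proof establishes the lemma for $A$ bounded below (say $A\ge 4$) but not for all $A>0$ with the stated width and weight bounds.
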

\begin{proof}
The result follows from a careful reading of the proof in Appendix A.2 in \cite{Chen2019NonparametricRO}.
\end{proof}

The network given by Lemma \ref{lemma:times} approximates the multiplication of two numbers. We seek an approximation of the multiplication of $d$ numbers, and this is achieved by composing $\tilde{\times}$ with itself.

\begin{lemma}
\label{lemma:dtimes}
Fix $d \in \mathbb{N}$ and let $M > 0$. For any $\epsilon \in (0, M^2)$, there is a ReLU network which implements a function $\tilde{\times}_d : \R^d \rightarrow \R$ such that $$\sup\limits_{|x_1|, \dots, |x_d| \leq M} \left| \tilde{\times}(x_1, \dots, x_d) - x_1  \cdots x_d \right| < \epsilon.$$ This network has depth at most $c_1  \log\left(\frac{d^3 M^d}{\epsilon}\right) + c_2$, width at most $8d$, and weights bounded by $2M$ (where $c_1$ and $c_2$ are absolute constants).
\end{lemma}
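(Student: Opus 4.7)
The plan is to realize $\tilde{\times}_d$ as a balanced binary tree of $\lceil \log_2 d\rceil$ levels, where every internal node is an independent copy of the binary multiplication module $\tilde{\times}$ from Lemma \ref{lemma:times}. A leaf of the tree carries one input $x_i$; an internal node at height $k$ carries the approximation of the product of the $2^k$ leaves below it; the root carries the approximation of $\prod_{i=1}^d x_i$. The widths at each level simply stack in parallel, so the network width is $O(d)$, dominated by the $\leq d/2$ copies of $\tilde{\times}$ sitting at the leaf level.

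The first step is to accommodate the $2M$ weight budget. A naive tree without preprocessing would need to feed magnitudes as large as $M^{2^k}$ into the level-$k$ copy of $\tilde{\times}$, and by Lemma \ref{lemma:times} this would force weights of order $M^{2^k}$, immediately violating the $2M$ bound. I would resolve this by inserting an initial normalization layer that sends $x_i \mapsto y_i = x_i/M \in [-1,1]$ using weights $1/M$ (within $2M$ whenever $M \geq 1/\sqrt{2}$; the very-small-$M$ regime is absorbed by working with $\max(M,1)$). After normalization, every intermediate value in the tree lies in $[-1,1]$ up to approximation error, so I can invoke $\tilde{\times}$ with range parameter $A=2$, yielding per-node depth $c\log(1/\delta)$, width $8$, and weights at most $2 \leq 2M$.

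The technical heart is the propagation of error through the tree. For siblings with true values $a,b$ and computed values $\hat a = a+e_a$, $\hat b = b+e_b$, the single-node error bound $|\tilde{\times}(\hat a,\hat b) - \hat a\hat b| \leq \delta$ expands to
\begin{equation*}
\bigl|\tilde{\times}(\hat a,\hat b) - ab\bigr| \;\leq\; |a|\,|e_b| + |b|\,|e_a| + |e_a|\,|e_b| + \delta.
\end{equation*}
Because $|a|,|b|\leq 1$ inside the normalized tree, the worst-case level-$k$ error $E_k$ satisfies $E_{k+1} \leq 2E_k + E_k^2 + \delta$, which for small $\delta$ solves to $E_K = O(d\,\delta)$ with $K = \lceil\log_2 d\rceil$. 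Thus the tree output approximates $M^{-d}\prod_i x_i$ within $O(d\delta)$.

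To undo the normalization, I would append a rescaling tail that multiplies by $M^d$. A single weight of size $M^d$ is forbidden, so I use $d$ one-layer modules, each implementing $y \mapsto My$ via the sign-splitting identity $y = \max(y,0)-\max(-y,0)$ followed by a weight-$M$ linear combination (width $2$, weight $M\leq 2M$). The overall error is then at most $M^d \cdot E_K = O(dM^d\delta)$, so choosing $\delta = \Theta(\epsilon/(dM^d))$ achieves accuracy $\epsilon$. Substituting $\log(1/\delta) = O(\log(d^3 M^d/\epsilon))$ and summing the $K$ tree levels and $d$ tail layers gives the stated depth bound up to the absolute constants $c_1,c_2$, the width is $O(d)$, and the weight bound $2M$ holds by construction. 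The main obstacle will be the simultaneous juggling of three competing pressures --- the $2M$ weight bound (forbidding large magnitudes into $\tilde{\times}$), the exponential amplification $2^K = \Theta(d)$ coming from the tree recursion, and the $M^d$ factor reintroduced when the normalization is undone --- and it is exactly the normalize-then-rescale scheme that reconciles all three; if the uniform-$\delta$ choice above turns out to miss the stated depth by logarithmic factors, a level-dependent allocation $\delta_k \propto 2^{-(K-k)}$ can be used to sharpen the estimate.
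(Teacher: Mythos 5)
Your construction is correct in its essentials and shares the paper's skeleton: a balanced binary tree of the pairwise gadget $\tilde{\times}$ from Lemma \ref{lemma:times}, with widths stacking in parallel and an error recursion propagated up the $\lceil \log_2 d\rceil$ levels. Where you genuinely diverge is in how the weight bound is enforced. The paper feeds the raw (unnormalized) partial products into the level-$i$ gadget, invoking Lemma \ref{lemma:times} with range parameter $A=M^{2^i}$ and an induction hypothesis of the form ``error at level $i$ is at most $4^{i-1}M^{2^i-2}\delta$''; you instead normalize $x_i\mapsto x_i/M$ up front, run the entire tree at range $A=2$ with a uniform per-node $\delta$ and the clean recursion $E_{k+1}\le 2E_k+E_k^2+\delta$, and undo the normalization with a rescaling tail multiplying by $M^d$. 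Your route buys uniformly $O(1)$ weights inside every multiplication gadget (arguably more faithful to the stated $2M$ bound than the paper's own step, which if one reads Lemma \ref{lemma:times} literally yields weights of size $A=M^{2^i}$ at level $i$), at the price of the tail: $d$ extra layers each multiplying by $M$ is within the stated depth budget when $\log M$ is bounded below (e.g.\ $M\ge 2$, the regime actually used downstream in Lemma \ref{lemma:main}), but as $M\downarrow 1$ the term $d$ is not dominated by $c_1\log(d^3M^d/\epsilon)$ with absolute constants, so you should multiply by $2M$ per tail layer (giving roughly $d\log M/\log(2M)$ layers) or absorb this case separately; similarly your $\max(M,1)$ patch for the normalization weight still violates $2M$ when $M<1/2$, an edge case the paper itself also quietly ignores (its ``$M+1<2M$'' step needs $M>1$). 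Finally, note that your tree depth, $\lceil\log_2 d\rceil\cdot c\log(dM^d/\epsilon)$, carries a $\log d\cdot\log(1/\epsilon)$ cross term that is not literally of the form $c_1\log(d^3M^d/\epsilon)+c_2$ with absolute constants; this slack is shared with the paper's own computation (whose bound before the last simplification is $c\log(2d)\log((2d)^2M^d/\epsilon)$) and is harmless for the downstream results, where constants may depend on $d$.
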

\begin{proof}
Our idea is to realize the multiplication in a binary tree structure, illustrated in Figure \ref{fig:timesnet}.
We first assume that $2^{k-1} < d \leq 2^k$ for some integer $k$, and let $\delta = \frac{\epsilon}{4^{k-1} M^{2^k - 2}}$. We first handle the case that $d = 2^k$. We will construct a family of $k$ functions $\left\{ \tilde{\times}_{2^i} : \R^{2^i} \rightarrow \R \right\}_{i=1}^k$ iteratively. We will show that for all $1 \leq i \leq k$, the function $\tilde{\times}_{2^i}$ implements $2^i$-ary multiplication with error at most $4^{i-1}M^{2^i - 2} \delta$ (when $|x_i| < M$), at most $c \log\left( \frac{M^{2^{i+1} - 2}}{\delta^i} \right)$ layers, width at most $4\cdot2^i$, and weights bounded by $M$.

For $i = 1$, we define $\tilde{\times}_2$ to be the function defined in Lemma \ref{lemma:times} with the parameters $\epsilon = \delta$ and $A = M$. Then $\tilde{\times}_2$ has maximum error $\delta = 4^{1-1} M^{2^1 - 2} \delta$ and is implementable by a ReLU network with at most $c \log \left(\frac{A^2}{\epsilon} \right) = c \log \left(\frac{M^2}{\delta} \right) = c \log \left(\frac{M^{2^{1+1} - 2}}{\delta^1}\right)$ layers, width $8 = 4 \cdot 2^1$, and weights bounded by $M$, all as desired.

Now suppose the claim has been proven for $\tilde{\times}_{2^i}$. Let $\tilde{\times}$ be the function defined in Lemma \ref{lemma:times} with the parameters $\epsilon = \delta$ and $A = M^{2^i}$. Then $\tilde{\times}$ has $c \log \left(\frac{A^2}{\epsilon} \right) = c \log \left( \frac{M^{2^{i+1}}}{\delta} \right)$ layers, width 8, and weights bounded by $M$. We define 
\[\tilde{\times}_{2^{i+1}}(x_1, \dots, x_{2^{i+1}} )= \tilde{\times} \left(\tilde{\times}_{2^i}(x_1, \dots, x_{2^i}), \tilde{\times}_{2^i}(x_{2^i + 1}, \dots, x_{2^{i+1}}) \right). \] 

\begin{figure}[t!]
    \centering
    \begin{minipage}{0.45\textwidth}
        \centering
        \includegraphics[width=1.75in]{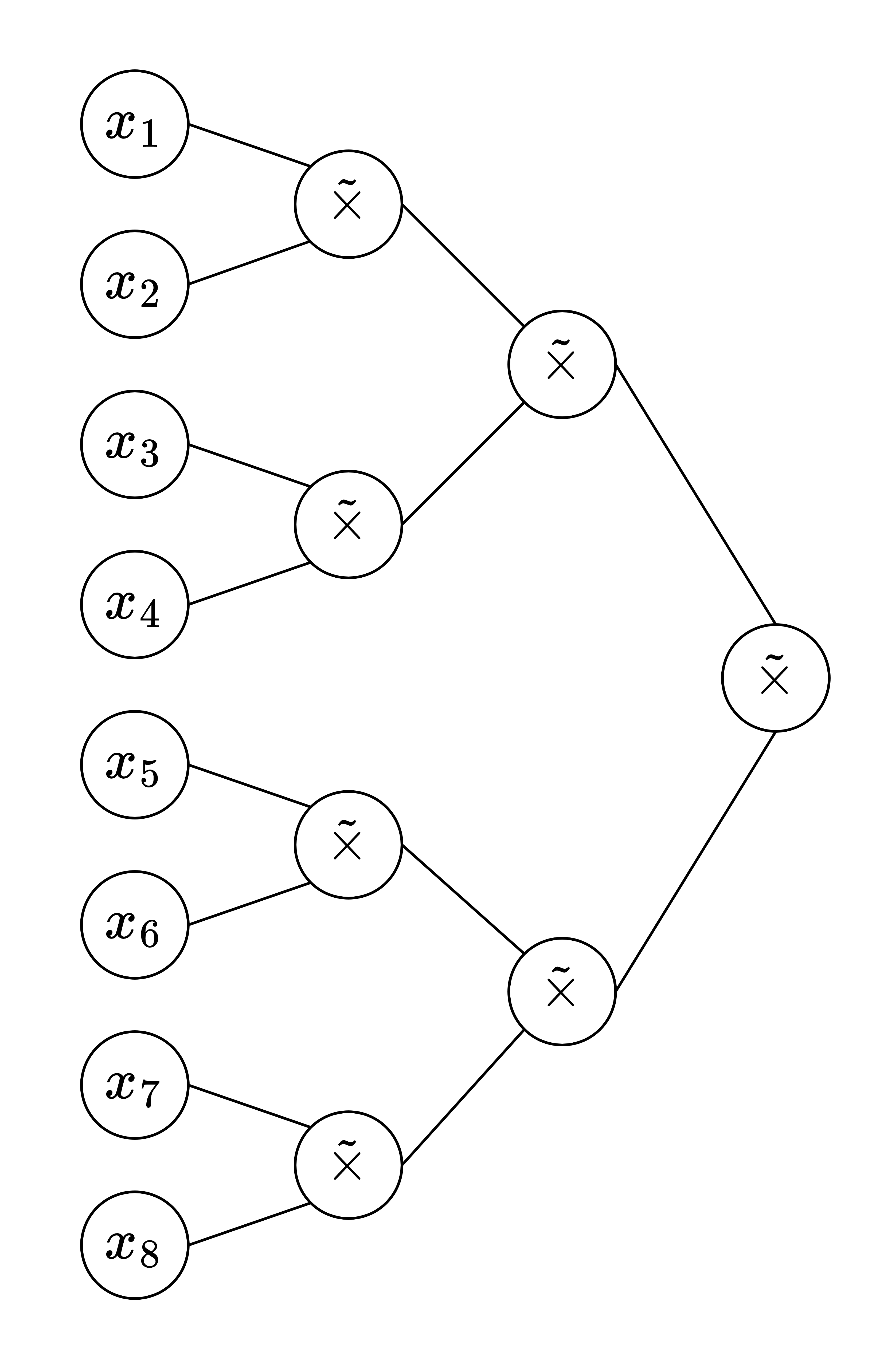}
        \caption*{(a) $\tilde{\times}_8$}
    \end{minipage}
    \begin{minipage}{0.45\textwidth}
        \centering
        \includegraphics[width=1.75in]{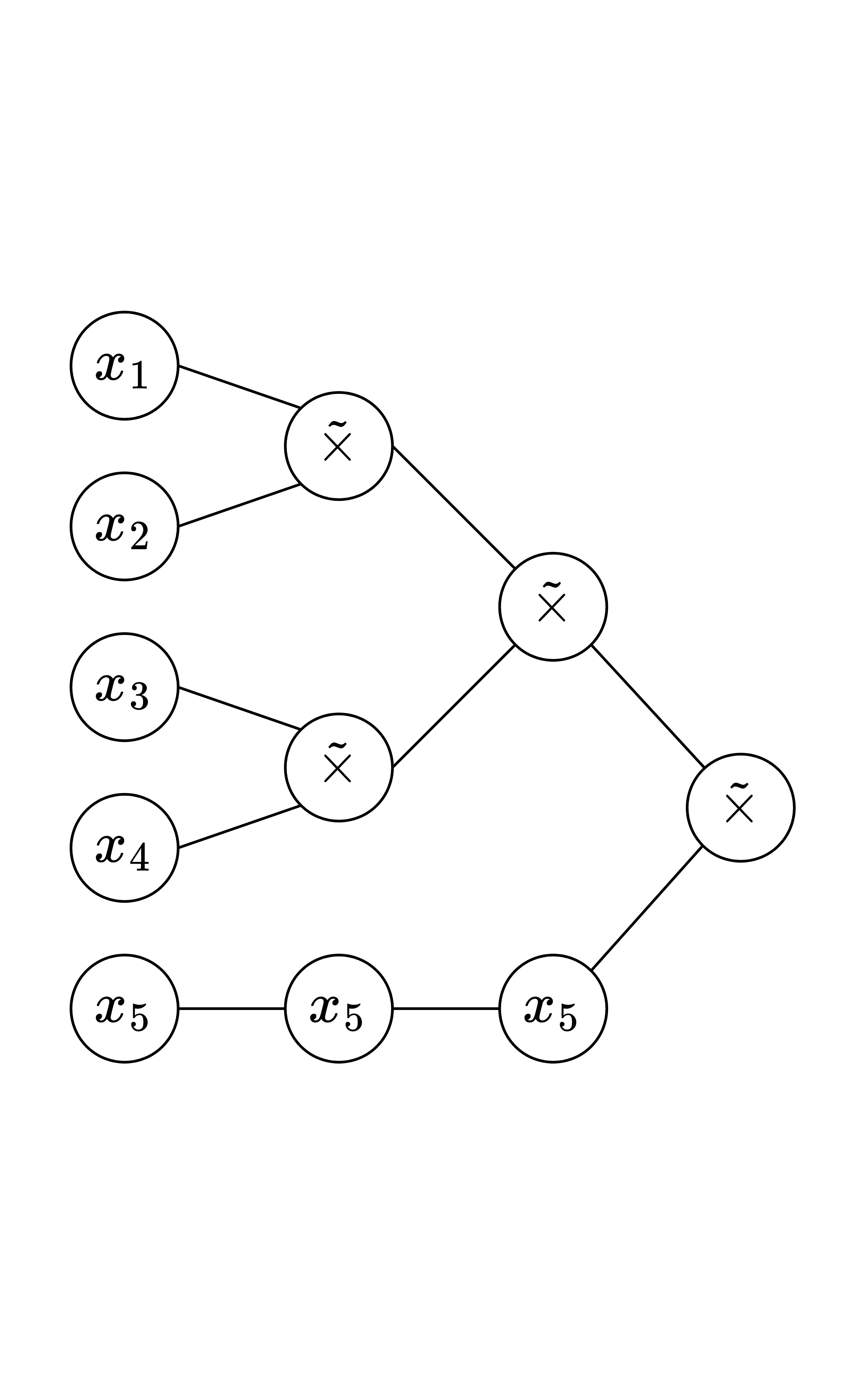}
        \caption*{(b) $\tilde{\times}_5$}
    \end{minipage}
    \caption{Network diagrams for ReLU networks approximating multiplication in Lemma \ref{lemma:dtimes}.}
    \label{fig:timesnet}
\end{figure}

Then $\tilde{\times}_{2^{i+1}}$ has depth $c \log\left( \frac{M^{2^{i+1} - 2}}{\delta^i} \right) + c \log \left( \frac{M^{2^{i+1}}}{\delta} \right) = c \log \left( \frac{M^{2^{i+2} - 2}}{\delta^{i+1}} \right)$, width $4 \cdot 2^i + 4 \cdot 2^i = 4 \cdot 2^{i+1}$, and weights bounded by $M$. It remains to compute the following error bound:

\begin{align*}
&|\tilde{\times}\left(\tilde{\times}_{2^i}(x_1, \dots, x_{2^i}), \tilde{\times}_{2^i}(x_{2^i+1}, \dots, x_{2^{i+1}} ) \right) - x_1 \cdots x_{2^{i+1}}| \\
    &\leq |\tilde{\times}\left(\tilde{\times}_{2^i}(x_1, \dots, x_{2^i}), \tilde{\times}_{2^i}(x_{2^i+1}, \dots, x_{2^{i+1}} ) \right) - \tilde{\times}_{2^i}(x_1, \dots, x_{2^i}) \cdot \tilde{\times}_{2^i}(x_{2^i+1}, \dots, x_{2^{i+1}} )| \\
    & \quad + |\tilde{\times}_{2^i}(x_1, \dots, x_{2^i}) \cdot \tilde{\times}_{2^i}(x_{2^i+1}, \dots, x_{2^{i+1}}) - \tilde{\times}_{2^i}(x_1, \dots, x_{2^i}) \cdot x_{2^i+1} \cdots x_{2^{i+1}}| \\
    & \quad + |\tilde{\times}_{2^i}(x_1, \dots, x_{2^i}) \cdot x_{2^i+1} \cdots x_{2^{i+1}} - x_1 \cdots x_{2^{i+1}}| \\
    & \leq \delta + |\tilde{\times}_{2^i}(x_1, \dots, x_{2^i})| \cdot |\tilde{\times}_{2^i}(x_{2^i+1}, \dots, x_{2^{i+1}}) - x_{2^i+1} \cdots x_{2^{i+1}}| \\
    &\quad  + |x_{2^i + 1} \dots x_{2^{i+1}}| \cdot |\tilde{\times}_{2^i}(x_1, \dots, x_{2^i}) - x_1 \cdots x_{2^i}| \\
    &\leq \delta + M^{2^i} (4^{i-1}M^{2^i - 2}\delta) + M^{2^i} (4^{i-1}M^{2^i - 2}\delta)\delta \\
    & = (1 + 2 4^{i-1} M^{2^{i+1} - 2} ) \delta \\
    & <  4^i M^{2^{i+1} - 2} \delta.
\end{align*}

From this, we have constructed a function $\tilde{\times}_{2^k}$ that approximates multiplication (of values $<M$) with error at most $4^{k-1} M^{2^k - 2} \delta = \epsilon$ that has depth 

\begin{align*}
c \log \left( \frac{M^{2^{k+1} - 2}}{\delta^k} \right) 
    &= c \log \left( \frac{M^{2^{k} - 2} M^{2^k} (4^{k-1})^k (M^{2^k - 2})^k }{\epsilon^k} \right) \\
    &= c \log \left( M^{2^k - 2} \right) +  c \log \left( \frac{(4^{k-1})^k (M^{2^k})^k }{\epsilon^k} \right) \\
    &= c \log \left( M^{2^k - 2} \right) +  c k \log \left( \frac{4^{k-1} M^{2^k} }{\epsilon} \right) \\
    &< c (1 + k) \log \left( \frac{4^{k-1} M^{2^k} }{\epsilon} \right) \\
    &< c_1 k \log \left( \frac{4^{k-1} M^{2^k} }{\epsilon} \right),
\end{align*}

For some absolute constant $c_1$. Now since $k = \log(d)$, we have that $M^{2^k} = M^d$ and $4^{k-1} < 4^k = d^2$, so the ReLU network has depth at most $c \log(d) \log \left( \frac{d^2 M^d}{\epsilon} \right)$ where $c$ is an absolute constant (the same constant as in Lemma \ref{lemma:times}). The width of $\tilde{\times}_{2^k}$ is $4 \cdot 2^k = 4d$, and the weights are bounded by $M$.

Figure \ref{fig:timesnet}(a) shows a neural network diagram for the ReLU network implementing $\tilde{\times}_8$, which has the structure of a full binary tree. In order to handle numbers that are not powers of two, we use an architecture similar to the diagram in Figure \ref{fig:timesnet} (b) which depicts the ReLU network implementing $\tilde{\times}_5$.

Formally, suppose we have $2^{i-1} < d \leq 2^i$ for some $i \in \mathbb{N}$. Then consider the network $\tilde{\times}_{2^{i}}$ defined as before, but we remove the last $2^i - d$ input neurons, and replace them with $1$ everywhere they appear. Note that this can be achieved by adjusting the bias of each neuron appropriately. For example, any neuron can be turned into a constant 1 by making the weight vector $0$ and making the bias equal to $1$. This procedure will not affect the number of layers, it will not increase the width, and the parameters are bounded by $M + 1 < 2M$. Noting that $2^i < 2d$, we see that the ReLU network has width at most $4 \cdot 2^i < 4 \cdot 2d = 8d$. Finally, the depth is at most (for $c_1$ and $c_2$ absolute constants)

\begin{align*}
c \log(2^i) \log \left( \frac{(2^i)^2 M^d}{\epsilon} \right) 
    &<  c \log(2d) \log \left( \frac{(2d)^2 M^d}{\epsilon} \right) \\
    &= c(\log(2) + \log(d))\left(\log(4) + \log\left(\frac{d^2M^d}{\epsilon}\right)\right) \\
    &< c\left(\log(8) + 3\log(d)\log\left(\frac{d^2M^d}{\epsilon}\right)\right) \\
    &= c_1 \log\left( \frac{d^3 M^d}{\epsilon} \right) + c_2.
\end{align*}
\end{proof}

Next we approximate the indicator functions of intervals (which we denote by $\indicator_{(a,b)}$).

\begin{lemma}
\label{lemma:indicator}
Fix $M > 1$. Let $[a, b] \subseteq [-M, M]$ and $\epsilon < \frac{1}{2}(b-a)$. Then there is a ReLU network which implements a function $\tilde{\indicator}_{(a,b)}^\epsilon$ such that $$\left\|\tilde{\indicator}_{(a,b)}^\epsilon - \indicator_{(a,b)}\right\|_{L^1} = \epsilon.$$  This network has depth at most $c \log\left( \frac{1}{\epsilon} \right)$, width equal to 4, and weights bounded by $M$ (where $c$ is a constant depending only on $M$).
\end{lemma}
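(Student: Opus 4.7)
The plan is to build $\tilde{\indicator}_{(a,b)}^{\epsilon}$ as a difference of two sharply-rising scalar ReLU subnetworks, namely $\tilde{\indicator}_{(a,b)}^{\epsilon}(x) = s(x-a) - s(x-b)$, where $s:\R \to [0,1]$ is monotone, equals $0$ for $x \leq -\delta/2$, equals $1$ for $x \geq \delta/2$, and rises linearly across a transition window of width $\delta$. A direct triangle-area computation gives $\|s(\cdot-a)-s(\cdot-b) - \indicator_{(a,b)}\|_{L^1} = \delta/2$, so tuning $\delta = 2\epsilon$ yields exactly the required $L^1$ error. The assumption $\epsilon < (b-a)/2$ ensures the two transition windows do not overlap, so no cross-interference arises.

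The main obstacle is that a one-layer trapezoidal implementation of $s$ has slope $1/\delta$, hence needs weights of order $1/\epsilon$, violating the bound $M$ once $\epsilon$ is small. To keep weights bounded I would iterate a fixed sharpening block. Define $h_M(x) = \sigma(Mx - (M-1)/2) - \sigma(Mx - (M+1)/2)$. One checks directly that $h_M$ is $0$ on $(-\infty, (M-1)/(2M)]$, equals $1$ on $[(M+1)/(2M), \infty)$, and is linear of slope $M$ across a transition interval of width $1/M$ centered at $1/2$; every weight and bias appearing is bounded by $M$ (using $M > 1$). Crucially, $h_M$ maps $\R$ into $[0,1]$ and fixes the endpoints $0$ and $1$, so the iterate $h_M^{(k)}$ is well defined without any clipping and still maps $\R$ into $[0,1]$. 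By induction, the transition window of $h_M^{(k)}$ shrinks geometrically to width $M^{-k}$ while remaining centered at $1/2$, since each composition only sees inputs that were in the previous transition window. Choosing $k = \lceil \log_M(1/(2\epsilon)) \rceil = O(\log(1/\epsilon))$ therefore makes the transition width at most $2\epsilon$, and an affine shift by $-1/2$ absorbed into the first-layer bias centers the transition at $0$, producing $s$.

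Assembling everything, $\tilde{\indicator}_{(a,b)}^{\epsilon}$ is realized by running two parallel copies of this sharpening stack with input biases chosen to shift by $-a$ and by $-b$, then subtracting at the output layer. The depth is that of the composition, $O(\log(1/\epsilon))$ with constant depending only on $M$; the width is $4$ (two ReLU neurons per $h_M$ block, two parallel copies); every weight is at most $M$; and the final $L^1$ error can be set equal to $\epsilon$ after a final tuning of $\delta$. The essential difficulty resolved by this argument is maintaining both the weight bound $M$ and the stable output range $[0,1]$ across every one of the $O(\log(1/\epsilon))$ layers, which is exactly what the explicit endpoint-fixing property of $h_M$ provides.
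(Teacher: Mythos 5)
Your construction is correct in substance but takes a genuinely different route from the paper's proof. The paper builds the trapezoid $\frac{1}{\epsilon}\bigl(\sigma(x-(a-\epsilon))-\sigma(x-a)-\sigma(x-b)+\sigma(x-(b+\epsilon))\bigr)$ in a single width-$4$ layer, and handles the illegal slope $\tfrac1\epsilon$ by factoring it as a product of $\lceil\log_M(1/\epsilon)\rceil$ numbers each at most $M$, realized by that many extra one-neuron layers which simply multiply the (positive) output by $M$; sharpness is bought by scaling a small bump up across depth. You instead keep every intermediate value in $[0,1]$ and obtain sharpness by composing a fixed two-neuron sigmoid block $h_M$ whose transition window contracts by a factor $M$ per layer, so the slope of order $1/\epsilon$ emerges from composition rather than from an explicit product. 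Your induction (transition width $M^{-k}$, centered at $1/2$, with $0$ and $1$ fixed so no clipping is needed) is sound, the triangle-area computation giving error $\delta/2$ is correct, and $\epsilon<\tfrac12(b-a)$ does keep the two transition windows disjoint, so $\delta=2\epsilon$ gives the stated $L^1$ error with depth $O(\log_M(1/\epsilon))$, width $4$ (two neurons per parallel copy), internal weights $\pm M$ and biases $(M\pm1)/2\le M$. Both routes give the same size bounds; yours has the merit of uniformly bounded activations throughout, the paper's has the merit that only the first layer depends on $a,b$ and all widths after it are $1$.

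Two points need patching. First, as described you fold the shift by $-a$ (resp.\ $-b$) into the first slope-$M$ block, so its bias is of size about $M|a|$, which can be as large as $M^2$ and violates the weight bound $M$. The fix is to perform the shift at unit slope: take the first block to be $h_1(x-a+\tfrac12)=\sigma(x-a+\tfrac12)-\sigma(x-a-\tfrac12)$, whose biases have size at most $|a|+\tfrac12$, and then stack the slope-$M$ sharpening blocks on top; this costs one extra layer and changes nothing else (and puts you at the same level of bookkeeping as the paper, whose first-layer biases are of size $|a|+\epsilon$). Second, the lemma asserts the $L^1$ error \emph{equals} $\epsilon$, while your $\delta=M^{-k}$ is quantized; make the "final tuning" explicit, e.g.\ replace the last block by $h_R$ with slope $R\in[1,M]$ chosen so that the composite transition width is exactly $2\epsilon$, which is possible because $2\epsilon$ lies between $M^{-k}$ and $M^{-(k-1)}$ for your choice of $k$ and the centering at $1/2$ is preserved under this replacement. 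With these routine fixes the argument is complete.
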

\begin{proof}
We define the ReLU network function
\[ \tilde{\indicator}^{\epsilon}_{(a, b)} (x) = \frac{1}{\epsilon} (\sigma(x - (a - \epsilon)) -  \sigma(x - a) -  \sigma(x - b) +  \sigma(x - (b + \epsilon))),\] 
where $\sigma$ is the ReLU activation function. Figure \ref{fig:indicatorapprox} is a plot of $\tilde{\indicator}^{\epsilon}_{(a, b)}$. Then it is clear that $$\left\| \tilde{\indicator}^{\epsilon}_{(a, b)} -  \indicator_{(a, b)}\right\|_{L^1} = \epsilon.$$ 

\begin{figure}[t]
    \centering
    \includegraphics[height=1.4in]{figures/indicator.png}
    \caption{Plot of $\tilde{\indicator}^{\epsilon}_{(a, b)}$.}
    \label{fig:indicatorapprox}
\end{figure}

Note that we can express $\frac{1}{\epsilon}$ as

\[ \frac{1}{\epsilon} = M^{\left\lceil \log_M\left(\frac{1}{\epsilon}\right) \right\rceil - 1} \cdot \frac{\frac{1}{\epsilon}}{M^{\left\lceil \log_M\left(\frac{1}{\epsilon}\right) \right\rceil - 1}} = \underbrace{M \times M \times \cdots \times M}_{\left\lceil \log_M\left(\frac{1}{\epsilon}\right) \right\rceil - 1 \text{ times}} \times R, \]

where $0 < R \leq M$. This is a product of $\left\lceil \log_M\left(\frac{1}{\epsilon}\right) \right\rceil$ numbers that are all bounded by $M$. Then note that

\[ \tilde{\indicator}^{\epsilon}_{(a, b)} = M \times \dots \times M \times R (\sigma(x - (a - \epsilon)) -  \sigma(x - a) -  \sigma(x - b) +  \sigma(x - (b + \epsilon))) \]

can be implemented by ReLU network with $1 + \left\lceil \log_M\left(\frac{1}{\epsilon}\right) \right\rceil$ layers. The first layer has $4$ neurons, and the second layer has one neuron, and they together compute $R (\sigma(x - (a - \epsilon)) -  \sigma(x - a) -  \sigma(x - b) +  \sigma(x - (b + \epsilon)))$. Then the next $\left\lceil \log_M\left(\frac{1}{\epsilon}\right) \right\rceil - 1$ each multiply this value by $M$ (since all values at this point are positive, the ReLU activation does nothing at each layer). Thus the ReLU network has width $4$ (though only the first layer has more than one neuron) and weights bounded by $M$.
\end{proof}

We combine Lemma \ref{lemma:dtimes} and Lemma \ref{lemma:indicator} to obtain an approximation to the indicator function of $d$-dimensional cube.
\begin{lemma}
\label{lemma:indicatord}
Fix $M > 1$. Let $Q = \bigtimes_{k=1}^d (a_k, b_k) \subseteq [-M, M]^d$ be a bounded $d$-dimensional cube (i.e. $b_1 - a_1 = b_k - a_k$ for all $k \in [d]$), and suppose $\epsilon < \min \left(\frac{b_1 - a_1}{2}, 1\right)$. Then there exists a function $\phi : [-M, M]^d \rightarrow \R$ implementable by a ReLU network such that 
\[ \int_{[-M, M]^d} |\indicator_Q (x) - \phi (x) | \, dx < \epsilon. \]

The network has depth at most $c_1 \log\left( \frac{d^2 4^d}{\epsilon} \right) + c_2$, width at most $4d$, and weights bounded by $\max\{M, 2\}$ (where $c_1$ and $c_2$ are constants only depending on $M$).
\end{lemma}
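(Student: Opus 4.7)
The plan is to factor the indicator of a cube as the product of one-dimensional indicators and to approximate the factors and product separately. Concretely, I would set
\[
\phi(x) \;=\; \tilde{\times}_d\!\left(\tilde{\indicator}^{\delta_1}_{(a_1, b_1)}(x_1),\, \dots,\, \tilde{\indicator}^{\delta_1}_{(a_d, b_d)}(x_d)\right),
\]
where each $\tilde{\indicator}^{\delta_1}_{(a_k, b_k)}$ is the one-dimensional ReLU approximation from Lemma \ref{lemma:indicator} with tolerance $\delta_1$, and $\tilde{\times}_d$ is the $d$-argument multiplication network from Lemma \ref{lemma:dtimes} built for inputs bounded by $1$ with tolerance $\delta_2$. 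Taking the input bound $M_{\rm arg}=1$ in Lemma \ref{lemma:dtimes} is legitimate because the trapezoidal form of each $\tilde{\indicator}^{\delta_1}_{(a_k, b_k)}$ takes values in $[0,1]$.

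To bound the $L^1$ error, I would insert the exact product $\prod_{k=1}^{d}\tilde{\indicator}^{\delta_1}_{(a_k, b_k)}(x_k)$ as an intermediate step and apply the triangle inequality. The gap between $\phi$ and this intermediate product is bounded uniformly by $\delta_2$ via Lemma \ref{lemma:dtimes}, contributing at most $(2M)^d\delta_2$ after integration over $[-M,M]^d$. For the remaining gap $\bigl|\prod_k\tilde{\indicator}^{\delta_1}_{(a_k, b_k)}(x_k) - \indicator_Q(x)\bigr|$, I would apply the standard telescoping identity for a difference of $d$-fold products, so that each summand carries the one-coordinate discrepancy $|\tilde{\indicator}^{\delta_1}_{(a_k, b_k)}(x_k) - \indicator_{(a_k, b_k)}(x_k)|$ multiplied by $d-1$ other factors, each valued in $[0,1]$. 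Integrating, the $k$-th coordinate contributes the $L^1$ bound $\delta_1$ from Lemma \ref{lemma:indicator}, and each other coordinate contributes a factor of at most $2M$ (the length of $[-M,M]$), for a total bound of $d\,\delta_1(2M)^{d-1}$. Choosing $\delta_1=\epsilon/[2d(2M)^{d-1}]$ and $\delta_2=\epsilon/[2(2M)^d]$ makes the combined $L^1$ error at most $\epsilon$.

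It then remains to account for the network parameters. The $d$ indicator subnetworks run in parallel, each of width $4$, depth $O(\log(1/\delta_1))$, and weights bounded by $M$, giving combined width $4d$. Their outputs feed into $\tilde{\times}_d$, whose binary tree structure has its first layer pairing the $d$ inputs into $d/2$ pairwise multiplications of width $8$ each, for width $4d$ at that layer; subsequent tree levels are strictly narrower. The multiplication block has depth $O(\log(d^3/\delta_2))$ and weights bounded by $2M_{\rm arg}=2$. Overall the width stays at $4d$, the weights are bounded by $\max\{M,2\}$, and the total depth is $O(\log(1/\delta_1)+\log(d^3/\delta_2))=O(\log(d^4/\epsilon^2)+d\log(2M))$, which fits into $c_1\log(d^2 4^d/\epsilon)+c_2$ once $M$-dependent factors (for instance $d\log(2M)$ absorbed into $2c_1 d\log 2$, and $\log d^4=2\log d^2$) are absorbed into the constants.

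The main obstacle I anticipate is not conceptual but bookkeeping: Lemma \ref{lemma:dtimes} records a width bound of $8d$, and I need to argue that in this specific setting the combined width remains $4d$, because the parallel indicator block already supplies $d$ inputs to the tree and only the first tree layer attains width $4d$ before tapering. The other subtleties (choosing $\delta_1,\delta_2$ and rewriting the depth bound in the stated form with $M$-dependent constants) are routine.
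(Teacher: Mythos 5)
Your proposal is essentially the paper's own proof: the same composition $\tilde{\times}_d\bigl(\tilde{\indicator}^{\delta_1}_{(a_1,b_1)}(x_1),\dots,\tilde{\indicator}^{\delta_1}_{(a_d,b_d)}(x_d)\bigr)$ with input bound $1$ for the multiplication network, the same two-term error split with $\delta_1,\delta_2$ of the same order, and the same depth/width/weight bookkeeping (including the same glossed constant-factor width issue when $d$ is not a power of two, which you at least flag). The only cosmetic difference is that you bound the product-of-indicators error by telescoping, giving $d\,\delta_1(2M)^{d-1}$, whereas the paper bounds the volume of the thin shell on which the exact and approximate products differ — the same estimate in substance, so the proof is correct and matches the paper's route.
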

\begin{proof}
Denote by $\tilde{\indicator}^\delta_{(a_i, b_i)}$ the approximation to $\indicator_{(a_i, b_i)}$ obtained from Lemma \ref{lemma:indicator} with $\delta = \frac{\epsilon}{2}$. Let $\eta = \frac{\epsilon}{2^{d+1} M^d}$, and denote by $\tilde{\times}_d$ the approximation of the multiplication of $d$ factors obtained from Lemma \ref{lemma:dtimes} with parameters $M = 1$ and error $\eta$ (which is denoted as $\epsilon$ in the lemma statement). Then we define $\phi$ by
\[ \phi(x_1, \dots, x_d) = \tilde{\times}_d\left(\tilde{\indicator}^\delta_{(a_1, b_1)}(x_1), \dots, \tilde{\indicator}^\delta_{(a_d, b_d)}(x_d)\right). \]

We compute
\begin{align*}
& \int_{[-M, M]^d} \left| \indicator_{\bigtimes_{i=1}^d [a_i, b_i)}(x) - \phi(x) \right| \, dx \\
    &= \int_{[-M, M]^d} \left| \prod_{i=1}^d \indicator_{(a_i, b_i)}(x_i) - \tilde{\times}_d\left(\tilde{\indicator}^\delta_{(a_1, b_1)}(x_1), \dots, \tilde{\indicator}^\delta_{(a_d, b_d)}(x_d)\right) \right| \, dx\\
    &\leq \int_{[-M, M]^d} \left| \prod_{i=1}^d \indicator_{(a_i, b_i)}(x_i) - \prod_{i=1}^d \tilde{\indicator}^\delta_{(a_i, b_i)}(x_i) \right| \, dx \\
    & \quad \quad + \int_{[-M, M]^d} \left| \prod_{i=1}^d \tilde{\indicator}^\delta_{(a_i, b_i)}(x_i)  - \tilde{\times}_d\left(\tilde{\indicator}^\delta_{(a_1, b_1)}(x_1), \dots, \tilde{\indicator}^\delta_{(a_d, b_d)}(x_d)\right) \right| \, dx \\
    &< \int _ {\bigtimes_{i=1}^d [a_i - \delta, b_i + \delta] \, \setminus \, \bigtimes_{i=1}^d [a_i, b_i)} \left|\prod_{i=1}^d \tilde{\indicator}^\delta_{(a_i, b_i)}(x_i)\right| + \int_{[-M, M]^d} \frac{\epsilon}{2^{d+1} M^d}  \, dx \\
    &< \text{Vol}\left (\bigtimes_{i=1}^d (a_i - \delta, b_i + \delta) \, \setminus \, \bigtimes_{i=1}^d (a_i, b_i) \right) + \frac{\epsilon}{2}
\end{align*}

where last inequality follows since $\left|\prod_{i=1}^d \tilde{\indicator}^\delta_{(a_i, b_i)}(x_i)\right| < 1$. We now focus on bounding the measure of $R = \left (\bigtimes_{i=1}^d (a_i - \delta, b_i + \delta) \, \setminus \, \bigtimes_{i=1}^d (a_i, b_i) \right)$. First we express $R = \bigcup_{k=1}^d S_k^+ \cup \bigcup_{k=1}^d S_k^-$ where

\[ S_k^- = \left(\bigtimes_{i=1}^{k-1} (a_i - \delta, b_i + \delta)\right) \times (a_k - \delta, a_k) \times \left(\bigtimes_{i=k+1}^{d} (a_i - \delta, b_i + \delta)\right), \]

\[ S_k^+ = \left(\bigtimes_{i=1}^{k-1} (a_i - \delta, b_i + \delta) \right) \times (b_k, b_k + \delta) \times \left(\bigtimes_{i=k+1}^{d} (a_i - \delta, b_i + \delta)\right). \]

Thus we have that  $\text{Vol}(R) \leq \sum_{k=1}^d \text{Vol}(S_k^+) + \text{Vol}(S_k^-) = 2(2\delta + b_i - a_i)^{d-1} \delta < 2(2\delta + 2M)^{d-1}\delta$. If we pick $\delta = \frac{\epsilon}{4(3M)^{d-1}}$, then we have 
\begin{align*}
\|\indicator_Q - \phi \|_{L^1}
    &< \text{Vol}\left (\bigtimes_{i=1}^d (a_i - \delta, b_i + \delta) \, \setminus \, \bigtimes_{i=1}^d (a_i, b_i) \right) + \frac{\epsilon}{2} \\
    &\leq 2(2\delta + 2M)^{d-1} \delta + \frac{\epsilon}{2} \\
    &= 2(3M)^{d-1} \frac{\epsilon}{4(3M)^{d-1}} + \frac{\epsilon}{2} \\
    &=\frac{\epsilon}{2}  + \frac{\epsilon}{2} = \epsilon.
\end{align*}

Finally, we determine the size of $\phi$. Each $\tilde{\indicator}^\delta_{(a_i, b_i)}$ can be implemented by a ReLU network with depth $c_1 \log\left(\frac{1}{\delta}\right) = c_1 \log\left(\frac{4 \cdot 3^{d-1}M^{d-1}}{\epsilon}\right) \leq c_2 \log\left(\frac{3^{d-1}M^{d-1}}{\epsilon}\right)$, width $4$, and weights bounded by $M$. $\tilde{\times}_d$ can be implemented by a ReLU network with depth at most
\[ c_3 \log\left(\frac{d^3 2^d}{\eta} \right) + c_4 = c_3 \log\left(\frac{d^3 2^d 2^{d+1} M^d}{\epsilon} \right) + c_4 = c_3 \log\left( \frac{d^3 2^{2d+1} M^d}{\epsilon} \right) + c_4, \]

width at most 4d, and weights bounded by 2. Thus $\phi$ has width at most $4d$, weights bounded by $\max\{M, 2\}$, and depth at most
\begin{align*}
& c_2 \log\left(\frac{3^{d-1}M^{d-1}}{\epsilon}\right) + c_3 \log\left( \frac{d^3 2^{2d+1} M^d}{\epsilon} \right) + c_4 \\
&< \max(c_2, c_3) \log\left(\frac{d^3 3^{d-1} 2^{2d+1} M^{2d-1}}{\epsilon^2}\right) + c_4 \\
&< \max(c_2, c_3) \log\left(\frac{d^3 4^{d-1} 4^{d+1} M^{2d}}{\epsilon^2}\right) + c_4 \\
&< \max(c_2, c_3) \log\left(\frac{(d^2 4^d M^d)^2}{\epsilon^2}\right) + c_4 \\
&= c_5 \log \left( \frac{d^24^dM^d}{\epsilon} \right) + c_4 \\
&= c_5 \log \left( \frac{d^24^d}{\epsilon} \right) + c_5 d \log(M) + c_4. \\
&< c_6 \log \left( \frac{d^24^d}{\epsilon} \right) + c_4.
\end{align*}
\end{proof}

Now we use the construction of Lemma \ref{lemma:indicatord} to approximate the function from Lemma \ref{lemma:pwconstant} as follows. Let $\{ Q_{\vec{k}} \}_{\vec{k} \in [n]^d}$ be a decomposition of $[0, 1]^d$ into almost non-overlapping cubes as in Lemma \ref{lemma:pwconstant}. The only overlap of these cubes is a set of measure $0$.

\begin{lemma}
\label{lemma:pwapprox}
Let $0 < \epsilon < \frac{1}{2n}$. Let $\{ \beta_{\vec{k}} \}_{\vec{k} \in [n]^d}$ be constants within $[-M, M]$. Then the function $g$ defined by 
\[ g(x) = \sum_{k \in [n]^d} \beta_{\vec{k}} \indicator_{Q_{\vec{k}}}(x) \]
can be approximated by a neural network $\tilde{g}$ with depth $c_1 \log\left( \frac{d^2 4^d n^d}{\epsilon}\right) + c_2$, width $4d n^d$, and weights bounded by $\max\{M, 2\}$ (where $c_1$ and $c_2$ are constants only depending on $M$), such that
\[ \int_{[-M, M]^d} |g(x) - \tilde{g}(x)| \, dx < \epsilon.  \]
\end{lemma}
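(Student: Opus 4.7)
The plan is to simply build $\tilde{g}$ as the $\beta_{\vec{k}}$-weighted sum of the individual indicator approximations furnished by Lemma \ref{lemma:indicatord}, and then verify both the $L^1$ error bound and the size bounds. Since $Q_{\vec{k}} \subset [0,1]^d \subset [-M,M]^d$ for every $\vec{k} \in [n]^d$ (here $M \geq 2$ without loss of generality), Lemma \ref{lemma:indicatord} applies to each $Q_{\vec{k}}$. I set a per-cube tolerance $\delta = \frac{\epsilon}{M n^d}$ and invoke Lemma \ref{lemma:indicatord} to get a ReLU network implementing a function $\tilde{\indicator}_{Q_{\vec{k}}}^{\delta}$ with $\|\indicator_{Q_{\vec{k}}} - \tilde{\indicator}_{Q_{\vec{k}}}^{\delta}\|_{L^1} < \delta$. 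Note $\delta < \frac{1}{2n}$ is compatible with the hypothesis $\epsilon < \frac{1}{2n}$ after possibly shrinking $\epsilon$ by a factor depending only on $M$, which only affects the absorbed constants.

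I then define
\begin{align*}
\tilde{g}(x) = \sum_{\vec{k} \in [n]^d} \beta_{\vec{k}} \, \tilde{\indicator}_{Q_{\vec{k}}}^{\delta}(x).
\end{align*}
The $L^1$ error is controlled by the triangle inequality:
\begin{align*}
\|g - \tilde{g}\|_{L^1} \leq \sum_{\vec{k} \in [n]^d} |\beta_{\vec{k}}| \cdot \|\indicator_{Q_{\vec{k}}} - \tilde{\indicator}_{Q_{\vec{k}}}^{\delta}\|_{L^1} \leq n^d \cdot M \cdot \delta = \epsilon,
\end{align*}
using $|\beta_{\vec{k}}| \leq M$.

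For the size of $\tilde{g}$, I place the $n^d$ sub-networks in parallel: each has width at most $4d$ by Lemma \ref{lemma:indicatord}, so stacking them gives total width at most $4d n^d$, and all sub-networks share the same depth, so the composite depth is the common sub-network depth, namely
\begin{align*}
c_1 \log\!\left(\frac{d^2 4^d}{\delta}\right) + c_2 = c_1 \log\!\left(\frac{d^2 4^d M n^d}{\epsilon}\right) + c_2,
\end{align*}
which absorbs into $c_1' \log\!\left(\frac{d^2 4^d n^d}{\epsilon}\right) + c_2'$ for constants depending only on $M$. The outputs are then combined into a single scalar via an affine readout layer with coefficients $\beta_{\vec{k}} \in [-M, M]$; this adds one layer (which can likewise be absorbed into the depth constant) and preserves the weight bound $\max\{M, 2\}$ already guaranteed by Lemma \ref{lemma:indicatord}.

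There is essentially no obstacle here beyond bookkeeping: the lemma is a straightforward corollary of Lemma \ref{lemma:indicatord} once one chooses the per-cube tolerance $\delta$ so that the $n^d$ contributions to the $L^1$ error add up to at most $\epsilon$. The only subtlety worth being careful about is ensuring that the ReLU readout of a signed linear combination does not force an extra nonlinearity; since the final layer can be purely linear (i.e.\ the biases and weights of the last affine map), no extra ReLU is needed and the weight bound is preserved.
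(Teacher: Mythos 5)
Your construction is exactly the paper's: apply Lemma \ref{lemma:indicatord} to each cube $Q_{\vec{k}}$ with per-cube tolerance $\frac{\epsilon}{Mn^d}$, stack the $n^d$ sub-networks in parallel (width $4dn^d$), combine them with a final affine layer carrying the coefficients $\beta_{\vec{k}}$, and sum the errors via the triangle inequality. The argument is correct; the only superfluous remark is the "shrinking $\epsilon$" aside, since $\delta=\epsilon/(Mn^d)\le\epsilon<\frac{1}{2n}$ already satisfies the hypothesis of Lemma \ref{lemma:indicatord}.
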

\begin{proof}
For every $\vec{k} \in [n]^d$, let $\phi_{\vec{k}}$ be the function from Lemma \ref{lemma:indicatord} that approximates $\indicator_{Q_{\vec{k}}}$ with error $\frac{\epsilon}{M n^d}$. Then each $\phi_{\vec{k}}$ can be implemented by a ReLU network with 
\[ c_1 \log\left( \frac{d^2 4^d n^d M}{\epsilon}\right) + c_2 = c_1 \log\left( \frac{d^2 4^d n^d}{\epsilon}\right) + c_3 \] 
layers, width at most $4d$, and weights bounded by $2$. This means we can implement the function $\tilde{g}(x) = \sum_{k \in [n]^d} \beta_{\vec{k}} \phi_{\vec{k}}$ by a ReLU network with one more layer, width at most $4dn^d$, and weights bounded by $\max\{M, 2\}$. We compute

\begin{align*}
\int_{[-M, M]^d} |g(x) - \tilde{g}(x)| \, dx 
    &= \int_{[-M, M]^d} \left|\sum_{k\in[n]^d} \beta_{\vec{k}}\indicator_{Q_{\vec{k}}}(x) - \sum_{k\in[n]^d} \beta_{\vec{k}} \phi_{\vec{k}}(x)\right| \, dx \\
    &\leq \sum_{k\in[n]^d} |\beta_{\vec{k}}| \int_{[-M, M]^d} \left|\indicator_{Q_{\vec{k}}}(x) - \phi_{\vec{k}}(x)\right| \, dx \\
    &\leq \sum_{k\in[n]^d} |\beta_{\vec{k}}| \left( \frac{\epsilon}{Mn^d} \right) \\
    &\leq \sum_{k\in[n]^d} M \left( \frac{\epsilon}{Mn^d} \right) = \epsilon. \\
\end{align*}
\end{proof}

\subsection{Putting approximations together}
\label{sec:appendixtogether}
We combine Lemma \ref{lemma:pwconstant} with Lemma \ref{lemma:pwapprox} to obtain our approximation result. The requirement of $d \geq 4$ in the following lemma is for technical reasons, and is not a requirement for Lemma \ref{lemma:main} which is used in the main paper.

\begin{lemma}
\label{lemma:technical}
Suppose $f \in C^\alpha([0,1]^d)$, $\alpha \in (0, 1)$, with $\|f\|_{L^\infty} < M$ and $M \geq 2$. Assume further that $d \geq 4$. Let $\epsilon > 0$. Then there exists a function $\Phi$ implementable by a ReLU network such that $$\|f - \Phi\|_{L^1} < \epsilon.$$ The ReLU network has depth at most $c_1 d \log\left( \frac{8d \|f\|_{C^\alpha}^{1/\alpha}}{\epsilon^{2/\alpha}} \right) + c_2$, width at most $\frac{(4d)^{d/2} \|f\|_{C^\alpha}^{d/\alpha}}{\epsilon^{d/\alpha}}$, and weights bounded by $M$ (where $c_1$ and $c_2$ are constants only depending on $M$).

\end{lemma}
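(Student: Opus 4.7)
The plan is to combine the piecewise constant approximation (Lemma \ref{lemma:pwconstant}) with the neural network approximation of piecewise constant functions (Lemma \ref{lemma:pwapprox}) via the triangle inequality, and then carefully track the size parameters. Specifically, I would write $\|f - \Phi\|_{L^1} \leq \|f - f^n\|_{L^1} + \|f^n - \Phi\|_{L^1}$ where $f^n$ is the piecewise constant approximation on the grid $\{Q_{\vec k}\}_{\vec k \in [n]^d}$ and $\Phi$ is the ReLU network approximating $f^n$.

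First I would choose the grid resolution $n$. By Lemma \ref{lemma:pwconstant}, $\|f - f^n\|_{L^1} \leq \|f\|_{C^\alpha} d^{\alpha/2}/n^\alpha$. To force this below $\epsilon/2$ I set
\begin{equation*}
n = \left\lceil \left(\frac{2 \|f\|_{C^\alpha} d^{\alpha/2}}{\epsilon}\right)^{1/\alpha} \right\rceil,
\end{equation*}
so that $n^d$ behaves like $(\|f\|_{C^\alpha}/\epsilon)^{d/\alpha} \cdot d^{d/2}$ up to a harmless constant factor. Next I would observe that $f^n = \sum_{\vec k \in [n]^d} \beta_{\vec k} \indicator_{Q_{\vec k}}$ with coefficients $\beta_{\vec k} = n^d \int_{Q_{\vec k}} f$, which satisfy $|\beta_{\vec k}| \leq \|f\|_{L^\infty} < M$. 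Consequently Lemma \ref{lemma:pwapprox} applies with the accuracy parameter set to $\epsilon/2$, yielding a ReLU network $\Phi$ with $\|f^n - \Phi\|_{L^1} < \epsilon/2$. The triangle inequality then gives $\|f - \Phi\|_{L^1} < \epsilon$.

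The remaining (and only real) work is to substitute this choice of $n$ into the depth, width, and weight bounds supplied by Lemma \ref{lemma:pwapprox}, namely depth $c_1 \log(d^2 4^d n^d/(\epsilon/2)) + c_2$, width $4dn^d$, and weights bounded by $\max\{M,2\}=M$ (using $M\geq 2$). For the width, inserting the chosen $n$ produces something of the form $4d \cdot 2^{d/\alpha} d^{d/2} \|f\|_{C^\alpha}^{d/\alpha}/\epsilon^{d/\alpha}$; using $d\geq 4$ to absorb the prefactor $4d$ into $(4d)^{d/2}$ (since $4d\leq (4d)^{d/2}$ for $d\geq 2$) and rearranging gives the advertised bound $(4d)^{d/2}\|f\|_{C^\alpha}^{d/\alpha}/\epsilon^{d/\alpha}$. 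For the depth, expanding $\log(d^2 4^d n^d/\epsilon)$ and using $n \leq 2(2\|f\|_{C^\alpha}d^{\alpha/2}/\epsilon)^{1/\alpha}$ produces a logarithm of the form $d \log(8d \|f\|_{C^\alpha}^{1/\alpha}/\epsilon^{2/\alpha})$ after collecting the $d$-dependent factors inside the log.

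The main obstacle I anticipate is purely bookkeeping: matching the stated constants by manipulating logarithms and powers cleanly, and verifying that $d \geq 4$ is sufficient to absorb the linear-in-$d$ prefactor into the $(4d)^{d/2}$ term as well as to consolidate the $\log$ expression. No new mathematical content beyond the two cited lemmas and the triangle inequality is required; the statement should follow mechanically once the grid size $n$ is tuned to balance the piecewise-constant error $\|f\|_{C^\alpha}d^{\alpha/2}/n^\alpha$ against the target accuracy $\epsilon$.
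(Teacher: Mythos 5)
Your proposal follows essentially the same route as the paper's proof: a triangle inequality splitting $\|f-\Phi\|_{L^1}\leq\|f-f^n\|_{L^1}+\|f^n-\Phi\|_{L^1}$, a grid size $n$ of order $\left(\|f\|_{C^\alpha}\sqrt{d}/\epsilon\right)^{1/\alpha}$ chosen from Lemma \ref{lemma:pwconstant}, an application of Lemma \ref{lemma:pwapprox} with accuracy $\epsilon/2$ (noting $|\beta_{\vec k}|\leq\|f\|_{L^\infty}<M$), and then bookkeeping with $d\geq 4$ to match the stated depth and width. The remaining constant-absorption is handled at the same level of rigor as in the paper, so this is the same argument.
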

\begin{proof}
Let $n = \left\lceil \left(\frac{\| f \|_{C^{\alpha}}}{\epsilon}\right)^{1/\alpha} \sqrt{d} \right\rceil$. Let $f^n$ be the piecewise constant function from Lemma \ref{lemma:pwconstant}. Notice that $f^n$ follows the same form as the function $g$ in Lemma \ref{lemma:pwapprox}.   By Lemma \ref{lemma:pwapprox}, there is a ReLU network function $\Phi$ that approximates $f^n$  such that
\[ \int_{[-M, M]^d} | f^n(x) - \Phi(x) | \, dx < \frac{\epsilon}{2}. \]

Then we compute
\[ \|f - \Phi\|_{L^1} \leq \|f - f^n\|_{L^1} + \|f^n - \Phi\|_{L^1} < \frac{\|f\|_{C^{\alpha}} d^{\alpha/2}}{n^\alpha} + \frac{\epsilon}{2} < \frac{\epsilon}{2} + \frac{\epsilon}{2} = \epsilon. \]

Note that $\Phi$ has width $4dn^d < 4d \left( 2\left(\frac{\| f \|_{C^{\alpha}}}{\epsilon}\right)^{1/\alpha} \sqrt{d}\right)^d = c_0 \frac{(4d)^{d/2} \|f\|_{C^\alpha}^{d/\alpha}}{\epsilon^{d/\alpha}}$, and the weights of $\Phi$ are bounded by $M$. The depth of $\Phi$ is bounded by 
\begin{align*}
c_1 \log\left( \frac{d^2 4^dn^d}{\epsilon} \right) + c_2
    &< c_1 \log\left( \frac{d^2 4^d \|f\|_{C^\alpha}^\frac{d}{\alpha} 2^d d^{\frac{d}{2}}}{\epsilon \epsilon^{\frac{d}{\alpha}}} \right) + c_2 \\
    &< c_1 \log\left( \frac{d^d 8^d \|f\|_{C^\alpha}^\frac{d}{\alpha}}{\epsilon^{\frac{2d}{\alpha}}} \right) + c_2 \\
    &= c_1 d \log\left( \frac{8 d \|f\|_{C^\alpha}^{1/\alpha}}{\epsilon^{2 / \alpha}} \right) + c_2\\
\end{align*}
where we use the fact that $d \geq 4$ to bound $2 + \frac{d}{2} \leq d$.

\end{proof}

In the main paper, we use Lemma \ref{lemma:main}, which is proved below.

\begin{proof}[Proof of Lemma \ref{lemma:main}]
By allowing the constants to depend on all values except $\epsilon$, the depth of the ReLU network $\Phi$ from Lemma \ref{lemma:technical} can be expressed as
\[ c_1 d \log \left(\frac{8d \|f\|^{1/\alpha}_{C_\alpha}}{\epsilon^{2/\alpha}}\right) + c_2 = c_3 \left(\log \left(\frac{1}{\epsilon^{2/\alpha}}\right) + \log\left(8d \|f\|^{1/\alpha}_{C_\alpha}\right) \right) + c_2 = \frac{2}{\alpha} c_3 \log \left(\frac{1}{\epsilon}\right) + c_4 < c_5 \log \left(\frac{1}{\epsilon}\right), \]\
and the width can be expressed as

\[ \frac{(4d)^{d/2}\|f\|_{C^\alpha}^{d/\alpha}}{\epsilon^{d/\alpha}} = \frac{c_6}{\epsilon^{d/\alpha}}.\]
\end{proof}

\section{An Alternative Approach: Voronoi Decompositions}
\label{appsecvoronoi}

In this section we sketch an alternative approach to our approximation theory which decomposes the manifold into Voronoi cells and transports onto each disjoint cell directly. This theory is not complete as we have not ruled out the existence of particularly pathological cells which lack an open kernel in tangent space. However we believe such an approach is workable, though very technical, and has benefits associated with a disjoint Voronoi decomposition of the  manifold. Therefore we present a sketch of our ideas.

Recall our goal in approximation theory is to construct a transport map $g^*: (0,1)^{d+1} \to \Mm$ that can be approximated by neural networks. First, we partition the manifold into geodesic Voronoi cells and map local distributions over the cells onto tangent planes. Then we transform the distribution on tangent planes to a distribution on a ball from which we can apply optimal transport theory to produce a map between the source distribution on $(0,1)^{d}$ to a ball. Finally we glue together these local maps with indicators and a uniform random sample from $(0,1)$. We proceed with the first step of partitioning the manifold.

\textbf{Step 1: Voronoi decomposition.} Given $\Mm$ under Assumption \ref{assumptionmanifold}, we decompose $\Mm$ into a partition of finite geodesic Voronoi cells. This is done by first covering $\Mm$ with Euclidean balls around each point with uniform radius less than the global injectivity radius. Via compactness we can extract a finite subcover $\{U_j\}_{j=1}^J$ with centers $\{c_j\}_{j=1}^J$. We form a Voronoi partition of $\Mm$ denoted $\{V_j\}$ by taking the $c_j$ as centers, i.e. 
\begin{align*}
    V_j = \{x \in \Mm : d_\Mm(x, c_j) =\textstyle \min_{i \in [J]} d_\Mm(x, c_i) \}.
\end{align*} 
A Voronoi decomposition is illustrated in Figure \ref{fig:voronoi}. Note the boundary/overlap of the cells has measure $0$ with respect to $Q$. We check this later in the appendix. Now, given a distribution $Q$ on $\Mm$ with density $q$, we can define local distributions $Q_j$ as the conditional distribution of $Q$ on $V_j$.

For any measurable set $A$, we have
\begin{align*}
Q(A) = \sum_{j=1}^J Q(A \cap V_j) = \sum_{j=1}^j Q(V_j) Q(A | V_j) = \sum_{j=1}^j Q(V_j) Q_j(A).
\end{align*}

\begin{figure}
    \begin{minipage}[c]{0.43\textwidth}
    \centering
        \includegraphics[height=0.9in]{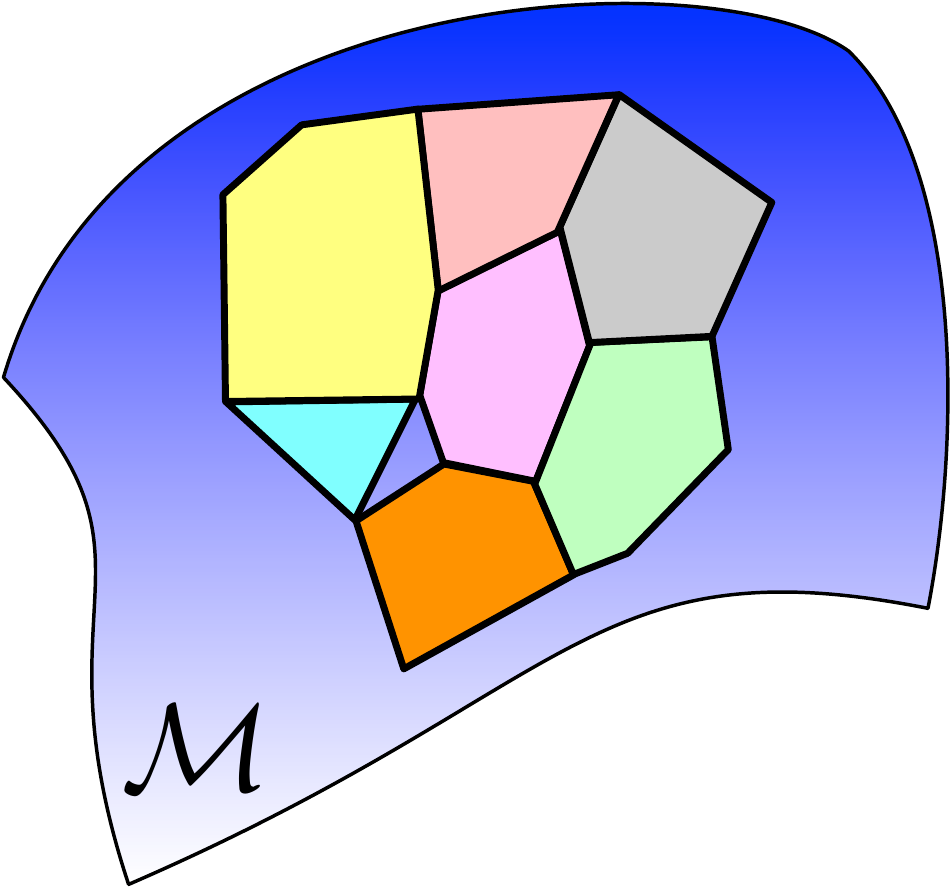}
        \caption{A Voronoi decomposition of $\Mm$.}
        \label{fig:voronoi}
    \end{minipage}
    ~
    \begin{minipage}[c]{0.55\textwidth}
        \centering
        \includegraphics[height=0.9in]{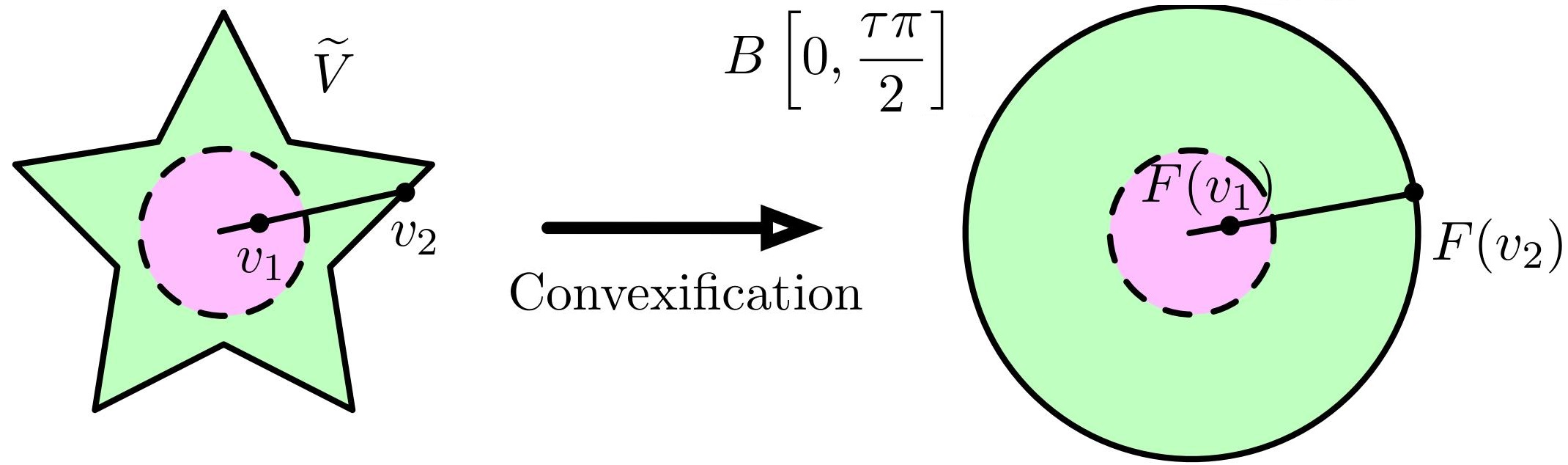}
        \caption{Expansion of star-shaped set into ball via $F$.}
        \label{fig:expansion}
    \end{minipage}
\end{figure}

\textbf{Step 2: Defining local lower-dimensional distributions.} Next we transform local distribution $Q_j$ on $V_j$ into a distribution $\tilde{Q}_j$ on $\tilde{V}_j = \exp_{c_j}^{-1}(V_j)$ which is inverse of image of the $j$th Voronoi cell under the exponential map. In particular $Q_j$ is the pushforward of $\tilde{Q}_j$ by the exponential map. The density of $\tilde{Q}_j$ given by
\begin{align*}
    \tilde{q}_j(v) = q_j(\exp_{c_j}(v))\left|\det J_{\exp_{c_j}}(v)\right|
\end{align*} via an application of formula \eqref{localdensity}. For simplicity, we use $\det J_{\exp_{c_j}}(v)$ to denote the Jacobian $\sqrt{\det g_{ij}^x(v)}$. $\tilde{Q}_j$ has two important properties. First, $\tilde{q}_j$ continues to be upper and lower-bounded and so we can still hope to apply optimal transport theory. Second, $\tilde{V}_j \subseteq T_{c_j}\Mm$ is a $d$-dimensional set on which $(\exp_{c_j})_\sharp\tilde{Q}_j = Q_j$. 
Ideally, to apply optimal transport theory, the target domain $\tilde{V}_j$ needs to be convex. However, $\tilde{V}_j$ is not convex here, but it is star-shaped. Recall a set $S$ is star-shaped if $\exists$ $x \in S$ such that for all $y \in S$, the set $[x:y] = \{\lambda x + (1-\lambda)y : \lambda \in [0,1]\} \subseteq S$. For an excellent reference on star-shaped sets see \citet{article}. This property is crucial and allows us to transform our target domain $\tilde{V}_j$ to a ball via a bi-Lipschitz transformation.

\textbf{Step 3: Convexifying the Target Domain $\tilde{V}_j$}. As we claim above, the target domain $\tilde{V}_j$ is star-shaped and centered at 0. $\tilde{V}_j$ is also bounded and closed and hence compact. Further we know for some $\eta > 0$, $B(0,\eta) \subseteq \tilde{V}_j$. Then we can apply the following lemma to produce a bi-Lipschitz transformation $F: \tilde{V}_j \to B\left(0,\frac{\tau \pi}{2}\right)$, where $\tau$ is the reach of $\Mm$.

\begin{lemma}\label{lemma:star}
    Let $S$ be a compact star-shaped set centered at $0$. Further suppose $\exists \eta > 0$ such that $B(0,\eta) \subseteq S$. Let $L > 0$ be such that $S \subseteq B[0,L]$. Then there exists $F:S \to B[0,L]$ such that $F$ is bi-Lipschitz with constant $C$ dependent on $L, \eta$.
\end{lemma}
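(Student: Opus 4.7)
The natural approach is to construct $F$ as a radial stretching map. Define the radial function $\rho : S^{d-1} \to [\eta, L]$ by
\[
\rho(\theta) \;=\; \sup\{t \geq 0 : t\theta \in S\}.
\]
Because $S$ is compact and star-shaped at $0$, the supremum is attained, and the inclusions $B(0,\eta) \subseteq S \subseteq B[0,L]$ give $\eta \leq \rho(\theta) \leq L$ for every $\theta$. Star-shapedness then yields the radial parameterisation $S = \{r\theta : \theta \in S^{d-1},\, 0 \leq r \leq \rho(\theta)\}$. I would then define
\[
F(0) = 0, \qquad F(x) \;=\; \frac{L}{\rho(x/\|x\|)}\, x \quad \text{for } x \neq 0,
\]
so that each radial segment $[0, \rho(\theta)\theta] \subseteq S$ is stretched linearly onto $[0, L\theta] \subseteq B[0,L]$. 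Bijectivity of $F$ onto $B[0,L]$ is immediate from the radial parameterisations of source and target.

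The core of the proof is to show that $\rho$ is Lipschitz on $S^{d-1}$ with a constant depending only on $L$ and $\eta$. Given $\theta_1, \theta_2 \in S^{d-1}$ with boundary points $p_i = \rho(\theta_i)\theta_i$, I would form a triangle whose vertices are $p_1$, $p_2$, and a suitable base point in $B(0,\eta)$; using star-shapedness (with respect to the ball $B(0,\eta)$, which I will need as the effective hypothesis) together with similar-triangle estimates, the difference $|\rho(\theta_1) - \rho(\theta_2)|$ can be bounded by a multiple of $\|\theta_1 - \theta_2\|$ determined by the ratio $L/\eta$. Once $\rho$ is Lipschitz and bounded in $[\eta, L]$, the stretch factor $h(\theta) := L/\rho(\theta)$ is Lipschitz on $S^{d-1}$ and lies in $[1, L/\eta]$.

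With these bounds in hand, the bi-Lipschitz estimate for $F$ follows from direct computation. Writing $\hat{x} = x/\|x\|$, one expands
\[
F(x) - F(y) \;=\; h(\hat{x})(x-y) \;+\; \bigl(h(\hat{x}) - h(\hat{y})\bigr)\, y,
\]
controls the angular term using the Lipschitz property of $h$ together with the elementary bound $\|\hat{x}-\hat{y}\| \leq 2\|x-y\|/\max(\|x\|,\|y\|)$, and controls the radial term using $h \leq L/\eta$. The inverse $F^{-1}(y) = (\rho(y/\|y\|)/L)\, y$ is analysed symmetrically. A small separate argument handles the case when $x$ or $y$ is very close to $0$, where the stretch factor is bounded between $1$ and $L/\eta$ and the estimate follows from direct scaling.

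\textbf{Main obstacle.} The critical step is the Lipschitz estimate on the radial function $\rho$, and this is where the geometry of the hypothesis really matters: merely having $B(0,\eta) \subseteq S$ is not enough in general (a star-shaped set that contains a ball around its star-centre can have a very irregular boundary elsewhere), so in practice one needs $B(0,\eta)$ to lie in the \emph{star-kernel} of $S$. Verifying that the sets $\tilde{V}_j$ arising from the Voronoi decomposition indeed satisfy this stronger kernel property -- using control on the reach $\tau$ and the bi-Lipschitz behaviour of $\exp_{c_j}^{-1}$ inside the injectivity radius -- is the technical crux, and is precisely the gap the authors acknowledge in the sketch. Everything after that is a routine computation yielding a bi-Lipschitz constant depending polynomially on $L/\eta$.
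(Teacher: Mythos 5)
Your proposal is correct and takes essentially the same route as the paper: a radial stretching map governed by the radial function $r(u)=\sup\{\lambda:\lambda u\in S\}$, with the whole argument hinging on $r$ being Lipschitz with constant depending only on $L$ and $\eta$ --- a fact the paper imports from Theorem 1 of its cited reference on star-shaped sets, which indeed requires the ball to lie in the star-kernel, the very caveat you flag (and which the authors themselves acknowledge as the remaining technical gap). The only cosmetic difference is that the paper fixes the inner ball $B(0,\eta/2)$ and maps the annulus $S\setminus B(0,\eta/2)$ onto $B[0,L]\setminus B(0,\eta/2)$ by an affine rescaling along rays and then glues, whereas you scale linearly from the origin; both yield the same bi-Lipschitz conclusion.
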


Lemma \ref{lemma:star} is proved in Appendix \ref{proof:star}. The main idea is that the bi-Lipschitz map is constructed explicitly by expanding along rays (see Figure \ref{fig:expansion}). Theorem 1 from \citet{10.2307/2316022} tells us the radial function on $S$ which computes the radius of $\tilde{V}_j$ in a direction $u \in \mathbb{S}^{d-1}$ is Lipschitz when the star-shaped set is compact \textbf{and contains a nontrivial interior kernel}. We recall the kernel of a star-shaped is the set of points which have visibility to all points in the set. We believe appropriately chosen manifold Voronoi cells will have non-empty interior kernel. However this remains a technical difficulty.

From this we can construct bi-Lipschitz $F_j$ by fixing points in a ball of radius ${\eta}/{2}$ and expanding points at a rate dependent on its distance from $B(0,\eta/2)$ and $R(x)$. { Note that we apply Lemma \ref{lemma:star} to obtain a map from the closure of $\tilde{V}_j$ to $B\left[0, \frac{\tau \pi}{2}\right]$, which we can then restrict to $\tilde{V}_j$ to obtain a bi-Lipschitz map from $\tilde{V}_j$ to $B(0, L)$}. We can define a new distribution $\tilde{\rho}_j$ on $B\left(0, \frac{\tau \pi}{2}\right)$ with density
\begin{align*}
    \tilde{p}_j(x) = \tilde{q}(F_j^{-1}(x))\left|\det J_{{F_j^{-1}}}(x)\right|.
\end{align*} It is important $F_j$ is bi-Lipschitz with constant $C$ as then $\frac{1}{C^d} \leq |\det J_{F_j}(x)| \leq C^d$ and we know $\tilde{p}_j$ is upper and lower-bounded. Further we can write $(F^{-1}
_j)_\sharp \tilde{\rho}_j = \tilde{Q}_j$ by $g_j^*$.

\textbf{Step 4: Constructing the local transport}. Now we have a distribution $\tilde{\rho}_j$ with density $\tilde{p}_j$ defined on convex ball $B\left(0, \frac{\tau \pi}{2} \right)$. Further $\tilde{p}_j$ is upper and lower bounded. Set $\rho_d$ to be uniform on $(0,1)^d$. Hence we may apply Proposition \ref{proposition:OT} to produce a transport map $T_j: (0,1)^d \to B\left(0, \frac{\tau \pi}{2} \right)$ such that $(T_j)_\sharp \rho_d = \tilde{\rho}$. Further $T_j$ is H\"older with H\"older exponent $\alpha_j$ for some $\alpha_j \in (0, 1)$. Writing $g_j^* : (0,1)^d \to V_j$ via  
\begin{equation}
g_j^*(x) = \exp_{c_j} \circ F_j^{-1} \circ T_j (x)
\label{eqgjstar}
\end{equation} gives rise to $(g_j^*)_\sharp \rho_d = Q_j.$ This transport map $g_j^*$ is illustrated in Figure \ref{fig:flowchart}.
Further $g_j^*$ is the composition of two Lipschitz functions with an $\alpha_j$-H\"older continuous function and is therefore $\alpha_j$-H\"older continuous. Thus we can push a uniform distribution on $(0,1)^d$ onto a local patch $V_j$, as shown in Figure \ref{fig:flowchart}.

\begin{figure}
    \centering
    \includegraphics[height=1.2in]{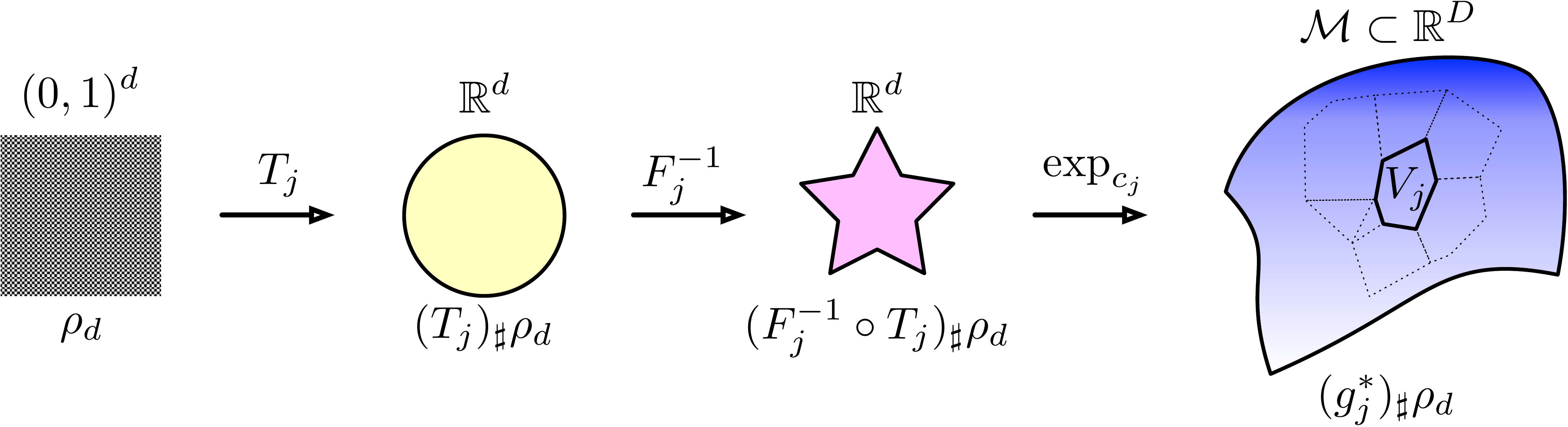}
    \caption{Local transport $g_j^*$ in \eqref{eqgjstar} mapping $\rho_d$ on $(0,1)^d$ to a local distribution $\Mm$ supported on $V_j$.}
    \label{fig:flowchart}
\end{figure}

\textbf{Step 5: Constructing the global transport}. Via step 4 we have local transports $g_j^*$ pushing forward $\rho_d$ uniform on $(0,1)^d$ to local distribution $Q_j$. By setting $\pi_j = Q(\bigcup_{i=1}^jV_i)$. Now we define $g^*: (0,1)^{d+1} \to \Mm$ via \eqref{eqgstar}

\begin{align*}
    g^*(x) = \sum_{j=1}^J\indicator_{[\pi_{j-1},\pi_j)}(x_1)g^*_j(x_{2:d+1})
\end{align*}

where $x_1$ is the first component of $x$ and $x_{2:d+1}$ are the remaining components. 

Set $X \sim \rho$. Then $X_1$ randomly selects the patch $V_j$ with probability equal to $Q(V_j)$, and $g_j^*$ pushes $X_{2:d+1}$ forward to $V_j$ such that $g_j^*(X_{2:d+1})$ is distributed according to $Q_j$. Finally note that the overlap of of the Voronoi cells $V_i$ and $V_j$ while $i\neq j$ has measure $0$. Hence $g^*(X)$ is distributed on $\Mm$ according to $Q$. This constructs the oracle $g^*$. Furthermore, we note that each of the $g_j^*$ is $\alpha_j$-H\"older continuous, so by setting $\alpha = \min_j \alpha_j$ we obtain that all of the $g_j^*$ are $\alpha$-H\"older continuous, which complete the proof of Lemma \ref{lemma:oracleappx}.

\subsection{Proof of Lemma \ref{lemma:star}}
\label{proof:star}

\begin{proof}
We will construct a bi-Lipschitz map from a compact star-shaped set centered at $0$ to a closed ball. Define the radial map $r : \mathbb{S}^{n-1} \to \R$ via $r(u) = \sup\{\lambda : \lambda u \in S\}$. Along a direction this gives us the radius of the set. Then via \citet{10.2307/2316022} this map is Lipschitz with optimal constant $C_1 = L\sqrt{\left(\frac{L}{\delta}\right)^2-1}$. Write $R(x) = r\left(\frac{x}{\|x\|}\right)$. We use this to define a function $A(x)$ on $S\backslash B(0,\delta/2)$ such that $A(x) = x$ if $\|x\|=\frac{\delta}{2}$ and $A(x) = L\frac{x}{\|x\|}$ if $\|x\| = R(x)$. Set $A: S \backslash B\left(0,\frac{\delta}{2}\right) \to B(0,L)$ via 
\begin{align*}
    A(x) = \left(\frac{\delta}{2}+\frac{L-\frac{\delta}{2}}{R(x)-\frac{\delta}{2}}\left(\|x\|-\frac{\delta}{2}\right)\right)\frac{x}{\|x\|}.
\end{align*}

We observe some properties of $A$. First, $A$ expands along rays meaning we can write $A(x) = \lambda_x x$ for some $\lambda$ dependent on $x$. Next, $A$ satisfies $A(x) = x$ when $\|x\|=\frac{\delta}{2}$ and $A(x)=L\frac{x}{\|x\|}$ when $\|x\|=R(x)$. Also, on a direction $u \in \mathbb{S}^{n-1}$ we have $A$ is strictly increasing in norm and hence injective on each ray. Thus $A$ is injective in total. Further, $A$ is surjective via an application of intermediate value theorem and we can write the inverse explicitly as 
\begin{equation*}
    A^{-1}(x) = \left(\frac{\delta}{2}+\frac{R(x)-\frac{\delta}{2}}{L-\frac{\delta}{2}}\left(\|x\|-\frac{\delta}{2}\right)\right)\frac{x}{\|x\|}.
\end{equation*} We verify this is the inverse by choosing $y = A(x)$ for some $x$ in the domain. Then
\begin{align*}
    \|y\| = \frac{\delta}{2}+\frac{R(x)-\frac{\delta}{2}}{L-\frac{\delta}{2}}\left(\|x\|-\frac{\delta}{2}\right).
\end{align*} 

Write $z = A^{-1}(y)$ and compute the norm as 
\begin{align*}
    \|z\| &= \left(\frac{\delta}{2}+\frac{R(x)-\frac{\delta}{2}}{L-\frac{\delta}{2}}\left(\|y\|-\frac{\delta}{2}\right)\right) \\
    &= \left(\frac{\delta}{2}+\frac{R(x)-\frac{\delta}{2}}{L-\frac{\delta}{2}}\left(\left(\frac{\delta}{2}+\frac{L-\frac{\delta}{2}}{R(x)-\frac{\delta}{2}}\left(\|x\|-\frac{\delta}{2}\right)\right)-\frac{\delta}{2}\right)\right) \\
    &= \left(\frac{\delta}{2}+\frac{R(x)-\frac{\delta}{2}}{L-\frac{\delta}{2}}\left(\frac{L-\frac{\delta}{2}}{R(x)-\frac{\delta}{2}}\left(\|x\|-\frac{\delta}{2}\right)\right)\right)\\
    &= \left(\frac{\delta}{2}+\frac{R(x)-\frac{\delta}{2}}{L-\frac{\delta}{2}}\left(\frac{L-\frac{\delta}{2}}{R(x)-\frac{\delta}{2}}\left(\|x\|-\frac{\delta}{2}\right)\right)\right)\\
    &= \frac{\delta}{2}+\|x\|-\frac{\delta}{2} = \|x\|.
\end{align*} Since $A$ expands along rays we have $z = x$ as desired.

 We can then define the functions $F_1: S \backslash B\left(0,\frac{\delta}{2}\right) \to B[0,L]$ and $F_2: B\left[0,\frac{\delta}{2}\right] \to B[0,L]$ via  $F_1(x) = A(x)$ and $F_2(x) = x$. Note immediately $F_2$ is bi-Lipschitz with constant $1$. 

Further we argue $A(x)$ is bi-Lipschitz. As a necessary warm-up we show $R(x) = r\left(\frac{x}{\|x\|}\right)$ is Lipschitz. We know $x \rightarrow \|x\|$ is Lipschitz with constant $1$, and $\|x\| \rightarrow \frac{1}{\|x\|}$ on $\left[\frac{\delta}{2},\infty\right)$ is Lipschitz away from $0$ with constant $\frac{4}{\delta^2}$, and $r(\cdot)$ is Lipschitz with constant $C_1$ given above. Now for $x,y \in S$ we have
\begin{align*}
    |R(x)-R(y)| &= \left|r\left(\frac{x}{\|x\|}\right)-r\left(\frac{y}{\|y\|}\right)\right|\\
    &\leq C_1\left\|\frac{x}{\|x\|}-\frac{y}{\|y\|}\right\| \\
    &\leq C_1\left(\left\|\frac{x}{\|x\|}-\frac{x}{\|y\|}\right\|+\left\|\frac{x}{\|y\|}-\frac{y}{\|y\|}\right\|\right) \\
    &\leq C_1\left(L\left|\frac{1}{\|x\|}-\frac{1}{\|y\|}\right|+\frac{2}{\delta}\|x-y\|\right)\\
    &\leq C_1\left(L\frac{4}{\delta^2}\|x-y\|+\frac{2}{\delta}\|x-y\|\right) \\
    &\leq C_1\left(\frac{4L}{\delta^2}+\frac{2}{\delta}\right)\|x-y\|
\end{align*}
 as desired. Notice it is important that the set $S$ is bounded and the operator $R$ is defined at a radius $\frac{\delta}{2}$ away from $0$. Using this we can show $A$ is Lipschitz. Clearly $\|x\| -\frac{\delta}{2}$ is Lipschitz and bounded since $S$ is bounded. $\frac{1}{R(x)-\frac{\delta}{2}}$ will be Lipschitz as $R(x)-\frac{\delta}{2}$ is Lipschitz and we have $R(x) \geq \delta$ so the denominator is lower bounded. Thus $\frac{L-\frac{\delta}{2}}{R(x)-\frac{\delta}{2}}(\|x\|-\frac{\delta}{2})$ is Lipschitz as the product of bounded Lipschitz functions. It follows $A$ is Lipschitz, as the product of bounded Lipschitz functions, with some constant $C_2$. A nearly identical argument shows $A^{-1}$ is Lipschitz. Hence $F_1$ is bi-Lipschitz.

Define a function $F$ on $S$ as $F_1$ on $S\backslash B(0,\delta/2)$ and $F_2$ on $B(0,\delta/2)$. This will be Lipschitz as $F$ is Lipschitz on each of the two domains and for $x \in S\backslash B(0,\delta/2), y \in B(0,\delta/2)$ we have 
\begin{align*}
    |F(x) - F(y)| &\leq |F(x)-F(z)|+|F(z)-F(y)| \leq  C_2\|x-z\|+\|z-y\| \\
    &\leq C_2\|x-y\| + \|x-y\| = (C_2+1)\|x-y\|
\end{align*} 

where we pick $z \in [x:y]\cap B(0,\delta/2)$ and recall $[x:y]$ is the line between $x$ and $y$. Hence $|x-z|,|y-z| \leq |x-y|$. This shows $F$ Lipschitz. It is also clearly bijective as each of its components are bijective and we can write $F^{-1}$ explicitly. A similar argument shows $F^{-1}$ is Lipschitz. Thus $F$ is bi-Lipschitz.
\end{proof}

\subsection{Voronoi cells have measure 0 overlap}
\begin{proof}
    We have that $Q(V_i \cap V_j) = 0$. To see this, it suffices to show the set of points $I_{ij} = \{x \in U_i \cap U_j : d_{\Mm}(x,c_i) = d_{\Mm}(x,c_j)\}$ is a codim 1 submanifold of $M$, which means it has Hausdorff measure 0. Note that $I_{ij} = f_{ij}^{-1}(0)$ where $f_{ij} : U_i \cap U_j \rightarrow \R : x \mapsto d_\Mm^2(c_i, x) - d_\Mm^2(c_j, x)$, so it suffices to show that $0$ is a regular value of $f_{ij}$ which is smooth in $U_i \cap U_j$. Note that $df_x = -2(\exp_x^{-1}(c_i) - \exp_x^{-1}(c_j))$. For all $x \in f^{-1}_{ij}(0)$, we need to show that $df_x$ has full rank. Since the co-domain of $df_x$ is one dimensional, this is equivalent to showing that $df_x \neq 0$. Suppose for the sake of contradiction that $df_x = 0$. Then this implies $\exp_x^{-1}(c_i) = \exp_x^{-1}(c_j)$. Since $\exp_x$ is a diffeomorphism in a neighborhood that includes $c_i$ and $c_j$, this implies that $c_i = c_j$. But this is not true for $i \neq j$. Thus $df_x$ is full rank for all $x \in U_i \cap U_j$ such that $f(x) = 0$, so $0$ is a regular value of $f$.
\end{proof}

\end{appendix}

\end{document}